\numberwithin{equation}{section}
\newtheorem{theorem}{Theorem}[section]
\newtheorem{lemma}[theorem]{Lemma}
\newtheorem{assumption}{Assumption}[section]
\newtheorem{definition}[theorem]{Definition}
\newtheorem{proposition}[theorem]{Proposition}
\newtheorem{remark}[theorem]{Remark}
\DeclareMathOperator{\Tr}{Tr} 
\def\eqref#1{(\ref{#1})}
\newcommand{\dif}{{\mathrm{d}}}
\def\vzero{{\bm{0}}}
\def\vb{{\bm{b}}}
\def\vn{{\bm{n}}}
\def\vs{{\bm{s}}}
\def\vw{{\bm{w}}}
\def\vx{{\bm{x}}}
\def\vy{{\bm{y}}}
\def\vgamma{{\bm{\gamma}}}
\def\vphi{{\bm{\phi}}}
\def\mA{{\bm{A}}}
\def\mH{{\bm{H}}}
\def\mI{{\bm{I}}}
\def\mJ{{\bm{J}}}
\def\mK{{\bm{K}}}
\def\mP{{\bm{P}}}
\def\mLambda{{\bm{\Lambda}}}
\def\mSigma{{\bm{\Sigma}}}
\DeclareMathAlphabet{\mathsfit}{\encodingdefault}{\sfdefault}{m}{sl}
\SetMathAlphabet{\mathsfit}{bold}{\encodingdefault}{\sfdefault}{bx}{n}
\def\gA{{\mathcal{A}}}
\def\gN{{\mathcal{N}}}
\def\gT{{\mathcal{T}}}
\def\gW{{\mathcal{W}}}
\def\sI{{\mathbb{I}}}
\newcommand{\E}{\mathbb{E}}
\newcommand{\Ls}{\mathcal{L}}
\newcommand{\R}{\mathbb{R}}
\newcommand{\KL}{D_{\mathrm{KL}}}
\newcommand{\TV}{\mathrm{TV}}
\DeclareRobustCommand{\cev}[1]{%
  {\mathpalette\do@cev{#1}}%
}
\newcommand{\do@cev}[2]{%
  \vbox{\offinterlineskip
    \sbox\z@{$\m@th#1 x$}%
    \ialign{##\cr
      \hidewidth\reflectbox{$\m@th#1\vec{}\mkern4mu$}\hidewidth\cr
      \noalign{\kern-\ht\z@}
      $\m@th#1#2$\cr
    }%
  }%
}
\definecolor{lightgray}{gray}{0.95}
\definecolor{bestperformance}{RGB}{187, 214, 232}
\title{
Solving Inverse Problems via Diffusion-Based Priors:\\
An Approximation-Free Ensemble Sampling Approach}
\author{%
  Haoxuan Chen \\
  ICME\\
  Stanford University\\
  Stanford, CA 94305\\
  {\tt haoxuanc@stanford.edu}\\
  \And
  Yinuo Ren \\
  ICME\\
  Stanford University\\
  Stanford, CA 94305\\
  {\tt yinuoren@stanford.edu}\\
  \And
  Martin Renqiang Min \\
  Machine Learning Department\\
  NEC Labs America\\
  Princeton, NJ 08540\\
  {\tt renqiang@nec-labs.com}\\
  \AND
  Lexing Ying \\
  Department of Mathematics and ICME\\
  Stanford University\\
  Stanford, CA 94305\\
  {\tt lexing@stanford.edu}\\
  \And
  Zachary Izzo \\
  Machine Learning Department\\
  NEC Labs America \\
  Princeton, NJ 08540\\
  {\tt zach@nec-labs.com}
}
\begin{document}

\maketitle

\begin{abstract}
  Diffusion models (DMs) have proven to be effective in modeling high-dimensional distributions, leading to their widespread adoption for representing complex priors in Bayesian inverse problems (BIPs). However, current DM-based posterior sampling methods proposed for solving common BIPs rely on heuristic approximations to the generative process. To exploit the generative capability of DMs and avoid the usage of such approximations, we propose an ensemble-based algorithm that performs posterior sampling without the use of heuristic approximations. Our algorithm is motivated by existing works that combine DM-based methods with the sequential Monte Carlo (SMC) method. By examining how the prior evolves through the diffusion process encoded by the pre-trained score function, we derive a modified partial differential equation (PDE) governing the evolution of the corresponding posterior distribution. This PDE includes a modified diffusion term and a reweighting term, which can be simulated via stochastic weighted particle methods. Theoretically, we prove that the error between the true posterior distribution can be bounded in terms of the training error of the pre-trained score function and the number of particles in the ensemble. Empirically, we validate our algorithm on several inverse problems in imaging to show that our method gives more accurate reconstructions compared to existing DM-based methods. 
\end{abstract}

\section{Introduction}

Inverse problems are fundamentally challenging tasks that span multiple scientific and engineering fields like fluid dynamics~\cite{cotter2009bayesian,sellier2016inverse}, geophysics~\cite{richter2021inverse}, medical imaging~\cite{lustig2007sparse}, microscopy~\cite{choi2007tomographic, bertero2021introduction}, etc. These problems basically involve reconstructing an unknown parameter $x$ from incomplete and noise-corrupted measurements $y$. Due to the inherent limitations in measurements, there is often substantial uncertainty in determining the true parameter $x$. Instead of pursuing a single point estimate, a more principled approach involves adopting a Bayesian framework, where we specify a prior distribution on $x$ and characterize the uncertainty through posterior sampling of $p(x|y)$. However, these high-dimensional and multi-modal posterior distributions typically present significant computational challenges, with which traditional Markov chain Monte Carlo (MCMC) methods~\cite{neal2011mcmc, welling2011bayesian,cui2016dimension} often struggle, primarily due to metastability, \emph{i.e.}, the difficulty in transitioning between distinct high-probability modes that are separated by regions of low probability. 

To overcome these limitations, deep generative models have been proposed for encoding prior distributions, notably normalizing flows (NFs)~\cite{asim2020invertible,hou2019solving,zhang2021multiscale,whang2021composing,whang2021solving,hagemann2022stochastic} and generative adversarial networks (GANs)~\cite{patel2019bayesian,bora2017compressed}.
Recently, Diffusion models (DMs) and probability flow-based models~\cite{albergo2023stochastic, albergo2022building,lipman2022flow, liu2022flow, sohl2015deep, ho2020denoising,song2020denoising,song2021maximum, song2019generative, song2020score, zhang2018monge} have emerged as leading methods in modern generative modeling. These models generate samples from a high-dimensional target distribution $p_0$ by inverting a diffusion process that transforms the target distribution $\vx_0 \sim p_0$ into a simple distribution $\vx_T \sim p_T$ (typically Gaussian). The effectiveness of DMs has led to their adoption as prior distributions in inverse problems, spawning various DM-based posterior sampling methods~\cite{chung2022diffusion,song2023pseudoinverse,wu2023practical,cardoso2023monte,dou2024diffusion,sun2024provable,xu2024provably,wu2024principled,bruna2024provable}. For a comprehensive review, we refer the readers to either Appendix~\ref{app: related work review}  or~\cite{daras2024survey}. These methods can be categorized into two main approaches:

\begin{enumerate}[leftmargin=15pt,itemsep=0pt,topsep=0pt]
    \item Methods that leverage Bayes' formula to construct a conditional diffusion model using a pre-trained score function associated with the prior distribution: Specifically, for any time $t \in [0,T]$, applying Bayes' formula $p_{t}(\vx_t|\vy) \propto p_{t}(\vx_t)p_t(\vy|\vx_t)$ yields
    \begin{equation}
    \label{eqn: Bayes' formula of scores}
    \nabla_\vx\log p_{t}(\vx_t|\vy) = \nabla_\vx \log p_t(\vx_t) + \nabla_\vx \log p_t(\vy|\vx_t).    
    \end{equation}
    To implement this approach, one needs to evaluate the left-hand side of~\eqref{eqn: Bayes' formula of scores}, which is known as the conditional score function and defines a reverse-time diffusion process from $p_{T}(\vx_T|\vy)$ to $p_0(\vx_0|\vy)$. The first term on the right-hand side is the score function from the pre-trained DM modeling the prior distribution. The second term requires evaluating an integral $p_t(\vy | \vx_t) = \int p(\vy|\vx_0)p_{0|t}(\vx_0 | \vx_t)\dif \vx_0$ over all possible $\vx_0$'s that could lead to $\vx_t$ through the pre-trained DM, to address which methods in this category employ various approximations for $\nabla_x \log p_t(\vy|\vx_t)$. 
    
    Among different methods belonging to this approach, one group of methods~\cite{song2020score, choi2021ilvr, song2021solving, chung2022diffusion, song2023pseudoinverse, boys2023tweedie, wu2023practical} makes simplifying assumptions, while others~\cite{choi2021ilvr, wang2022zero, kawar2022denoising, rout2023solving} use empirically constructed updates without structured assumptions. These heuristic, problem-specific approximations might be inaccurate in certain scenarios. In particular, for the setting of linear inverse problems modeled by $\vy = \mA\vx + \vn$ with $\vy \in \mathbb{R}^m, \ \vx \in \mathbb{R}^n, \ \mA \in \mathbb{R}^{m \times n }$ and $\vn \sim \gN(\boldsymbol{0},\sigma^2\mI_m)$, examples of approximations to the term $\nabla_x \log p_t(\vy|\vx_t)$ used in existing work include:
    \begin{align}
    \nabla_\vx \log p_t(\vy|\vx_t) &\approx -\left(\mA^\intercal \mA\right)^{-1}\mA^\intercal\left(\vy - \mA\vx_t\right), \tag{ILVR~\cite{choi2021ilvr}} \\
    \nabla_\vx \log p_t(\vy|\vx_t) &\approx \left(\mI_n + \nabla_{\vx}^2\log p_t\left(\vx_t\right)\right)^\intercal \mA^\intercal \left(\vy - \mA\mathbb{E}\left[\vx_0  |  \vx_t\right]\right). \tag{DPS~\cite{chung2022diffusion}}
    \end{align}
    
    \item Approximation-free methods that integrate DMs with traditional posterior sampling methods: Examples include split Gibbs sampler (SGS) + DM methods~\cite{xu2024provably,wu2024principled,coeurdoux2024plug,zheng2025inversebench}, which are built upon the split Gibbs sampler for Bayesian inference~\cite{vono2019split,pereyra2023split}, and sequential Monte Carlo (SMC) + DM methods~\cite{wu2023practical,cardoso2023monte,dou2024diffusion,kelvinius2025solving,skreta2025feynman,lee2025debiasing,holderrieth2025leaps,achituve2025inverse}, which combine DMs with SMC~\cite{liu2001monte,chopin2002sequential,del2006sequential,doucet2009tutorial,del2013mean,moral2004feynman} to obtain asymptotically consistent posterior samples.

\end{enumerate}


We advance the second approach by introducing a novel ensemble-based \emph{Approximation-Free Diffusion Posterior Sampler (AFDPS)}. Our method enhances the synergy between DMs and SMC methods, which use weighted particle ensembles and strategic resampling to approximate the posterior distribution. The key innovation stems from our principled utilization of pre-trained DMs for prior evolution and our derivation of the exact partial differential equation (PDE) governing the corresponding posterior evolution, which reveals fundamentally distinct dynamics compared to existing approaches. Benefit from the flexibility of our framework, we propose two different approaches based on SDE and ODE+Corrector, respectively. Through careful analysis of the discrepancy between the derived PDE dynamics and the time-reversal of the true diffusion process, we establish error bounds for our posterior sampling algorithm and justify our weighted particle method. In practice, our algorithm demonstrates versatile compatibility with various pre-trained diffusion models, with extensive experimental validation on imaging inverse problems to confirm its effectiveness.

\paragraph{Our Contributions.} 

We summarize our main contributions as follows:
\begin{itemize}[leftmargin=1.5em, itemsep=0pt, topsep=0pt]
    \item We propose a novel ensemble-based posterior sampling method that integrates sequential Monte Carlo with diffusion models to achieve \textbf{exact posterior sampling without heuristic approximations}, founded on rigorously derived, previously unexplored, and more flexible PDE dynamics.
    \item We provide comprehensive theoretical guarantees demonstrating that our ensemble-based algorithm, implemented via stochastic weighted particle methods, \textbf{converges asymptotically to the derived PDE dynamics}. We additionally derive \textbf{precise error bounds} relating posterior sampling accuracy to the quality of the pre-trained score function.
    \item We demonstrate empirical validation across multiple imaging inverse problems using large-scale datasets including FFHQ-$256$~\cite{karras2019style} and ImageNet-$256$~\cite{deng2009imagenet}, showing \textbf{better performance in reconstruction} over existing methods.
\end{itemize}

\section{Preliminaries}
\label{sec: preliminaries}

In this section, we provide a quick overview of problem setup, basic concepts, and existing work related to solving Bayesian inverse problems (BIPs) with diffusion models. 

\subsection{Basics of Inverse Problems}
\label{subsec: Basics of Inverse Problems}

In BIPs, we aim to recover a ground truth parameter $\vx$ from measurements $\vy$. The relationship between $\vx$ and $\vy$ is described by:
\begin{equation}
\label{eqn: BIP setup}
\vy = \gA(\vx) + \vn,    
\end{equation}
where $\vx \in \mathbb{R}^n$, $\vy \in \mathbb{R}^m$, $\gA:\mathbb{R}^{n} \rightarrow \mathbb{R}^m$ is a differentiable forward operator (linear or nonlinear), and $\vn \in \mathbb{R}^m$ represents measurement noise. Under the Bayesian framework, the posterior distribution we seek to sample from is:
\begin{equation}
\label{eqn: Bayes formula}
p(\vx | \vy) \propto p_0(\vx) p(\vy | \vx) = p_0(\vx)\exp(-\mu_{\vy}(\vx)),    
\end{equation}
where $p_0(\vx)$ denotes the prior distribution and $\mu_{\vy}(\vx) = -\log p(\vy | \vx)$ is the negative log-likelihood function for a fixed observation $\vy$.

Many practical inverse problems are ill-posed due to measurement noise and non-injective forward models, making unique solutions impossible to obtain. Traditional optimization-based methods often fail to capture the complex solution landscape, motivating the use of Bayesian formulations where posterior sampling methods can systematically account for uncertainty and explore multiple plausible solutions. For a comprehensive treatment of BIPs, we refer readers to~\cite{stuart2010inverse}.

Deep generative models have emerged as powerful prior distributions that can capture complex solution spaces while remaining computationally tractable. Unlike traditional priors that rely on structural assumptions, these models effectively represent high-dimensional and multi-modal distributions given sufficient training data. In this work, we focus on diffusion models (DMs), which represent the current state-of-the-art in generative modeling with successful applications across physics~\cite{cotler2023renormalizing,habibi2024diffusion,zhu2024quantum}, chemistry~\cite{xu2022geodiff,alakhdar2024diffusion,riesel2024crystal}, biology~\cite{alamdari2023protein,watson2023novo}, computer vision~\cite{rombach2022high,chan2024tutorial}, and natural language processing~\cite{li2022diffusion}.

\subsection{Diffusion Models: the EDM Framework}
\label{subsec: EDM framework}

We adopt the Elucidating the design space of Diffusion Models (EDM) framework from~\cite{karras2022elucidating} to model prior distributions. The EDM framework provides a unified approach for the design of diffusion models by systematically analyzing noise schedules, sampling algorithms, and training objectives. 

Building on the continuous formulation of diffusion models~\cite{song2020score}, the framework starts off with a forward diffusion process governed by the stochastic differential equation (SDE):
\begin{equation}
\label{eqn: forward sde in standard dm}
\dif \vx_s = F(s)\vx_s\dif s + G(s)\dif \vw_s.
\end{equation}
where $(\vw_s)_{s\geq 0}$ is a standard Brownian motion and $p_s$ denotes the distribution of $\vx_s$, with $p_0$ being the prior distribution from~\eqref{eqn: Bayes formula}. Following~\cite{anderson1982reverse}, the corresponding reverse-time SDE is:
\begin{equation}
\label{eqn: backward sde in standard dm}
\dif \cev{\vx}_t = \left[-F(t)\cev{\vx}_t + \tfrac{G(t)^2 + V(t)^2}{2} \nabla_\vx \log \cev{p}_t(\cev{\vx}_t) \right]\dif t + V(t) \dif \vw_t,
\end{equation}
where $\cev{p}_0 = p_T$, $\cev{p}_T = p_0$, $\cev{\ast}_t$ denotes $\ast_{T-t}$, and $V:\mathbb{R} \rightarrow \mathbb{R}$ is a scalar-valued function. The score function $\nabla \log \cev{p}_t(\vx)$ is typically approximated by a neural network $\vphi_{\theta}(\vx,t)$ trained via score matching~\cite{hyvarinen2005estimation,vincent2011connection}. We use $\widehat{\cev{\vx}}_t$ and $\widehat{\cev{p}}_t$ to denote the particle trajectory and its distribution when using the approximated score function $\vphi_{\theta}(\vx,t)$, with $\widehat{\cev{p}}_0$ being exactly Gaussian and $\widehat{\cev{p}}_T$ approximating the target distribution $p_0$.

The EDM framework reparameterizes the drift coefficient $F(t)$ and diffusion coefficient $G(t)$ using
\begin{equation*}
s(t):=\exp\left(\int_{0}^{t}F(\xi)\dif\xi\right) \quad \text{and} \quad \sigma(t):= \sqrt{\int_{0}^{t}\frac{G(\xi)^2}{s(\xi)^2}\dif\xi},
\end{equation*}
yielding $F(t) = \frac{\dot{s}(t)}{s(t)}$ and $G(t) = s(t)\sqrt{2\dot{\sigma}(t)\sigma(t)}$. This reparameterization enables more accurate score estimation under appropriate choices of $s$ and $\sigma$, as demonstrated empirically in~\cite{karras2022elucidating} and theoretically in~\cite{wang2024evaluating}. Also, the framework allows for different implementations based on the choice of diffusion coefficient $V$. Setting $V(t)=G(t) = s(t)\sqrt{2\dot{\sigma}(t)\sigma(t)}$ yields the SDE implementation:
\begin{equation}
\label{eqn: EDM SDE}
\dif \widehat{\cev{\vx}}_t = \left[-\tfrac{\dot{s}(t)}{s(t)}\widehat{\cev{\vx}}_t + 2s(t)^2\dot{\sigma}(t)\sigma(t)\vphi_\theta(\widehat{\cev{\vx}}_t,t)\right]\dif t + s(t)\sqrt{2\dot{\sigma}(t)\sigma(t)}\dif \vw_t.    
\end{equation}
Alternatively, setting $V(t) =0$ yields the probability-flow ODE (PF-ODE) implementation:
\begin{equation}
\label{eqn: EDM PF-ODE}
\dif \widehat{\cev{\vx}}_t = \left[-\tfrac{\dot{s}(t)}{s(t)}\widehat{\cev{\vx}}_t + s(t)^2\dot{\sigma}(t)\sigma(t)\vphi_\theta(\widehat{\cev{\vx}}_t,t)\right]\dif t. 
\end{equation}



\section{Methodology}
\label{sec: main result - algorithms}

In this section, we present the key derivation underlying our posterior sampling algorithm. Our approach can be interpreted as solving a high-dimensional PDE that governs posterior distribution evolution using either the (stochastic) weighted particle method~\cite{degond1989weighted,degond1990deterministic,rjasanow1996stochastic,bossy1997stochastic,talay2003stochastic,raviart2006analysis,chertock2017practical} or the SMC method~\cite{chopin2002sequential,del2006sequential,doucet2009tutorial,del2013mean,moral2004feynman}. Throughout the derivation, we assume the log-likelihood function $\mu_{\vy}(\vx)$ is at least twice differentiable w.r.t. $\vx$ for fixed $\vy$. Details of both algorithmic variants are given in the pseudocode in subsection~\ref{subsection: algorithm details}.

\begin{wrapfigure}{r}{0.4\textwidth}
    \vspace{-1.5em}
    \centering
    \begin{tikzpicture}
        \definecolor{seabornBlue}{RGB}{31,119,180}

        \node (p0) at (0.5,1.3) {$\widehat{\cev{p}}_0$};
        \node (pT) at (3.2,1.3) {$\widehat{\cev{p}}_T$};
        \node (Q0) at (0.5,0) {$\widehat{Q}_{\vy}(\vx, 0)$};
        \node (QT) at (3.2,0) {$\widehat{Q}_{\vy}(\vx, T)$};
        \node[seabornBlue] (q0) at (0.5,-1.3) {$\widehat{q}_{\vy}(\vx, 0)$};
        \node[seabornBlue] (qT) at (3.2,-1.3) {$\widehat{q}_{\vy}(\vx, T)$};

        \begin{scope}[on background layer]
            \fill[gray!10] (-0.8,1.8) rectangle (4.5,0.9);
            \fill[seabornBlue!15] (-0.8,0.9) rectangle (4.5,-1.8);
        \end{scope}
        
        \draw[dashed] (-0.8,0.9) -- (4.5,0.9);

        \node[gray!50] at (1.75,1.3) {\large \textbf{Prior}};
        \node[seabornBlue!50] at (1.75,-0.45) {\large \textbf{Posterior}};

        \draw[->] (p0) -- node[above] {\eqref{eqn: PDE of prior diffusion}} (pT);
        \draw[->] (p0) -- node[left] {$e^{-\mu_\vy}$} (Q0);
        \draw[->] (pT) -- node[right] {$e^{-\mu_\vy}$} (QT);
        \draw[->] (Q0) -- node[above] {\eqref{eqn: PDE of unnormalized posterior diffusion}} (QT);
        \draw[->] (Q0) -- node[left] {$\widehat Z_t^{-1}$} (q0);
        \draw[->] (QT) -- node[right] {$\widehat Z_t^{-1}$} (qT);
        \draw[->, seabornBlue] (q0) -- node[above] {\eqref{eqn: PDE of posterior diffusion}} node[below, seabornBlue] {II} (qT);
        \draw[seabornBlue] (q0) +(-0.7,-0.3) rectangle +(0.7,0.3);
        \node[seabornBlue, left of=q0] {I};
    \end{tikzpicture}
    \caption{A roadmap for our posterior sampling method. I, II refers to the two stages of the proposed algorithm.}
    \vspace{-2.5em}
    \label{fig:commutative diagram for algo derivation}
\end{wrapfigure}
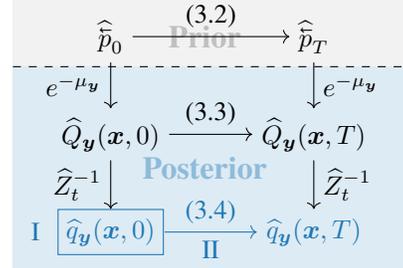

\subsection{Algorithm Outline}
\label{subsec: algo outline}

Following the setting in Section~\ref{sec: preliminaries}, we assume the prior distribution $p(\vx)$ is represented by a DM under the EDM framework. Specifically, $p_0(\vx)$ is approximated by $\widehat{\cev{p}}_T(\vx)$, obtained by simulating~\eqref{eqn: EDM SDE} or~\eqref{eqn: EDM PF-ODE} from a Gaussian $\widehat{\cev{p}}_0$. We define the time-dependent posterior distribution as:
\begin{equation}
    \setlength{\abovedisplayskip}{3pt}
    \setlength{\belowdisplayskip}{3pt}
    \label{eqn: defn of time-dependent posterior}
    \widehat{q}_{\vy}(\vx,t) := \frac{\widehat{\cev{p}}_t(\vx)e^{-\mu_{\vy}(\vx)}}{\int_{\mathbb{R}^n}\widehat{\cev{p}}_t(\vx)e^{-\mu_{\vy}(\vx)}\dif\vx} := \frac{\widehat{Q}_{\vy}(\vx,t)}{\widehat{Z}_{\vy}(t)},
\end{equation} 
where $\widehat{Q}_{\vy}(\vx,t)= \widehat{\cev{p}}_t(\vx)e^{-\mu_{\vy}(\vx)}$ is the unnormalized posterior, and $\widehat{Z}_{\vy}(t) = \int_{\mathbb{R}^n}\widehat{Q}_{\vy}(\vx,t)\dif\vx$ is the normalizing constant. 

Our algorithm consists of the following two stages:

\paragraph{Stage I: Sample from the initial distribution $\widehat{q}_{\vy}(\vx,0)$.}

We first sample from $\widehat{q}_{\vy}(\vx,0) \propto \widehat{\cev{p}}_0(\vx)e^{-\mu_{\vy}(\vx)}$, which is analogous to the likelihood step in~\cite{wu2024principled}. Given differentiable $\mu_{\vy}(\vx)$, we can employ well-known gradient-based samplers like Metropolis Adjusted Langevin Algorithm (MALA)~\cite{roberts2002langevin}, Annealed 
Importance Sampling (AIS)~\cite{neal2001annealed}, or more advanced methods~\cite{lu2019accelerating,tan2023accelerate,chen2024ensemble,lindsey2022ensemble}. For linear BIPs with Gaussian noise, where $\gA:= \mA \in \mathbb{R}^{m \times n}$ and $\vn \sim \gN(\boldsymbol{0},\mSigma)$, assuming $\widehat{\cev{p}}_0 = \gN(\vzero, \rho^2\mI_n)$, the initial distribution simplifies to:
\begin{equation*}
    \setlength{\abovedisplayskip}{3pt}
    \setlength{\belowdisplayskip}{3pt}
    \widehat{q}_{\vy}(\vx,0) \propto \exp\left(-(\vy - \mA\vx)^\intercal\mSigma^{-1}(\vy -\mA\vx)-\tfrac{1}{2\rho^2}\|\vx\|_2^2\right) = \gN(\vgamma, \mLambda^{-1}),   
\end{equation*}
where $\mLambda = \mA^\intercal\mSigma^{-1}\mA+\frac{1}{\rho^2}\mI_n$ and $\vgamma = \mLambda^{-1}\mA^\intercal\mSigma^{-1}\vy$. 

\paragraph{Stage II: Solve the PDE dynamics governing the posterior evolution.}

Below we first derive the PDE dynamics $(\widehat{Q}_{\vy}(\vx,t))_{t\in[0, T]}$ based on the diffusion process~\eqref{eqn: backward sde in standard dm} from $(\widehat{\cev{p}}_t)_{t\in[0, T]}$. Then normalizing these dynamics yields the PDE that evolves $(\widehat{q}_{\vy}(\vx,t))_{t\in[0, T]}$, as illustrated in Figure~\ref{fig:commutative diagram for algo derivation}. 

\begin{enumerate}[leftmargin=*, topsep=0pt, itemsep=2pt, wide]
    \item The Fokker-Planck equation evolving from $\widehat{\cev{p}}_0$ to $\widehat{\cev{p}}_T$ is:
    \begin{equation}
        \setlength{\abovedisplayskip}{3pt}
        \setlength{\belowdisplayskip}{3pt}
        \label{eqn: PDE of prior diffusion}
        \frac{\partial}{\partial t}\widehat{\cev{p}}_t = -\nabla_{\vx} \cdot \left(\left(-F(t)\vx + \tfrac{G(t)^2 + V(t)^2}{2} \vphi_\theta(\vx,t)\right)\widehat{\cev{p}}_t\right) + \tfrac{1}{2}V(t)^2\Delta_{\vx}\widehat{\cev{p}}_t.
    \end{equation}
    
    \item Substituting $\widehat{\cev{p}}_t(\vx) = \widehat{Q}_{\vy}(\vx,t)\exp(\mu_{\vy})$ into \eqref{eqn: PDE of prior diffusion} yields:
    \begin{equation}
        \setlength{\abovedisplayskip}{3pt}
        \setlength{\belowdisplayskip}{3pt}
        \label{eqn: PDE of unnormalized posterior diffusion}
        \begin{aligned}
        \frac{\partial}{\partial t}\widehat{Q}_{\vy} = &-\nabla_{\vx} \cdot \left(\left(\widehat{\mH}(\vx,t)-V(t)^2\nabla_{\vx}\mu_{\vy}\right)\widehat{Q}_{\vy}\right)
        +\tfrac{1}{2}V(t)^2\Delta_{x}\widehat{Q}_{\vy}\\
        &+\left(\tfrac{1}{2}V(t)^2\left(\|\nabla_{\vx}\mu_{\vy}\|_2^2-\Delta_{\vx}\mu_{\vy}\right) - \widehat{\mH}(\vx,t)^\intercal\nabla_{\vx}\mu_{\vy}\right)\widehat{Q}_{\vy},
        \end{aligned}    
    \end{equation}
    where $\widehat{\mH}(\vx,t) := -F(t)\vx + \frac{G(t)^2 + V(t)^2}{2} \vphi_\theta(\vx,t)$ is the original drift. A complete derivation of~\eqref{eqn: PDE of unnormalized posterior diffusion} is postponed to Lemma~\ref{lem: simplifying dynamics} in Appendix~\ref{sec: derivations for Sec 3}.
    
    \item Defining $U(\vx, t) := \frac{1}{2}V(t)^2\left(\|\nabla_{\vx}\mu_{\vy}\|_2^2-\Delta_{\vx}\mu_{\vy}\right)$, we have the following PDE for $\widehat{q}_{\vy}(\vx,t)$:
    \begin{tcolorbox}[colback=lightgray,boxrule=0pt,sharp corners]
        \underline{\bfseries PDE Dynamics for Posterior Evolution}
        \begin{equation}
            \setlength{\abovedisplayskip}{3pt}
            \setlength{\belowdisplayskip}{3pt}
            \label{eqn: PDE of posterior diffusion}
            \begin{aligned}
            \frac{\partial}{\partial t}&\widehat{q}_{\vy} = -\nabla_{\vx} \cdot \left(\left(\widehat{\mH}(\vx,t)-V(t)^2\nabla_{\vx}\mu_{\vy}\right)\widehat{q}_{\vy}\right)
            +\tfrac{1}{2}V(t)^2\Delta_{x}\widehat{q}_{\vy}\\
            &+\left(U(\vx,t) - \widehat{\mH}(\vx,t)^\intercal\nabla_{\vx}\mu_{\vy} - \int_{\mathbb{R}^n}\left(U(\vx,t) - \widehat{\mH}(\vx,t)^\intercal\nabla_{\vx}\mu_{\vy}\right)\widehat{q}_{\vy}\dif \vx\right)\widehat{q}_{\vy}.
            \end{aligned}
        \end{equation}
    \end{tcolorbox}
    This derivation involves averaging the linear term in~\eqref{eqn: PDE of unnormalized posterior diffusion}, a technique used in recent works~\cite{skreta2025feynman,lee2025debiasing,holderrieth2025leaps}. For a complete proof one may refer to Lemma~\ref{lem: normalizing linear term in dynamics} in Appendix~\ref{sec: derivations for Sec 3}.
\end{enumerate}


\subsection{Posterior Sampling via Weighted Particles}
\label{subsection: algorithm details}

We now present two ensemble-based posterior samplers within the SMC framework, which can also be interpreted as solving the PDE~\eqref{eqn: PDE of posterior diffusion} numerically via (stochastic) weighted particles.

\paragraph{(Stochastic) Weighted Particle / Sequential Monte Carlo Methods.}

As shown in Lemma~\ref{lem: derivation of single particle dynamics} of Appendix~\ref{sec: derivations for Sec 3}, the posterior evolution~\eqref{eqn: PDE of posterior diffusion} can be simulated via the following dynamics of a single weighted particle  $(\vx_t,\beta_t)$:
\begin{equation}
    \setlength{\abovedisplayskip}{3pt}
    \setlength{\belowdisplayskip}{3pt}
    \label{eqn: PDE soln via single particle} 
    \begin{cases}
    \dif \vx_t &= \left(\widehat{\mH}(\vx_t,t)-V(t)^2\nabla_{\vx}\mu_{\vy}(\vx_t)\right)\dif t + V(t)\dif \vw_t,\\
    \dif \beta_t &= \left(U(\vx_t,t) - \widehat{\mH}(\vx_t,t)^\intercal\nabla_{\vx}\mu_{\vy}(\vx_t)\right)\beta_t \dif t \\
    &- \left(\displaystyle\int_{\mathbb{R}^n}\left(U(\vx,t) - \widehat{\mH}(\vx,t)^\intercal\nabla_{\vx}\mu_{\vy}(\vx)\right)\left(P_{\beta}\gamma_t\right)(\vx)\dif \vx\right)\beta_t \dif t,    
    \end{cases}
\end{equation}
where $\gamma_t(\vx,\beta)$ denotes the joint probability distribution of $(\vx_t,\beta_t)$ and $P_{\beta}\gamma_t(\vx) := \int_{\mathbb{R}}\beta\gamma_t(\vx,\beta)\dif \beta$ denotes the weighted projection of $\gamma_t$ onto $\vx$. To effectively approximate the integral in $P_\beta \gamma_t$, we then use the empirical measure $\gamma_t(\vx,\beta) \approx \frac{1}{N}\sum_{i=1}^{N}\delta_{(\vx^{(i)}_t,\beta^{(i)}_t)}$ formed by $N$ weighted particles to approximate $\gamma_t(\vx,\beta)$. This leads to the following joint dynamics for $\{(\vx^{(i)}_t, \beta^{(i)}_t)\}_{i=1}^{N}$:
\begin{tcolorbox}[colback=lightgray,boxrule=0pt,sharp corners]
    \underline{\bfseries Weighted Particle Dynamics for Posterior Evolution}
    \begin{equation}
    \label{eqn: PDE soln via ensemble of particles} 
    \begin{cases}
    \dif \vx^{(i)}_t &= \left(\widehat{\mH}(\vx^{(i)}_t,t)-V(t)^2\nabla_{\vx}\mu_{\vy}(\vx^{(i)}_t)\right)\dif t + V(t)\dif \vw^{(i)}_t,\\
    \dif \beta^{(i)}_t &= \left(U(\vx^{(i)}_t,t) - \widehat{\mH}(\vx^{(i)}_t,t)^\intercal\nabla_{\vx}\mu_{\vy}(\vx^{(i)}_t)\right)\beta^{(i)}_t \dif t \\
    &- \left(\frac{1}{N}\sum_{j=1}^{N}\left(U(\vx^{(j)}_t,t) - \widehat{\mH}(\vx^{(j)}_t,t)^\intercal\nabla_{\vx}\mu_{\vy}(\vx^{(j)}_t)\right)\beta^{(j)}_t\right)\beta^{(i)}_t \dif t,  
    \end{cases}
    \end{equation}
\end{tcolorbox}
with initial conditions $\vx^{(i)}_0 \sim \widehat{q}_{\vy}(\cdot,0)$ and $\beta^{(i)}_0 = 1$, for $i \in [N]$. The weighted projection equals $\frac{1}{N}\beta^{(j)}_t$ when $\vx = \vx^{(j)}_t$ for some $j$, and zero otherwise.

\IncMargin{1.5em}
    \begin{algorithm}[t]
        \caption{Resampling Step}
        \label{alg:resampling}
        \Indm
        \KwIn{Threshold $c \in (0,1)$, weighted particles $\{(\vx^{(j)}, \beta^{(j)})\}_{j=1}^{N}$}
        \KwOut{Updated particles $\{(\widehat{\vx}^{(j)}, \widehat{\beta}^{(j)})\}_{j=1}^{N}$}
        \Indp
        \uIf{$\text{ESS} = \frac{\left(N^{-1}\sum_{j=1}^{N}\beta^{(j)}\right)^2}{N^{-1}\sum_{j=1}^{N}(\beta^{(j)})^2}< c$}{
        Sample $\{\widehat{\vx}^{(j)}\}_{j=1}^{N}$ with replacement from $\{\vx^{(j)}\}_{j=1}^{N}$ with probability \small$\left\{\frac{\beta^{(j)}}{\sum_{i=1}^{N}\beta^{(i)}}\right\}_{j=1}^N$\;
        $\widehat{\beta}^{(j)} \leftarrow 1$, for $j \in [N]$\;
        }
        \Else{
        $\{(\widehat{\vx}^{(j)}, \widehat{\beta}^{(j)})\}_{j=1}^{N} \leftarrow \{(\vx^{(j)}, \beta^{(j)})\}_{j=1}^{N}$\;
        }
    \end{algorithm}

While numerical discretization of~\eqref{eqn: PDE soln via ensemble of particles} yields a prototypical sampling algorithm, the particle weights $\beta^{(i)}_t$ may diverge during simulation, reducing the ensemble's Effective Sample Size (ESS). To address this, we employ a resampling strategy commonly used in the SMC methods~\cite{liu2001monte,chopin2002sequential,del2006sequential,doucet2009tutorial,del2013mean,moral2004feynman}, whose detailed description is provided in Algorithm~\ref{alg:resampling}. Such resampling sub-routine essentially performs global moves by eliminating low-weight particles and duplicating high-weight ones, similar to the birth-death process used in~\cite{moral2004feynman,lu2019accelerating,tan2023accelerate,chen2024ensemble,lindsey2022ensemble}. However, the resampling approach is computationally more efficient as the weight dynamics~\eqref{eqn: PDE soln via ensemble of particles} can be parallelized.

\paragraph{SDE Approach (AFDPS-SDE).}

We first consider the SDE implementation~\eqref{eqn: EDM SDE} of the diffusion model, where $V(t) = G(t) = s(t)\sqrt{2\dot{\sigma}(t)\sigma(t)}$. We directly discretize~\eqref{eqn: PDE soln via ensemble of particles} with an Euler-Maruyama scheme and add Algorithm~\ref{alg:resampling} as an adjustment step at the end of each iteration, which leads to Algorithm~\ref{alg:sde}. We have omitted the averaging term, \emph{i.e.}, the last line of~\eqref{eqn: PDE soln via ensemble of particles} in Algorithm~\ref{alg:sde}, in practical implementation since the update is the same for all particles and therefore cancels out when we normalize the weights. Such cancellation property also holds for the ODE approach presented below. For high-dimensional problems, we can further reduce the computational cost of both the SDE and the ODE approach via practical techniques like using a smaller ensemble, omitting the resampling step, and simply returning the particle with the highest weight as the best estimator, as discussed in Appendix~\ref{sec: implementation detail app}.  


\setlength{\textfloatsep}{10pt} 
\begin{algorithm}[!t]
    \caption{Approximation-Free Diffusion Posterior Sampler via SDE (AFDPS-SDE)}
    \label{alg:sde}
    \Indm
    \KwIn{Noisy observation $\vy$, log-likelihood $\mu_{\vy}(\cdot)$, functions $s(t)$ and $\sigma(t)$, time grid $\{t_i\}_{i=0}^{K}$ with $t_0 = 0$ and $t_K = T$, thresholds $\{c_l\}_{l=1}^{K}$, score function $\vphi_\theta(\cdot,t)$, ensemble size $N$, initial weights $\beta^{(j)}_0 = 1$ for $j\in[N]$.}
    \KwOut{Posterior approximation $\sum_{j=1}^{N}\beta^{(j)}_T \delta_{\vx^{(j)}_T}/\sum_{j=1}^{N}\beta^{(j)}_T$.}
    \Indp
    Draw $\{\vx^{(i)}_{0}\}_{i=1}^{N}$ i.i.d. from $\widehat{q}_{\vy}(\cdot,0)$ via Stage I samplers in Section~\ref{subsec: algo outline}\;
    \For{$k = 0$ \KwTo $K-1$}{
    Draw $\{\xi^{(j)}_{k}\}_{j=1}^{N}$ i.i.d. from $\gN(\boldsymbol{0},\mI_n)$\;
        \For{$j = 1$ \KwTo $N$}{\small
        $\begin{aligned}
            \widehat{\vx}^{(j)}_{t_{k+1}} &\leftarrow \left(1-(t_{k+1}-t_k)\tfrac{\dot{s}(t_k)}{s(t_k)}\right)\vx^{(j)}_{t_{k}} + s(t_k)\sqrt{2\dot{\sigma}(t_k)\sigma(t_k)(t_{k+1}-t_k)}\xi^{(j)}_{k}\\ 
            &+ 2(t_{k+1}-t_k)s(t_k)^2\dot{\sigma}(t_k)\sigma(t_k)\left(\vphi_{\theta}(\vx^{(j)}_{t_k},t_k) -\nabla_{\vx}\mu_{\vy}(\vx^{(j)}_{t_k})\right);
        \end{aligned}$

        $\begin{aligned}
            \log\widehat{\beta}^{(j)}_{t_{k+1}} &\leftarrow \log\beta^{(j)}_{t_{k+1}} +(t_{k+1}-t_k)\tfrac{\dot{s}(t_k)}{s(t_k)}\nabla_{\vx}\mu_{\vy}(\vx^{(j)}_{t_k})^\intercal \vx^{(j)}_{t_k}\\
            &-2(t_{k+1}-t_k)s(t_k)^2\dot{\sigma}(t_k)\sigma(t_k)\nabla_{\vx}\mu_{\vy}(\vx^{(j)}_{t_k})^\intercal\vphi_{\theta}(\vx^{(j)}_{t_k},t_k)\\
            &+ (t_{k+1} - t_k)s(t_k)^2\dot{\sigma}(t_k)\sigma(t_k)\left(\|\nabla_{\vx}\mu_{\vy}(\vx^{(j)}_{t_k})\|_2^2-\Delta_{\vx}\mu_{\vy}(\vx^{(j)}_{t_k})\right);
        \end{aligned}$
        }
    {\small$\{(\vx^{(j)}_{t_{k+1}},\beta^{(j)}_{t_{k+1}})\}_{j=1}^{N} \leftarrow$ Algorithm~\ref{alg:resampling}$\left(c_{k+1},\{(\widehat{\vx}^{(j)}_{t_{k+1}},\widehat{\beta}^{(j)}_{t_{k+1}})\}_{j=1}^{N}\right)$\;}
    }
\end{algorithm}

\paragraph{ODE+Corrector Approach (AFDPS-ODE).}

    \begin{algorithm}[t]
        \caption{Corrector Step}
        \label{alg:corrector}
        \Indm
        \KwIn{Initialization $\widehat{x}_0$, time $t$, iterations $L$, stepsize $h$, log-likelihood $\mu_{\vy}(\cdot)$, score function $\vphi_{\theta}(\cdot)$.}
        \KwOut{Sample $\widehat{x}_{L} \sim \widehat{q}_{\vy}(\vx,t) \propto \widehat{\cev{p}}_t(\vx)\exp(-\mu_{\vy}(\vx))$.}
        \Indp
        Draw $\left\{\xi_{l}\right\}_{l=1}^{L}$ i.i.d. from $\gN(\boldsymbol{0},\mI_n)$\;
        \For{$l = 0$ \KwTo $L-1$}{
            $\widehat{\vx}_{l+1} \leftarrow \widehat{\vx}_{l} + h\left(\vphi_{\theta}(\widehat{\vx}_l,t)-\nabla_{\vx}\mu_{\vy}(\widehat{\vx}_{l})\right) + \sqrt{2h}\xi_{l+1};$
        }
    \end{algorithm}

Next, we consider an alternative implementation based on the probability flow ODE~\eqref{eqn: EDM PF-ODE} by setting $V(t)=0$. While this leads to the ODE dynamics~\eqref{eqn: PDE soln via ensemble of particles}, relying solely on deterministic evolution may not sufficiently explore the target distribution. To enhance exploration, we incorporate a stochastic corrector step inspired by predictor-corrector schemes in diffusion models~\cite{song2020score,chen2024probability,bradley2024classifier}. The corrector uses the Unadjusted Langevin Algorithm (ULA, Algorithm~\ref{alg:corrector}) to draw samples from the intermediate posterior distribution $\widehat{q}_{\vy}(\vx,t) \propto \widehat{\cev{p}}_t(\vx)\exp(-\mu_{\vy}(\vx))$ at each timestep. The complete ODE+Corrector algorithm (Algorithm~\ref{alg:edm pf-ode + corrector}) is thus obtained by discretizing the probability flow ODE~\eqref{eqn: PDE soln via ensemble of particles}, and applying both resampling (Algorithm~\ref{alg:resampling}) and ULA correction (Algorithm~\ref{alg:corrector}) steps for adjustments.

\begin{remark}[Connection with Feynman-Kac corrector~\cite{skreta2025feynman} and Guidance~\cite{dhariwal2021diffusion,bradley2024classifier}]

    A recent work~\cite{skreta2025feynman} proposed a related ensemble-based sampler within the SMC framework. However, the dynamics derived in our setting differ from those in Proposition~D.5 of~\cite{skreta2025feynman}, which is essentially the ODE case without correctors in our method. The key difference is the presence of a gradient term, $\nabla_{\vx}\mu_{\vy}$, in the dynamics of $\vx_t$~\eqref{eqn: PDE soln via single particle}, which is absent in their formulation. Such component, previously used in SGS + DM methods~\cite{xu2024provably,wu2024principled} and in optimization-based denoising algorithms such as ADMM~\cite{gabay1976dual,wang2008new,boyd2011distributed,sun2016deep,chan2016plug,ryu2019plug} and FISTA~\cite{beck2009fast,zhang2018ista,xiang2021fista}, is incorporated into our DM-based framework in a systematic way. The derivation illustrated in Figure~\ref{fig:commutative diagram for algo derivation} is shown to be essential for the method's empirical performance (\emph{cf.}~Section~\ref{sec: results of numerics}). A detailed comparison is given in Remark~\ref{rmk: difference between our dynamics and FK corrector} of Appendix~\ref{sec: derivations for Sec 3}.

    In contrast to prior work on guided diffusion sampling~\cite{dhariwal2021diffusion,bradley2024classifier,wu2024theoretical,chidambaram2024does} and its extensions~\cite{ho2022classifier,bansal2023universal,song2023loss,he2023manifold,guo2024gradient,lu2024guidance,zheng2024ensemble,ye2024tfg}, which augment single-particle dynamics with a gradient term such as $\nabla_\vx\log p_{t}(\vy|\vx_t)$ or $\nabla_\vx\log p_{t}(\vx_t|\vy)$, our PDE-based derivation naturally yields the gradient term $\nabla_{\vx}\mu_{\vy}$ within a principled framework. Additionally, our formulation introduces a linear term that must be simulated via an ensemble of weighted particles, rather than from a single trajectory. Such ensemble-based structure allows us to integrate gradient-based guidance and diffusion sampling under the SMC framework in a unified way, resulting in improved empirical performance.
\end{remark}

\begin{algorithm}[!t]
    \caption{Approx.-Free Diffusion Posterior Sampler via ODE+Corrector (AFDPS-ODE)}
    \label{alg:edm pf-ode + corrector}
    \Indm
    \KwIn{Noisy observation $\vy$, log-likelihood $\mu_{\vy}(\cdot)$, functions $s(t)$ and $\sigma(t)$, time grid $\{t_i\}_{i=0}^{K}$ with $t_0 = 0$ and $t_K = T$, thresholds $\{c_l\}_{l=1}^{K}$, score function $\vphi_\theta(\cdot,t)$, corrector iterations $n_c$, stepsize $h_c$, ensemble size $N$, initial weights $\beta^{(j)}_0 = 1$ for $j\in[N]$.}
    \KwOut{Posterior approximation $\sum_{j=1}^{N}\beta^{(j)}_T\delta_{\vx^{(j)}_T}/\sum_{j=1}^{N}\beta^{(j)}_T$.}
    \Indp
    Draw $\{\vx^{(i)}_{0}\}_{i=1}^{N}$ i.i.d. from $\widehat{q}_{\vy}(\cdot,0)$ via Stage I samplers in Section~\ref{subsec: algo outline}\;
    \For{$k = 0$ \KwTo $K-1$}{
        \For{$j = 1$ \KwTo $N$}{
            \small$\widetilde{\vx}^{(j)}_{t_{k+1}} \leftarrow \left(1-(t_{k+1}-t_k)\frac{\dot{s}(t_k)}{s(t_k)}\right)\vx^{(j)}_{t_{k}} + (t_{k+1}-t_k)s(t_k)^2\dot{\sigma}(t_k)\sigma(t_k)\vphi_{\theta}(\vx^{(j)}_{t_k},t_k)$\;
            $\widehat{\vx}^{(j)}_{t_{k+1}} \leftarrow \text{Algorithm}~\ref{alg:corrector}\left(\widetilde{\vx}^{(j)}_{t_{k+1}},t_{k+1},n_c,h_c, \mu_{\vy}(\cdot),\vphi_\theta(\cdot,t)\right)$\;
            $\begin{aligned}
                \log\widehat{\beta}^{(j)}_{t_{k+1}} \leftarrow \log\beta^{(j)}_{t_{k+1}} &+(t_{k+1}-t_k)\tfrac{\dot{s}(t_k)}{s(t_k)}\nabla_{\vx}\mu_{\vy}\left(\vx^{(j)}_{t_k}\right)^\intercal \vx^{(j)}_{t_k}\\
                &-(t_{k+1}-t_k)s(t_k)^2\dot{\sigma}(t_k)\sigma(t_k)\nabla_{\vx}\mu_{\vy}\left(\vx^{(j)}_{t_k}\right)^\intercal\vphi_{\theta}\left(\vx^{(j)}_{t_k},t_k\right);
            \end{aligned}$
        }
        {\small$\{(\vx^{(j)}_{t_{k+1}},\beta^{(j)}_{t_{k+1}})\}_{j=1}^{N} \leftarrow$ Algorithm~\ref{alg:resampling}$\left(c_{k+1},\{(\widehat{\vx}^{(j)}_{t_{k+1}},\widehat{\beta}^{(j)}_{t_{k+1}})\}_{j=1}^{N}\right)$\;}
    }
\end{algorithm}

\section{Theoretical Analysis}
\label{sec: main result - theory}

In this section, we present our theoretical results of the ensemble-based posterior samplers introduced in Section~\ref{sec: main result - algorithms}. Our analysis is conducted in continuous time, based on the weighted particle dynamics~\eqref{eqn: PDE of posterior diffusion} and~\eqref{eqn: PDE soln via ensemble of particles}. The impact of numerical discretization, as implemented in Algorithm~\ref{alg:sde} and Algorithm~\ref{alg:edm pf-ode + corrector}, is not considered here and is left for future work. Without loss of generality, we focus on the backward SDE setting~\eqref{eqn: EDM SDE}, specifically using $s(t)=1$ and $\sigma(t)=t$. We begin by introducing several technical assumptions.

\begin{assumption}[Regularity of the log-likelihood]
\label{assump: regularity and boundedness of log-likelihood}
The log-likelihood function $\mu_{\vy}$ is twice differentiable and lower bounded by some constant $C^{(1)}_{\vy}$  depending only on the observation $\vy$.     
\end{assumption}

\begin{assumption}[Bounded second moment]
\label{assump: bounded second moment}
The prior distribution $p_0$ satisfies a second-moment bound: $\mathbb{E}{p_0}\left[\|\vx\|_{2}^2\right] \leq m_2^2$.
\end{assumption}

\begin{assumption}[Score matching error]
\label{assump: score matching training error}
The neural network estimator $\vphi_{\theta}(\vx,t)$ approximates the score function $\nabla_{\vx}\log \cev{p}_t(\vx)$ with uniformly bounded error across $t \in [0,T]$:
$$
    \int_{\mathbb{R}^n}\left\|\vphi_{\theta}(\vx,t)-\nabla_{\vx}\log \cev{p}_{t}(\vx)\right\|_{2}^2\cev{p}_t(\vx)\dif \vx  \leq \epsilon_{\vs}^2.
$$
\end{assumption}

Assumption~\ref{assump: regularity and boundedness of log-likelihood} ensures the absence of singularities in the log-likelihood $\mu_{\vy}$, which is a condition  adopted in existing work on BIPs~\cite{stuart2010inverse} and satisfied by common noise models such as Gaussian and Poisson (with $C^{(1)}_{\vy} = 0$). Assumptions~\ref{assump: bounded second moment} and~\ref{assump: score matching training error} are aligned with recent theoretical frameworks for diffusion models~\cite{wang2024evaluating,chen2022sampling,chen2023improved,benton2023linear,chen2024probability}. Particularly, Assumption~\ref{assump: score matching training error} quantifies the approximation error due to neural network training and reflects the quality of the pre-trained score function.

We now present our first main result, which quantifies the discrepancy between the true posterior $q_{\vy,0}$ and the distribution $\widehat{q}_{\vy,T}$ obtained by evolving our derived PDE dynamics~\eqref{eqn: PDE of posterior diffusion} for time $T$.

\begin{tcolorbox}[colback=lightgray,boxrule=0pt,sharp corners]
\begin{theorem}[Error Bound for Posterior Estimation]
\label{thm: bound on true and approx posterior}
Under Assumptions~\ref{assump: regularity and boundedness of log-likelihood}, \ref{assump: bounded second moment}, and \ref{assump: score matching training error}, the total variation (TV) distance between the approximate and true posterior satisfies:
$$
    \TV(\widehat{q}_{\vy,T},q_{\vy,0}) \leq C^{(2)}_{\vy}\sqrt{\frac{m_2^2}{T^2} + T^2\epsilon_{\vs}^2}
$$
where $q_{\vy,0}(\vx) \propto p_0(\vx)\exp(-\mu_{\vy}(\vx))$ is the exact posterior, and $\widehat{q}_{\vy,t}$ is the solution to the posterior evolution~\eqref{eqn: PDE of posterior diffusion}. The constant $C^{(2)}_{\vy}$ depends only on the observation $\vy$. Optimizing the right-hand side yields the asymptotic bound $\TV(\widehat{q}_{\vy,T},q_{\vy,0}) \lesssim \sqrt{\epsilon_\vs}$ when $T \asymp \sqrt{\epsilon_\vs^{-1}}$.
\end{theorem}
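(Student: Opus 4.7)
The plan is to reduce the posterior $\TV$ bound to a standard $\TV$ bound between the prior $p_0$ and the approximate diffusion output $\widehat{\cev p}_T$, exploiting that both $\widehat{q}_{\vy,T}$ and $q_{\vy,0}$ are obtained by tilting their respective base measure by the same weight $h(\vx):=e^{-\mu_{\vy}(\vx)}$. Assumption~\ref{assump: regularity and boundedness of log-likelihood} gives $\|h\|_\infty\leq e^{-C^{(1)}_{\vy}}=:H$, and a short computation -- write the difference of the two tilted densities, then bound $|Z_p-Z_q|\leq 2H\,\TV(p,q)$ where $Z_p := \int p\,h\,\dif\vx$ -- would yield the tilting lemma
\begin{equation*}
    \TV(p^h, q^h) \;\leq\; \frac{2H}{\min(Z_p,Z_q)}\,\TV(p,q)
\end{equation*}
for any two probability densities $p,q$ on $\mathbb{R}^n$. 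Applied with $p=\widehat{\cev p}_T$ and $q=p_0$, this reduces the theorem to bounding $\TV(\widehat{\cev p}_T,p_0)$, with the $\vy$-dependent prefactor absorbed into $C^{(2)}_{\vy}$.

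For the diffusion-side bound, I would treat the two standard error sources separately. Under $s(t)\equiv 1$ and $\sigma(t)=t$ the forward SDE reads $\dif\vx_s=\sqrt{2s}\,\dif\vw_s$, so $p_T=p_0\ast\gN(\vzero, T^2\mI_n)$. Convexity of $\KL$ in its first argument combined with Assumption~\ref{assump: bounded second moment} gives
\begin{equation*}
    \KL\bigl(p_T\,\|\,\gN(\vzero,T^2\mI_n)\bigr)\;\leq\;\mathbb{E}_{\vx_0\sim p_0}\!\left[\frac{\|\vx_0\|_2^2}{2T^2}\right]\;\leq\;\frac{m_2^2}{2T^2},
\end{equation*}
so by Pinsker's inequality and the data-processing inequality (the approximate reverse SDE is the same stochastic map applied to different initial conditions), the initialization mismatch contributes $O(m_2/T)$ to $\TV(\widehat{\cev p}_T,\bar{\cev p}_T)$, where $\bar{\cev p}_T$ denotes the output of the approximate reverse SDE started from the true $p_T$. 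For the score-matching error, the exact and approximate reverse SDEs share the diffusion coefficient $V(t)=\sqrt{2t}$ and differ in drift only by $V(t)^2(\vphi_\theta-\nabla\log\cev p_t)$, so Girsanov's theorem together with Assumption~\ref{assump: score matching training error} gives
\begin{equation*}
    \KL(\cev p_T\,\|\,\bar{\cev p}_T)\;\leq\;\tfrac{1}{2}\!\int_0^T V(t)^2\,\mathbb{E}_{\cev p_t}\!\bigl[\|\vphi_\theta-\nabla\log\cev p_t\|_2^2\bigr]\dif t\;\leq\;\tfrac{T^2}{2}\epsilon_{\vs}^2,
\end{equation*}
hence $\TV(\cev p_T,\bar{\cev p}_T)\lesssim T\epsilon_{\vs}$. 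Since $\cev p_T=p_0$, the triangle inequality sums the two contributions into $\TV(\widehat{\cev p}_T,p_0)\lesssim m_2/T+T\epsilon_{\vs}\lesssim\sqrt{m_2^2/T^2+T^2\epsilon_\vs^2}$.

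Substituting into the tilting lemma delivers the stated bound, and the asymptotic rate $\TV\lesssim\sqrt{\epsilon_\vs}$ follows by AM--GM with $T^*\asymp\epsilon_\vs^{-1/2}$. The hard part will be controlling the tilting constant $\min(Z_{\widehat{\cev p}_T},Z_{p_0})^{-1}$ uniformly in $T$ and $\epsilon_{\vs}$: while $Z_{p_0}>0$ follows from strict positivity and continuity of $h$ together with $p_0$ having full support, $Z_{\widehat{\cev p}_T}$ is not directly accessible and can a priori degrade. I would handle this by a bootstrap -- first establish a crude $\TV$ estimate from the diffusion-side calculation above, then use the Lipschitz inequality $|Z_{\widehat{\cev p}_T}-Z_{p_0}|\leq 2H\,\TV(\widehat{\cev p}_T,p_0)$ to conclude $Z_{\widehat{\cev p}_T}\geq Z_{p_0}/2$ in the regime where the stated bound is meaningful. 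This is the only place where problem-dependent input beyond Assumption~\ref{assump: regularity and boundedness of log-likelihood} enters, and it is precisely what the observation-dependent constant $C^{(2)}_{\vy}$ encodes.
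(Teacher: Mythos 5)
Your proof is correct and follows the same two-stage architecture as the paper's: first bound $\TV(\widehat{\cev p}_T,p_0)$ via Girsanov/Pinsker using Assumptions~\ref{assump: bounded second moment} and~\ref{assump: score matching training error}, then transfer to the posterior by a tilting estimate that invokes Assumption~\ref{assump: regularity and boundedness of log-likelihood}. Two small differences are worth noting. For the prior step, you split $\TV(\widehat{\cev p}_T,p_0)\le\TV(\widehat{\cev p}_T,\bar{\cev p}_T)+\TV(\bar{\cev p}_T,p_0)$ via the intermediate process $\bar{\cev p}$ (approximate reverse SDE initialized at the true $p_T$), handling the two error sources with data processing and with Girsanov respectively, and then recombining via the elementary inequality $a+b\lesssim\sqrt{a^2+b^2}$. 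The paper instead proves a single differential inequality on $\partial_t\KL(\cev p_t\|\widehat{\cev p}_t)$ (Lemma~\ref{lem: discrepancy between two diffusion processes}) that absorbs both the initialization mismatch and the drift mismatch in one integration, arriving at $\KL(p_0\|\widehat{\cev p}_T)\le m_2^2/(2T^2)+T^2\epsilon_\vs^2/2$ directly; this is logically equivalent and slightly tighter in constants. For the tilting step, your lemma with $\min(Z_p,Z_q)^{-1}$ in the prefactor is correct but looser than necessary, which is exactly why you then need a bootstrap to lower-bound $Z_{\widehat{\cev p}_T}$. If you instead expand
\[
\frac{\widehat{\cev p}_T\,e^{-\mu_\vy}}{\widehat Z}-\frac{p_0\,e^{-\mu_\vy}}{Z}
=\Bigl(\tfrac{1}{\widehat Z}-\tfrac{1}{Z}\Bigr)\widehat{\cev p}_T\,e^{-\mu_\vy}
+\tfrac{1}{Z}\bigl(\widehat{\cev p}_T-p_0\bigr)e^{-\mu_\vy},
\]
the $\widehat Z$ in the denominator cancels against $\int\widehat{\cev p}_T e^{-\mu_\vy}=\widehat Z$ in the first term, and the second term involves only $Z$. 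This is what the paper does, and it yields a prefactor depending only on $Z(\vy)$, eliminating the bootstrap entirely. Your bootstrap does close the gap (when the claimed bound is $\ge 1$ the theorem is vacuous, and otherwise $Z_{\widehat{\cev p}_T}\ge Z(\vy)/2$), so the argument is complete, just more circuitous.
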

\end{tcolorbox}

A detailed proof of Theorem~\ref{thm: bound on true and approx posterior} can be found in Section~\ref{proof: posterior bound by prior error}. It essentially combines techniques from the theory of diffusion models~\cite{chen2023improved,wang2024evaluating} and the well-posedness theory of Bayesian inverse problems, which is closely related to~\cite[Theorem 4.1]{purohit2024posterior}. The result of Theorem~\ref{thm: bound on true and approx posterior} reveals a trade-off controlled by the time horizon $T$ between the prior mismatch and score approximation error.

Next, we study the particle approximation to the PDE solution~\eqref{eqn: PDE of posterior diffusion}. In particular, we examine the convergence of the dynamics of the weighted particle ensemble~\eqref{eqn: PDE soln via ensemble of particles} in the many-particle limit.

\begin{assumption}[Boundedness and Lipschitz continuity of $I$]
\label{assump: boundedness and Lipschitz of I}
Define the function
$$
    I(\vx,t):=\|\nabla_{\vx}\mu_{\vy}(\vx)\|_2^2 - \Delta_{\vx}\mu_{\vy}(\vx)-2\vphi_{\theta}(\vx,t)^\intercal\nabla_{\vx}\mu_{\vy}(\vx).
$$
We assume that $I(\vx,t)$ is uniformly bounded and Lipschitz continuous over $\mathbb{R}^n \times [0,T]$:
$\max\{\|I\|_{L^{\infty}(\mathbb{R}^n \times [0,T])},\mathrm{Lip}(I)\} \leq B_{\vy}$,
for some constant $B_{\vy}$ depending only on $\vy$.
\end{assumption}

Assumption~\ref{assump: boundedness and Lipschitz of I} ensures stability of the evolving particle weights, preventing degeneracy or explosion over time. In practice, this condition is enforced via the resampling step in Algorithm~\ref{alg:resampling}. While we adopt this assumption for analytical tractability, relaxing it and developing a rigorous theory of the resampling step remain important future directions, which might link to existing theoretical studies~\cite{lu2023birth,chen2023sampling,yan2024learning} on sampling algorithms that use birth-death dynamics or Fisher–Rao gradient flow.

\begin{tcolorbox}[colback=lightgray,boxrule=0pt,sharp corners]
\begin{theorem}[Convergence in the Many-Particle Limit]
\label{thm: many-particle limit}
Under Assumptions~\ref{assump: regularity and boundedness of log-likelihood}–\ref{assump: boundedness and Lipschitz of I}, the empirical distribution of the particle system converges to the solution of the posterior PDE. Specifically, for all $t \in [0,T]$,
$$\lim_{N \rightarrow \infty}\mathbb{E}[\gW^2_2(\gamma^{N}_{t}, \gamma_{t})] = 0,$$
where $\gamma_t$ is the law of a single weighted particle pair $(\vx_t, \beta_t)$ governed by~\eqref{eqn: PDE soln via single particle}, such that the marginal $\widehat{q}_{\vy}(\cdot,t)$ is recovered via $P_{\beta}\gamma_t(\cdot) = \int_{\mathbb{R}}\beta\gamma_t(\cdot,\beta)\dif\beta = \widehat{q}_{\vy}(\cdot,t)$, and $\gamma^{N}_{t}$ is the empirical measure of the $N$-particle system $\left\{\left(\vx^{(i)}_t, \beta^{(i)}_t\right)\right\}_{i=1}^N$ governed by~\eqref{eqn: PDE soln via ensemble of particles}.
\end{theorem}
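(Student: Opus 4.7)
My approach is a synchronous-coupling plus propagation-of-chaos argument. For each $i\in[N]$, define the nonlinear McKean--Vlasov process $(\widetilde\vx^{(i)}_t,\widetilde\beta^{(i)}_t)$ solving~\eqref{eqn: PDE soln via single particle}, driven by the \emph{same} Brownian motion $\vw^{(i)}_t$ and started from the \emph{same} initial condition as the $i$-th particle of the interacting system~\eqref{eqn: PDE soln via ensemble of particles}. The crucial structural observation is that the position dynamics in~\eqref{eqn: PDE soln via ensemble of particles} depend only on the particle's own state, so the coupled positions coincide pathwise: $\vx^{(i)}_t=\widetilde\vx^{(i)}_t$ almost surely. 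The entire interaction is therefore confined to the weight equation. Writing $f(\vx,t):=U(\vx,t)-\widehat{\mH}(\vx,t)^\intercal\nabla_\vx\mu_\vy(\vx)$, which equals $\tfrac{V(t)^2}{2}I(\vx,t)$ and is uniformly bounded on $\mathbb{R}^n\times[0,T]$ by Assumption~\ref{assump: boundedness and Lipschitz of I}, the log-weight dynamics give
\begin{equation*}
\delta(t)\;:=\;\log\beta^{(i)}_t-\log\widetilde\beta^{(i)}_t\;=\;\int_0^t\bigl(m(s)-\langle f,\beta\rangle_{N,s}\bigr)\,\dif s,
\end{equation*}
where $m(s):=\mathbb{E}_{\gamma_s}[f(\vx)\beta]$ and $\langle f,\beta\rangle_{N,s}:=\tfrac1N\sum_j f(\vx^{(j)}_s,s)\beta^{(j)}_s$. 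The critical point is that $\delta(t)$ is \emph{independent of $i$}, which yields the multiplicative relation $\beta^{(i)}_t=e^{\delta(t)}\widetilde\beta^{(i)}_t$ for all particles simultaneously.

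\paragraph{Controlling $\delta$ and the weight gap.} A direct check shows $\tfrac1N\sum_i\beta^{(i)}_t\equiv 1$ and $\mathbb{E}[\widetilde\beta^{(i)}_t]\equiv 1$, so $|\langle f,\beta\rangle_{N,s}|,|m(s)|\le\|f\|_\infty$ and the a priori bound $|\delta(t)|\le 2T\|f\|_\infty$ holds deterministically; consequently $\widetilde\beta^{(i)}_t$ and $\beta^{(i)}_t$ are both uniformly bounded almost surely on $[0,T]$. Using $\langle f,\beta\rangle_{N,s}=e^{\delta(s)}\widetilde m_N(s)$ with $\widetilde m_N(s):=\tfrac1N\sum_j f(\widetilde\vx^{(j)}_s,s)\widetilde\beta^{(j)}_s$, the quantity $Z(t):=e^{\delta(t)}$ satisfies
\begin{equation*}
Z'(s)\;=\;-Z(s)\widetilde m_N(s)\bigl(Z(s)-1\bigr)+Z(s)\bigl(m(s)-\widetilde m_N(s)\bigr),\qquad Z(0)=1.
\end{equation*}
Since the summands defining $\widetilde m_N(s)$ are i.i.d.\ bounded random variables, $\mathbb{E}|\widetilde m_N(s)-m(s)|^2=O(1/N)$ uniformly in $s\in[0,T]$; combining this with the a priori bound on $Z$ and on $\widetilde m_N$, a Grönwall estimate applied to $(Z(t)-1)^2$ yields $\mathbb{E}(Z(t)-1)^2\le C_{\vy,T}/N$. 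Therefore $\mathbb{E}|\beta^{(i)}_t-\widetilde\beta^{(i)}_t|^2=\mathbb{E}\bigl|(Z(t)-1)\widetilde\beta^{(i)}_t\bigr|^2=O(1/N)$.

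\paragraph{Conclusion and main obstacle.} Let $\widetilde\gamma^N_t:=\tfrac1N\sum_i\delta_{(\widetilde\vx^{(i)}_t,\widetilde\beta^{(i)}_t)}$ denote the empirical measure of the i.i.d.\ copies. Using the natural coupling that pairs particles with the same index, together with $\vx^{(i)}_t=\widetilde\vx^{(i)}_t$,
\begin{equation*}
\mathbb{E}\,\gW_2^2(\gamma^N_t,\gamma_t)\;\le\;2\mathbb{E}\,\gW_2^2(\gamma^N_t,\widetilde\gamma^N_t)+2\mathbb{E}\,\gW_2^2(\widetilde\gamma^N_t,\gamma_t)\;\le\;\tfrac{2}{N}\sum_{i=1}^N\mathbb{E}|\beta^{(i)}_t-\widetilde\beta^{(i)}_t|^2+2\mathbb{E}\,\gW_2^2(\widetilde\gamma^N_t,\gamma_t).
\end{equation*}
The first term vanishes at rate $O(1/N)$ by the previous step, while the second vanishes as $N\to\infty$ by the Fournier--Guillin theorem on empirical-measure convergence in Wasserstein distance applied to the i.i.d.\ sequence $\{(\widetilde\vx^{(i)}_t,\widetilde\beta^{(i)}_t)\}_{i\ge 1}$; the required finite higher-moment condition on $\gamma_t$ follows because $\widetilde\beta^{(i)}_t$ is bounded almost surely and the position moments propagate from Assumption~\ref{assump: bounded second moment} along the position SDE whose drift is controlled via Assumption~\ref{assump: score matching training error}. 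The principal technical obstacle is the Grönwall step for the genuinely nonlinear scalar ODE satisfied by $Z$: one must first exploit the deterministic a priori bound on $\delta(t)$ to place $Z$ and $\widetilde m_N$ in a compact regime, and only then does the $L^2$ Grönwall estimate close and produce the $1/N$ rate needed to dominate the coupling term.
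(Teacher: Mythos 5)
Your proof is correct, and it takes a genuinely different — and in several respects sharper — route than the paper's.

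The paper couples $\gamma^{N}_t$ and $\widetilde\gamma^{N}_t$ by index, differentiates $|\beta^{(i)}_t-\widetilde\beta^{(i)}_t|^2$ directly, and then must control the interaction term $\int \beta\,I\,(\gamma^{N}_t-\gamma_t)$ by choosing a test function of the form $c\,\beta I(\vx,t)$ and invoking the Lipschitz continuity of $I$ to upper-bound this integral by $\gW_1\le\gW_2$. This feeds into a nested Gr\"onwall argument whose constants depend on $\mathrm{Lip}(I)$. Your proof instead exploits a structural fact the paper never uses: because the positions are pathwise identical under the synchronous coupling and the weight ODEs are multiplicative, the log-ratio $\delta(t)=\log\beta^{(i)}_t-\log\widetilde\beta^{(i)}_t$ satisfies an ODE whose right-hand side has no $i$-dependence, so all particles share a common multiplicative correction factor $Z(t)=e^{\delta(t)}$. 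Combined with the algebraic identity $\tfrac1N\sum_i\beta^{(i)}_t\equiv 1$ (and $\E\widetilde\beta^{(i)}_t\equiv1$), this collapses the coupling error to a single scalar ODE for $Z$, which closes under Gr\"onwall using only the $L^\infty$ bound on $I$ and the i.i.d. variance decay $\E|\widetilde m_N-m|^2=O(1/N)$. Two concrete gains: you do not need the Lipschitz part of Assumption~\ref{assump: boundedness and Lipschitz of I} for the coupling step, and you obtain an explicit $O(1/N)$ rate for $\E\,\gW_2^2(\gamma^{N}_t,\widetilde\gamma^{N}_t)$, whereas the paper only establishes that this term is controlled by the LLN term up to constants.

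Both your proof and the paper's leave the same loose end on the second term: the invocation of Fournier--Guillin (resp.\ Lacker's Corollary 2.14) requires a moment condition on $\gamma_t$ slightly beyond second moments; the $\beta$-coordinate is fine since $\widetilde\beta^{(i)}_t$ is deterministically bounded, but the higher moments of $\widetilde\vx^{(i)}_t$ are only gestured at in your sketch (as in the paper). That is a shared, minor gap and not a flaw specific to your argument.
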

\end{tcolorbox}

Theorem~\ref{thm: many-particle limit} establishes the mean-field consistency of the weighted ensemble approximation in the 2-Wasserstein sense. Its proof, which is presented in Appendix~\ref{proof: mean-field limit}, is based on results from propagation of chaos~\cite{sznitman1991topics,lacker2018mean}. Together with Theorem~\ref{thm: bound on true and approx posterior}, our theoretical results provide both rigorous guarantee for the accuracy of the continuum posterior approximation and justification of the proposed ensemble-based implementation.



\section{Experiments}
\label{sec: results of numerics}

In this section, we evaluate the empirical performance of our method on several BIPs in imaging. Additional implementation details and experimental results can be found in Appendix~\ref{sec: implementation detail app} and~\ref{sec: experiment results detail app}, respectively. 

\paragraph{Problem Setting.}

We consider four canonical inverse problems: Gaussian Deblurring (GD), Motion Deblurring (MD), Super Resolution (SR), and Box Inpaint (BI). In all these tasks, we assume that the observational noise is isotropic Gaussian with variance $0.2$, \emph{i.e.}, $\vn \sim \gN(\vzero, 0.2 \mI_m)$ in~\eqref{eqn: BIP setup}, a more challenging setting compared to the commonly used low-noise scenario with variance $2.5 \times 10^{-3}~$\cite{dou2024diffusion,wu2024principled}. Experiments are conducted on FFHQ-256~\cite{karras2019style} and ImageNet-256~\cite{deng2009imagenet}, two widely used datasets in imaging and vision.

\paragraph{Baselines.}

We compare our proposed algorithms with several state-of-the-art diffusion model-based posterior sampling methods:
\begin{itemize}[leftmargin=1.5em, itemsep=0pt, topsep=0pt, parsep=0pt]
    \item \emph{DPS~\cite{chung2022diffusion}}: a sampler that guides the pretrained DM with approximations of manifold-constrained gradients derived from the measurement likelihood.
    \item \emph{DCDP~\cite{li2024decoupled}}: a framework alternating between optimization steps that ensure data consistency and pretrained DMs for posterior sampling.
    \item \emph{SGS-EDM~\cite{wu2024principled}}: a split Gibbs sampler coupled with a DM for efficient posterior inference.
    \item \emph{FK-Corrector~\cite{skreta2025feynman}}: an SMC-based sampler using Feynman-Kac formula to correct trajectories.
    \item \emph{PF-SMC-DM~\cite{dou2024diffusion}}: a particle filtering framework combining SMC with diffusion models.
\end{itemize}

\begin{wrapfigure}{r}{0.42\textwidth}
    \vspace{-1.2em}
    \centering
    \begin{subfigure}[b]{0.11\textwidth}
        \includegraphics[width=\linewidth]{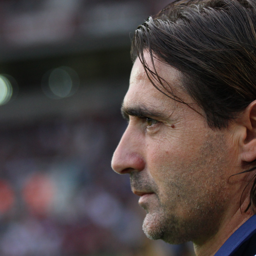}
    \end{subfigure}%
    \begin{subfigure}[b]{0.11\textwidth}
        \includegraphics[width=\linewidth]{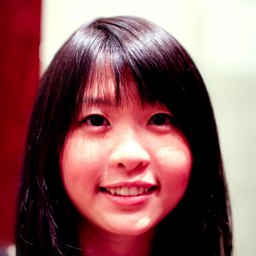}
    \end{subfigure}%
    \begin{subfigure}[b]{0.11\textwidth}
        \includegraphics[width=\linewidth]{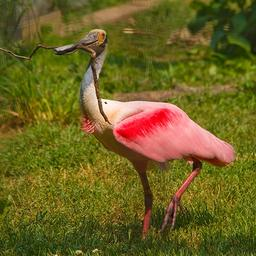}
    \end{subfigure}%
    \begin{subfigure}[b]{0.11\textwidth}
        \includegraphics[width=\linewidth]{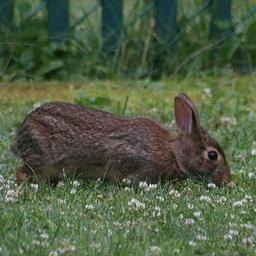}
    \end{subfigure}
    
    
    \begin{subfigure}[b]{0.11\textwidth}
        \includegraphics[width=\linewidth]{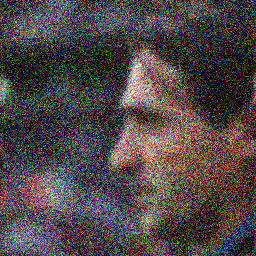}
    \end{subfigure}%
    \begin{subfigure}[b]{0.11\textwidth}
        \includegraphics[width=\linewidth]{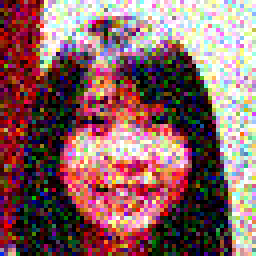}
    \end{subfigure}%
    \begin{subfigure}[b]{0.11\textwidth}
        \includegraphics[width=\linewidth]{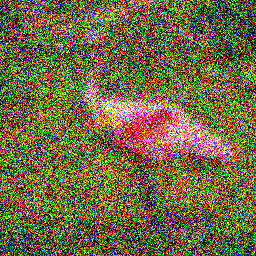}
    \end{subfigure}%
    \begin{subfigure}[b]{0.11\textwidth}
        \includegraphics[width=\linewidth]{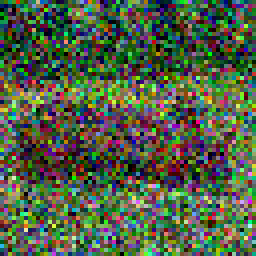}
    \end{subfigure}
    
    
    \begin{subfigure}[b]{0.11\textwidth}
        \includegraphics[width=\linewidth]{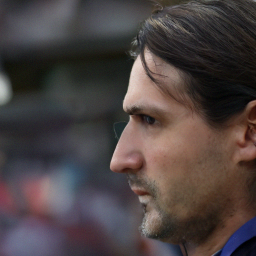}
    \end{subfigure}%
    \begin{subfigure}[b]{0.11\textwidth}
        \includegraphics[width=\linewidth]{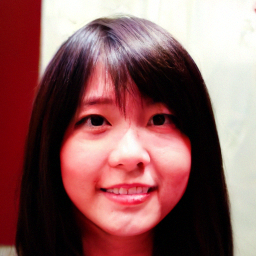}
    \end{subfigure}%
    \begin{subfigure}[b]{0.11\textwidth}
        \includegraphics[width=\linewidth]{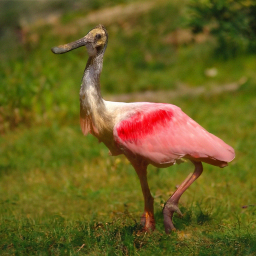}
    \end{subfigure}%
    \begin{subfigure}[b]{0.11\textwidth}
        \includegraphics[width=\linewidth]{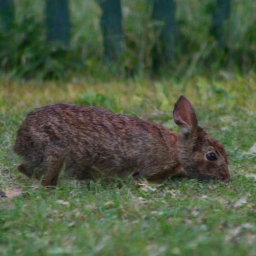}
    \end{subfigure}
    \vspace{-.1em}
    \caption{Visualization of posterior samples by AFDPS. \textbf{Upper:} Original; \textbf{Middle:} Blurred; \textbf{Lower:} Reconstructed.}
    \label{fig:2x4compact}
    \vspace{-1.5em}
\end{wrapfigure}

\paragraph{Experimental Settings.}

To ensure a fair comparison, we use the same checkpoints for the two pre-trained score functions provided in~\cite{chung2022diffusion} and fix the number of function evaluations (NFE) to $2 \times 10^4$ across all methods. For ensemble-based approaches, the number of particles is set to $N = 10$. In the case of AFDPS-ODE (Algorithm~\ref{alg:edm pf-ode + corrector}), we reduce the number of particles to $N = 5$ to offset the additional computational cost from the corrector step, while maintaining the total NFE consistent with AFDPS-SDE (Algorithm~\ref{alg:sde}). We evaluate reconstruction quality using two metrics: PSNR (Peak Signal-to-Noise Ratio), which quantifies pixel-level accuracy, and LPIPS (Learned Perceptual Image Patch Similarity)~\cite{zhang2018unreasonable}, which measures perceptual similarity; both metrics are computed between the reconstructed sample mean and the ground truth image over a set of 100 randomly selected validation images.

{
\begin{table}[htbp]
\centering
\caption{Results on 4 inverse problems for 100 validation images from FFHQ-256.}
\resizebox{\textwidth}{!}{
\begin{tabular}{l|cc|cc|cc|cc}
\noalign{\hrule height 1.2pt} 
\multirow{2}{*}{\bfseries Method} 
& \multicolumn{2}{c|}{\bfseries Gaussian Deblurring}
& \multicolumn{2}{c|}{\bfseries Motion Deblurring}
& \multicolumn{2}{c|}{\bfseries Super Resolution} 
& \multicolumn{2}{c}{\bfseries Box Inpainting}\\
& PSNR ($\uparrow$)  & LPIPS ($\downarrow$) 
& PSNR ($\uparrow$)  & LPIPS ($\downarrow$) 
& PSNR ($\uparrow$)  & LPIPS ($\downarrow$) 
& PSNR ($\uparrow$)  & LPIPS ($\downarrow$)\\
\noalign{\hrule height 1pt}
\rowcolor{lightgray} DPS \cite{chung2022diffusion} & 22.57  & 0.2976  & 21.00 & 0.3280 & 19.09 & 0.5627 & 21.57  & 0.3245  \\
DCDP \cite{li2024decoupled} & 24.77 & 0.2868 & 21.57 & 0.3487 & 21.23 & 0.5139  & 22.05 & 0.4525 \\
\rowcolor{lightgray} SGS-EDM \cite{wu2024principled} & 24.78  & 0.2776  &  23.45 & 0.3009 & 22.41  & 0.3225  & 23.69 & 0.2301 \\
FK-Corrector \cite{skreta2025feynman} & 21.22  & 0.4023 &  20.51 & 0.4275 & 20.67 & 0.4133 & 16.97 & 0.5490 \\
\rowcolor{lightgray} PF-SMC-DM \cite{dou2024diffusion} & 23.00  & 0.3940 & \cellcolor{bestperformance}\bfseries 26.59 & 0.3435 & 18.92 & 0.5049 & 25.54 & 0.3391\\
\noalign{\hrule height 1pt} 
\bfseries AFDPS-SDE (Alg.~\ref{alg:sde}) & 24.83 & 0.2580 &  23.58  & \cellcolor{bestperformance}\bfseries 0.2869  & \cellcolor{bestperformance}\bfseries 22.96 & \cellcolor{bestperformance}\bfseries 0.3063 &  25.45 & 0.2084 \\
\rowcolor{lightgray} \bfseries AFDPS-ODE (Alg.~\ref{alg:edm pf-ode + corrector}) & \cellcolor{bestperformance}\bfseries 24.98 & \cellcolor{bestperformance}\bfseries 0.2560 & 23.52 & 0.2905  & 21.47 & 0.3345 & \cellcolor{bestperformance}\bfseries 25.73 & \cellcolor{bestperformance}\bfseries 0.1969\\
\noalign{\hrule height 1.2pt} 
\end{tabular}
}
\label{tab:ffhq result}
\end{table}

\begin{table}[htbp]
\centering
\caption{Results on 4 inverse problems for 100 validation images from ImageNet-256.}
\resizebox{\textwidth}{!}{
\begin{tabular}{l|cc|cc|cc|cc}
\noalign{\hrule height 1.2pt} 
\multirow{2}{*}{\bfseries Method} 
& \multicolumn{2}{c|}{\bfseries Gaussian Deblurring}
& \multicolumn{2}{c|}{\bfseries Motion Deblurring}
& \multicolumn{2}{c|}{\bfseries Super Resolution} 
& \multicolumn{2}{c}{\bfseries Box Inpainting}\\
& PSNR ($\uparrow$)  & LPIPS ($\downarrow$) 
& PSNR ($\uparrow$)  & LPIPS ($\downarrow$) 
& PSNR ($\uparrow$)  & LPIPS ($\downarrow$) 
& PSNR ($\uparrow$)  & LPIPS ($\downarrow$)\\
\noalign{\hrule height 1pt}
\rowcolor{lightgray} DPS \cite{chung2022diffusion} & 20.60  & 0.4351  & 20.46 & 0.5328 & 19.17 & 0.4940 & 22.70 & 0.3765 \\
DCDP \cite{li2024decoupled} & 22.34 & 0.4821 & 20.59 & 0.5338 & 20.26 & 0.5597 & 21.67 & 0.4344 \\
\rowcolor{lightgray} SGS-EDM \cite{wu2024principled} & 19.31 & 0.4807  & 20.54 & 0.4653 & 19.61 & 0.4986 & 21.42 & 0.4643 \\
FK-Corrector \cite{skreta2025feynman} & 18.39 & 0.5973 & 18.34  & 0.6022  & 18.57 & 0.5887 & 16.28 & 0.7132 \\
\rowcolor{lightgray} PF-SMC-DM \cite{dou2024diffusion} & 20.06  & 0.5927 & \cellcolor{bestperformance}\bfseries23.91  & \cellcolor{bestperformance} \bfseries 0.4195  & 18.42 & 0.6462 & 21.34 & 0.4195\\
\noalign{\hrule height 1pt}
\bfseries AFDPS-SDE (Alg.~\ref{alg:sde}) & 22.38 & \cellcolor{bestperformance}\bfseries 0.3925 &   19.46 & 0.4936  & \cellcolor{bestperformance}\bfseries 20.97 & \cellcolor{bestperformance}\bfseries 0.4643 & \cellcolor{bestperformance}\bfseries 23.15 & 0.3051 \\
\rowcolor{lightgray} \bfseries AFDPS-ODE (Alg.~\ref{alg:edm pf-ode + corrector}) & \cellcolor{bestperformance}\bfseries 22.42  & 0.4633  & 21.54   & 0.4944   & 19.60 & 0.5634 & 22.76 & \cellcolor{bestperformance}\bfseries 0.2716 \\
\noalign{\hrule height 1.2pt} 
\end{tabular}
}
\label{tab:imagenet result}
\end{table}
}

\paragraph{Results.}

The quantitative performance of our proposed methods - AFDPS-SDE (Algorithm~\ref{alg:sde}) and AFDPS-ODE (Algorithm~\ref{alg:edm pf-ode + corrector}) - is presented in Table~\ref{tab:ffhq result} for the FFHQ-256 dataset and Table~\ref{tab:imagenet result} for the ImageNet-256 dataset. On FFHQ-256, both methods consistently demonstrate strong or highly competitive results across all evaluated inverse problems, frequently outperforming existing baselines in terms of both PSNR and LPIPS. The two variants show complementary strengths across different tasks, underscoring the benefit of incorporating both formulations. Similar trends are observed on the more diverse ImageNet-256 dataset, where both AFDPS methods continue to achieve robust and often superior performance. Qualitative examples are provided in Figure~\ref{fig:2x4compact} and more in Appendix~\ref{sec: experiment results detail app}, illustrating the visual quality of reconstructions across tasks with comparisons to baselines.

\section{Discussion and Conclusion}

In this paper, we introduced a new method for solving Bayesian inverse problems using diffusion models as the prior. Our method derives a novel PDE that exactly characterizes the exact posterior dynamics under an evolving diffusion prior, avoiding the heuristic approximations employed by previous methods and leading to better SMC-type algorithms in practice.
Theoretically, we provide the error bounds of the posterior sampling algorithm in terms of the score function error, and justify the convergence of the ensemble method in the many-particle limit. Empirically, our method outperforms state-of-the-art diffusion-based solvers across a range of computational imaging tasks.

This work opens several promising directions for future research. 
Our method applies to other inverse problems arising in various fields with twice-differentiable log-likelihoods, including optics, medical imaging, video analytics, geoscience, astronomy, fluid dynamics, chemistry and biology~\cite{sun2024provable,wu2024principled,zheng2025inversebench,jaganathan2016phase,fienup1982phase,candes2015wirtinger,candes2015phase,kantas2014sequential,daras2024warped,zhang2025step,jing2024alphafold,maddipatla2025inverse,sridharan2022deep,hu2024accurate}.
Methodologically, our framework could be extended to settings such as multi-marginal sampling~\cite{albergo2023multimarginal,lindsey2025mne}, conditional sampling~\cite{zhu2023conditional}, reward-guided sampling~\cite{uehara2025reward}, and other variants of DMs, such as latent diffusion models (LDMs)~\cite{rombach2022high,song2023solvinglatent}, discrete diffusion models~\cite{murata2024g2d2, luan2025ddps,chu2025split,
austin2021structured, hoogeboom2021autoregressive, hoogeboom2021argmax, meng2022concrete, sun2022score, richemond2022categorical, lou2023discrete, floto2023diffusion, santos2023blackout, chen2024convergence, ren2024discrete}, flow matching~\cite{zhang2024flow}, or to the general framework of denoising Markov model with variants like generator matching~\cite{benton2024denoising,holderrieth2024generator,ren2025unified}.
Theoretically, further work could explore numerical analysis of our method~\cite{chen2022sampling,chen2023improved,chen2024probability} or incorporate it with faster inference methods like parallel sampling \citep{shih2024parallel, tang2024accelerating, cao2024deep, selvam2024self, chen2024accelerating, gupta2024faster}, high-order solvers~\cite{karras2022elucidating,lu2022dpm++,liu2022pseudo,lu2022dpm, zheng2023dpm, li2024accelerating,wu2024stochastic,ren2025fast} and their variants.

\begin{ack}
Part of this research was conducted during Haoxuan Chen's internship at NEC Labs America. Haoxuan Chen, Yinuo Ren and Lexing Ying also acknowledge support of the National Science Foundation (NSF) under Award No. DMS-2208163. 
\end{ack}

\newpage
\appendix

\section{Further Discussion on Related Work and Notations}

In this section, we provide additional discussion and context around our work through a comprehensive literature review and clarification of notations used throughout the paper.

\subsection{Related Work}
\label{app: related work review}
In this subsection, we provide a more comprehensive overview of related work. 
\paragraph{Solving Inverse Problems via Machine Learning Techniques}

A wide body of work has tried applying machine learning (ML) based techniques to tackle inverse problems. In particular, one class of such ML-based methods deploy the Maximum a posteriori (MAP) approach by directly modeling the inverse mapping via some neural network. In the context of physical sciences, examples of work include~\cite{yoon2018analytic, khoo2019switchnet, fan2019solving1, fan2019solving2, fan2020solving, fournier2020artificial, sun2020extrapolated, sun2021deep, li2021accurate, li2022wide, zhou2023neural, fan2023solving,molinaro2023neural,melia2025multi}. For a more detailed overview of methods belonging to such class, one may refer to~\cite{arridge2019solving, ying2022solving}. Similar methodologies~\cite{zhang2018ista,gilton2019neumann,xiang2021fista} have also been applied to inverse problems in computational imaging and computer vision. The second class of ML-based methods~\cite{hou2019solving, zhang2021multiscale, whang2021solving, park2024solving, tao2025map,dasgupta2025unifying}, however, employ a Bayesian approach by leveraging generative priors like normalizing flows and diffusion models. Such methods have been widely applied in various areas like medical imaging~\cite{song2021solving, chung2022score, tuscore2025}, cryo-electron microscopy~\cite{kreis2022latent, levy2024solving}, PDE-constrained inverse problems~\cite{jiang2025ode}, sampling marginal densities~\cite{lindsey2025mne}, inverse scattering~\cite{zhang2024back}, traveltime tomography~\cite{cao2024subspace}, nonlinear data assimilation~\cite{ding2024nonlinear}, inverse protein folding~\cite{hsu2022learning, zhu2024bridge}, as well as fluid dynamics~\cite{chen2024probabilistic, xu2025diffusion,molinaro2024generative}. For a complete review of applying diffusion models to solve inverse problems, one may refer to~\cite{daras2024survey}. Moreover, for the second class of methods that deploy a posterior sampling approach, recent work have also tried to combine diffusion models with existing sampling methods like SMC~\cite{wu2023practical,cardoso2023monte, dou2024diffusion,albergo2024nets,chen2024sequential,vargas2023transport}, SGS~\cite{xu2024provably,wu2024principled,wang2025ensemble}, parallel tempering~\cite{zhang2025generalised} and ensemble Kalman filtering~\cite{zheng2024ensemble}. For methods using gradients of the log-likelihood in their algorithm design, we note that they also relate to guidance-based methods~\cite{dhariwal2021diffusion,wu2024theoretical,chidambaram2024does,ho2022classifier,bansal2023universal,song2023loss,he2023manifold,guo2024gradient,ye2024tfg} proposed for conditional sampling.

\paragraph{Gradient Flows for Sampling and Generative Modeling}
Gradient flow perspectives, particularly those based on the Wasserstein metric with foundational insights stemming from optimal transport and the JKO scheme~\cite{jordan1998variational}, have been extensively studied for both sampling and variational inference. Recent work in this direction includes~\cite{gao2019deep, ansari2020refining, fan2021variational,lambert2022variational, diao2023forward}, with ongoing developments such as~\cite{wild2023rigorous, shaul2023kinetic, zhang2023mean, cheng2024particle, yao2024minimizing, choi2024scalable, zhu2024neural}. Other recent work~\cite{vidal2023taming, cheng2024convergence, xu2024normalizing, xie2025flow, boffi2024flow, kassraie2024progressive} also discuss algorithms formulated via proximal operators and local-map learning strategies. Related developments in quantum Monte Carlo (QMC), particularly diffusion Monte Carlo (DMC)~\cite{caffarel1988development1, caffarel1988development2}, are reviewed in~\cite{gubernatis2016quantum, becca2017quantum} with further applications to quantum many-body problems discussed in~\cite{lu2020full}. 

\paragraph{(Stochastic) Weighted Particle Methods and Wasserstein-Fisher-Rao Dynamics}

Weighted particle methods, such as those based on the birth-death process and Wasserstein–Fisher–Rao (WFR) distances~\cite{kondratyev2016new,liero2018optimal,chizat2018interpolating}has motivated a series of studies on ensemble-based sampling dynamics~\cite{lindsey2022ensemble, lu2019accelerating, maurais2024sampling, gabrie2022adaptive, tan2023accelerate, chen2024ensemble, pathiraja2024connections} that have been applied to solving  high-dimensional Bayesian inverse problems~\cite{qu2024uncertainty,chen2024efficient} and PDEs~\cite{han2020solving,zhang2024sequential, neklyudov2024wasserstein,chen2024teng}. These techniques have also been applied to multi-objective optimization~\cite{ren2024multi}, density estimation via Gaussian mixtures~\cite{chen2023sampling, yan2024learning}, and reinforcement learning and MDPs~\cite{muller2024fisher}. Their connection to min-max optimization is explored in~\cite{domingo2020mean, ying2022lyapunov, lascu2024fisher}. 


\subsection{Notations}

We use $\nabla_{\vx}, \nabla_{\vx} \cdot$ and $\Delta_{\vx}$ to denote the gradient, divergence, and Laplacian operators with respect to any fixed variable $\vx$. The set of positive real numbers is denoted by $\mathbb{R}^{+}$. We further use $\delta$ for the Dirac delta function. For measuring distances between probability distributions, we use the Kullback-Leibler (KL) divergence $\KL$, Total Variation (TV) divergence $\TV$, and Wasserstein-$p$ distance $\gW_p$. The $l_2$ norm is denoted by $\|\cdot\|_2^2$.

\section{Supplementary Proofs and Justifications for Section~\ref{sec: main result - algorithms}}
\label{sec: derivations for Sec 3}
In this section, we provide detailed proofs and justifications for claims listed in Section~\ref{sec: main result - algorithms}. We will use the shorthand notation $f_{\vy}(\vx) = \exp(\mu_{\vy}(\vx))$ for the time-independent likelihood factor. 

\begin{lemma}
    \label{lem: simplifying dynamics}    
    The PDE dynamics governing the evolution of the unnormalized posterior distribution $\widehat{Q}_{\vy}(\vx,t):\mathbb{R}^n \times [0, T] \rightarrow \mathbb{R}^{+}$ is given by
    \begin{equation}
    \label{eqn: PDE of unnormalized posterior diffusion app}
        \begin{aligned}
        \frac{\partial}{\partial t}\widehat{Q}_{\vy} = &-\nabla_{\vx} \cdot \left(\left(\widehat{\mH}(\vx,t)-V(t)^2\nabla_{\vx}\mu_{\vy}\right)\widehat{Q}_{\vy}\right)
        +\frac{1}{2}V(t)^2\Delta_{x}\widehat{Q}_{\vy}\\
        &+\left(\frac{1}{2}V(t)^2\left(\|\nabla_{\vx}\mu_{\vy}\|_2^2-\Delta_{\vx}\mu_{\vy}\right) - \widehat{\mH}(\vx,t)^\intercal\nabla_{\vx}\mu_{\vy}\right)\widehat{Q}_{\vy},
        \end{aligned}    
    \end{equation}
    where 
    \begin{equation}
    \label{eqn: original drift with score}
    \widehat{\mH}(\vx,t) := -F(t)\vx + \frac{G(t)^2 + V(t)^2}{2} \vphi_\theta(\vx,t)    
    \end{equation}
    denotes the original drift in the prior diffusion.
\end{lemma}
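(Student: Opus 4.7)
The plan is a direct substitution-and-rearrangement argument: write $\widehat{\cev{p}}_t(\vx) = f_{\vy}(\vx)\widehat{Q}_{\vy}(\vx,t)$ with $f_{\vy}(\vx) = e^{\mu_{\vy}(\vx)}$, plug this into the Fokker--Planck equation~\eqref{eqn: PDE of prior diffusion} that $\widehat{\cev p}_t$ satisfies, expand every derivative via the product/chain rules, cancel the common positive factor $f_{\vy}$, and then regroup the resulting terms to match the target form~\eqref{eqn: PDE of unnormalized posterior diffusion app}.

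Concretely, I would first note the three elementary identities $\partial_t \widehat{\cev p}_t = f_{\vy}\,\partial_t \widehat Q_{\vy}$, $\nabla_{\vx}\widehat{\cev p}_t = f_{\vy}(\nabla_{\vx}\widehat Q_{\vy} + \widehat Q_{\vy}\nabla_{\vx}\mu_{\vy})$, and
\begin{equation*}
\Delta_{\vx}\widehat{\cev p}_t = f_{\vy}\bigl(\Delta_{\vx}\widehat Q_{\vy} + 2\nabla_{\vx}\widehat Q_{\vy}\cdot\nabla_{\vx}\mu_{\vy} + \widehat Q_{\vy}\|\nabla_{\vx}\mu_{\vy}\|_2^2 + \widehat Q_{\vy}\Delta_{\vx}\mu_{\vy}\bigr).
\end{equation*}
Next, for the drift term I would use $\nabla_{\vx}\cdot(\widehat{\mH}\,f_{\vy}\widehat Q_{\vy}) = f_{\vy}\bigl(\nabla_{\vx}\cdot(\widehat{\mH}\widehat Q_{\vy}) + \widehat{\mH}^{\intercal}\nabla_{\vx}\mu_{\vy}\,\widehat Q_{\vy}\bigr)$. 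Substituting these three expressions into~\eqref{eqn: PDE of prior diffusion} and dividing through by $f_{\vy}>0$ produces a raw expression for $\partial_t\widehat Q_{\vy}$ containing a transport term, the pure Laplacian $\tfrac{1}{2}V(t)^2\Delta_{\vx}\widehat Q_{\vy}$, a cross term $V(t)^2\nabla_{\vx}\mu_{\vy}\cdot\nabla_{\vx}\widehat Q_{\vy}$, and the multiplicative pieces $-\widehat{\mH}^{\intercal}\nabla_{\vx}\mu_{\vy}\,\widehat Q_{\vy}$ and $\tfrac{1}{2}V(t)^2(\|\nabla_{\vx}\mu_{\vy}\|_2^2+\Delta_{\vx}\mu_{\vy})\widehat Q_{\vy}$.

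Finally, to match the stated form I would reassemble the first-order pieces into a single divergence with modified drift by observing that $-\nabla_{\vx}\cdot\bigl((\widehat{\mH}-V(t)^2\nabla_{\vx}\mu_{\vy})\widehat Q_{\vy}\bigr) = -\nabla_{\vx}\cdot(\widehat{\mH}\widehat Q_{\vy}) + V(t)^2\nabla_{\vx}\mu_{\vy}\cdot\nabla_{\vx}\widehat Q_{\vy} + V(t)^2\Delta_{\vx}\mu_{\vy}\,\widehat Q_{\vy}$. The extra $V(t)^2\Delta_{\vx}\mu_{\vy}\widehat Q_{\vy}$ generated by folding the cross term into the divergence is exactly cancelled by subtracting $\tfrac{1}{2}V(t)^2\Delta_{\vx}\mu_{\vy}$ from the raw multiplicative Laplacian-of-$\mu_{\vy}$ contribution, which converts $+\tfrac{1}{2}V(t)^2\Delta_{\vx}\mu_{\vy}$ into the desired $-\tfrac{1}{2}V(t)^2\Delta_{\vx}\mu_{\vy}$. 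The only slightly delicate step, and the one I would write out most carefully, is the Laplacian expansion: it is the sole source of the quadratic $\|\nabla_{\vx}\mu_{\vy}\|_2^2$ term and is also where the sign flip on $\Delta_{\vx}\mu_{\vy}$ originates, so any arithmetic slip there would propagate. Everything else is mechanical bookkeeping, and no probabilistic or analytic input beyond pointwise smoothness of $\mu_{\vy}$ is required.
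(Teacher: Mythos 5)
Your proposal follows the same route as the paper's proof of Lemma~\ref{lem: simplifying dynamics}: substitute $\widehat{\cev p}_t = f_{\vy}\widehat Q_{\vy}$ with $f_{\vy}=e^{\mu_{\vy}}$ into the Fokker--Planck equation~\eqref{eqn: PDE of prior diffusion}, expand the drift-divergence and Laplacian via the product rule using $\nabla_{\vx} f_{\vy}/f_{\vy}=\nabla_{\vx}\mu_{\vy}$ and $\Delta_{\vx} f_{\vy}/f_{\vy}=\Delta_{\vx}\mu_{\vy}+\|\nabla_{\vx}\mu_{\vy}\|_2^2$, divide by $f_{\vy}$, and absorb the cross term into a divergence with the modified drift $\widehat{\mH}-V(t)^2\nabla_{\vx}\mu_{\vy}$. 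One small slip in your last bookkeeping sentence: folding the cross term $V(t)^2\nabla_{\vx}\mu_{\vy}\cdot\nabla_{\vx}\widehat Q_{\vy}$ into the modified divergence costs the \emph{full} $V(t)^2\Delta_{\vx}\mu_{\vy}\widehat Q_{\vy}$, not $\tfrac12 V(t)^2\Delta_{\vx}\mu_{\vy}\widehat Q_{\vy}$, and subtracting that full amount from the Laplacian's $+\tfrac12 V(t)^2\Delta_{\vx}\mu_{\vy}$ contribution is what produces the stated $-\tfrac12 V(t)^2\Delta_{\vx}\mu_{\vy}$; your final answer is consistent with this, so the slip is purely in the prose.
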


\begin{proof}
    We begin by rewriting the PDE dynamics that need simplification:
    \begin{equation}
    \frac{\partial}{\partial t}\widehat{Q}_{\vy} = -\frac{1}{f_{\vy}}\nabla_{\vx} \cdot \left(\widehat{\mH}(\vx,t) \widehat{Q}_{\vy}f_{\vy}\right) + \frac{1}{2f_{\vy}}V(t)^2\Delta_{x}(\widehat{Q}_{\vy}f_{\vy}).    
    \end{equation}

    Let $I_1$ and $I_2$ denote the two terms on the right-hand side:
    \begin{equation}
    I_1 := -\frac{1}{f_{\vy}}\nabla_{\vx} \cdot \left(\widehat{\mH}(\vx,t) \widehat{Q}_{\vy}f_{\vy}\right), \
    I_2 := \frac{1}{2f_{\vy}}V(t)^2\Delta_{x}(\widehat{Q}_{\vy}f_{\vy}).
    \end{equation}

    Note that $\widehat{\mH}(\vx,t) :\mathbb{R}^{n+1} \rightarrow \mathbb{R}^n$ is vector-valued, while both $\widehat{Q}_{\vy}:\mathbb{R}^{n+1} \rightarrow \mathbb{R}$ and $f_{\vy}:\mathbb{R}^n \rightarrow \mathbb{R}$ are scalar-valued. A direct computation shows that the first term $I_1$ simplifies to:
    \begin{equation}
    \label{eqn: simplified I_1}
    \begin{aligned}
    I_1 &= -\frac{1}{f_{\vy}}\nabla_{\vx} \cdot \left(\widehat{\mH}(\vx,t) \widehat{Q}_{\vy}f_{\vy}\right)\\
    &=  -\frac{1}{f_{\vy}}\left(\nabla_{\vx}\cdot\left(\widehat{\mH}(\vx,t) \right)\widehat{Q}_{\vy}f_{\vy} + \widehat{\mH}(\vx,t) ^\intercal\nabla_{\vx}\left(\widehat{Q}_{\vy}f_{\vy}\right)\right)\\
    &=-\nabla_{\vx} \cdot \left(\widehat{\mH}(\vx,t)\right)\widehat{Q}_{\vy} -\frac{1}{f_{\vy}}\widehat{\mH}(\vx,t) ^\intercal\left(\nabla_{\vx}\widehat{Q}_{\vy}f_{\vy} + \widehat{Q}_{\vy}\nabla_{\vx}f_{\vy}\right)\\
    &= -\nabla_{\vx} \cdot \left(\widehat{\mH}(\vx,t)\right)\widehat{Q}_{\vy} - \widehat{\mH}(\vx,t) ^\intercal\nabla_{\vx}\widehat{Q}_{\vy}-\left(\widehat{\mH}(\vx,t) ^\intercal\nabla_{\vx}\mu_{\vy}\right)\widehat{Q}_{\vy},
    \end{aligned}    
    \end{equation}
    where the last equality above follows from the fact that $\frac{1}{f_{\vy}}\nabla_{\vx}f_{\vy} = \nabla_{\vx}\mu_{\vy}$ for $f_{\vy} = \exp(\mu_{\vy})$. 
    
    Similarly, expanding the Laplacian term $\Delta_{\vx}(\widehat{Q}_{\vy}f_{\vy})$ allows us to simplify the second term $I_2$
    \begin{equation}
    \label{eqn: simplified I_2}
    \begin{aligned}
    I_2 &= \frac{1}{2f_{\vy}}V(t)^2\left(
    \left(\Delta_{\vx}\widehat{Q}_{\vy}\right)f_{\vy} + 2\left(\nabla_{\vx}\widehat{Q}_{\vy}\right)^\intercal\nabla_{\vx}f_{\vy} + \widehat{Q}_{\vy}\left(\Delta_{\vx}f_{\vy}\right)\right)\\
    &= \frac{1}{2}V(t)^2\Delta_{\vx}\widehat{Q}_{\vy} + V(t)^2\left(\nabla_{\vx}\widehat{Q}_{\vy}\right)^\intercal\nabla_{\vx}\mu_{\vy} + \frac{1}{2}V(t)^2\left(\Delta_{\vx}\mu_{\vy} + \|\nabla_{\vx}\mu_{\vy}\|_2^2\right)\widehat{Q}_{\vy},
    \end{aligned}    
    \end{equation}
    where the last equality above follows from the fact that $\frac{1}{f_{\vy}}\Delta_{\vx} f_{\vy} = \Delta_{\vx}\mu_{\vy} + \|\nabla_{\vx}\mu_{\vy}\|_2^2$ for $f_{\vy} = \exp(\mu_{\vy})$. 
    
    Summing the two expressions in~\eqref{eqn: simplified I_1} and~\eqref{eqn: simplified I_2} then yields
    \begin{equation}
    \begin{aligned}
    \frac{\partial}{\partial t}\widehat{Q}_{\vy} &= I_1 + I_2 = -\nabla_{\vx} \cdot \left(\widehat{\mH}(\vx,t)\right)\widehat{Q}_{\vy} + \left(V(t)^2\nabla_{\vx}\mu_{\vy} - \widehat{\mH}(\vx,t) \right)^\intercal\nabla_{\vx}\widehat{Q}_{\vy}\\
    &+ \frac{1}{2}V(t)^2\Delta_{\vx}\widehat{Q}_{\vy} + \left(\frac{1}{2}V(t)^2\left(\Delta_{\vx}\mu_{\vy} + \|\nabla_{\vx}\mu_{\vy}\|_2^2\right) -\widehat{\mH}(\vx,t) ^\intercal\nabla_{\vx}\mu_{\vy}\right)\widehat{Q}_{\vy}\\
    &=-\nabla_{\vx} \cdot \left(\left(\widehat{\mH}(\vx,t)-V(t)^2\nabla_{\vx}\mu_{\vy}\right)\widehat{Q}_{\vy}\right) - V(t)^2\Delta_{\vx}\mu_{\vy}\widehat{Q}_{\vy}\\
    &+ \frac{1}{2}V(t)^2\Delta_{\vx}\widehat{Q}_{\vy} + \left(\frac{1}{2}V(t)^2\left(\|\nabla_{\vx}\mu_{\vy}\|_2^2 + \Delta_{\vx}\mu_{\vy}\right) - \widehat{\mH}(\vx,t)^\intercal\nabla_{\vx}\mu_{\vy}\right)\widehat{Q}_{\vy}
    \end{aligned}    
    \end{equation}
    which is exactly the dynamics given in~\eqref{eqn: PDE of unnormalized posterior diffusion app}, as desired. 
\end{proof}

\begin{lemma}
    \label{lem: normalizing linear term in dynamics}
    Consider the following PDE dynamics governing the evolution of some unnormalized density $\widehat{Q}(\vx,t):\mathbb{R}^n \times [0, T] \rightarrow \mathbb{R}^{+}$
    \begin{equation}
    \label{eqn: pde unnormalized in lemma}
    \frac{\partial}{\partial t}\widehat{Q}(\vx,t) = -\nabla_{\vx} \cdot \left(K(\vx,t)\widehat{Q}(\vx,t)\right) + \zeta(t)\Delta_{\vx}\widehat{Q}(\vx,t) + J(\vx,t)\widehat{Q}(\vx,t),
    \end{equation}
    where $\zeta:[0,T] \rightarrow \mathbb{R}^{+}$ and $K, J:\mathbb{R}^{d} \times [0,T] \rightarrow \mathbb{R}$. Then we consider the normalized density $\widehat{q}(\vx,t) : \mathbb{R}^n \times [0,T] \rightarrow [0,1]$ defined as below
    \begin{equation}
    \widehat{q}(\vx,t) := \frac{\widehat{Q}(\vx,t)}{\int_{\mathbb{R}^n}\widehat{Q}(\vx,t)\dif \vx}, \ t \in [0,T].    
    \end{equation}
    The PDE dynamics governing the evolution of the normalized density $\widehat{q}(\vx,t)$ is then given by
    \begin{equation}
    \label{eqn: pde normalized in lemma}
    \begin{aligned}
    \frac{\partial}{\partial t}\widehat{q}(\vx,t) &= -\nabla_{\vx} \cdot \left(K(\vx,t)\widehat{q}(\vx,t)\right) + \zeta(t)\Delta_{\vx}\widehat{q}(\vx,t) \\
    &+ \left(J(\vx,t) - \int_{\mathbb{R}^n}J(\vx,t)\widehat{q}(\vx,t)\dif \vx\right)\widehat{q}(\vx,t).
    \end{aligned}
    \end{equation}
\end{lemma}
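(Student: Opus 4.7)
The plan is to directly compute $\partial_t \widehat{q}$ via the quotient rule and substitute in the given PDE~\eqref{eqn: pde unnormalized in lemma}. Write $Z(t) := \int_{\mathbb{R}^n} \widehat{Q}(\vx,t)\dif\vx$, so that $\widehat{q}(\vx,t) = \widehat{Q}(\vx,t)/Z(t)$. Then by the quotient rule,
\begin{equation*}
\frac{\partial}{\partial t}\widehat{q}(\vx,t) = \frac{1}{Z(t)}\frac{\partial \widehat{Q}(\vx,t)}{\partial t} - \frac{\widehat{Q}(\vx,t)}{Z(t)^2}\frac{\dif Z(t)}{\dif t} = \frac{1}{Z(t)}\frac{\partial \widehat{Q}(\vx,t)}{\partial t} - \widehat{q}(\vx,t)\cdot\frac{1}{Z(t)}\frac{\dif Z(t)}{\dif t}.
\end{equation*}

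The first step is to compute $\dif Z/\dif t$ by differentiating under the integral and substituting the PDE~\eqref{eqn: pde unnormalized in lemma}. Under standard decay-at-infinity assumptions on $\widehat{Q}$ and $K\widehat{Q}$ (which are implicit in the setup, since $\widehat{Q}$ is a density up to normalization), the divergence term $\int \nabla_\vx\cdot(K\widehat{Q})\dif\vx$ vanishes by the divergence theorem, and the Laplacian term $\int \Delta_\vx \widehat{Q}\dif\vx$ vanishes similarly. Thus only the linear reaction term survives:
\begin{equation*}
\frac{\dif Z(t)}{\dif t} = \int_{\mathbb{R}^n} J(\vx,t)\widehat{Q}(\vx,t)\dif\vx,\qquad \frac{1}{Z(t)}\frac{\dif Z(t)}{\dif t} = \int_{\mathbb{R}^n} J(\vx,t)\widehat{q}(\vx,t)\dif\vx.
\end{equation*}

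The second step is to substitute $\partial_t \widehat{Q}$ from~\eqref{eqn: pde unnormalized in lemma} into the quotient-rule expression and use the fact that dividing by $Z(t)$ (which is independent of $\vx$) commutes with $\nabla_\vx\cdot$ and $\Delta_\vx$. This gives
\begin{equation*}
\frac{1}{Z(t)}\frac{\partial \widehat{Q}}{\partial t} = -\nabla_\vx\cdot\bigl(K(\vx,t)\widehat{q}(\vx,t)\bigr) + \zeta(t)\Delta_\vx \widehat{q}(\vx,t) + J(\vx,t)\widehat{q}(\vx,t).
\end{equation*}
Combining this with the expression for $Z^{-1}\dif Z/\dif t$ above yields exactly~\eqref{eqn: pde normalized in lemma}.

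The derivation is essentially a routine application of the quotient rule, so there is no real technical obstacle beyond justifying the vanishing of boundary contributions when integrating the divergence and Laplacian terms; this is standard provided $\widehat{Q}$ and its first spatial derivatives decay sufficiently fast at infinity, which holds whenever $\widehat{Q}$ corresponds to a well-posed density evolution. The main conceptual point worth emphasizing is that the nonlinear ``mean-field'' correction $-\widehat{q}\int J\widehat{q}\dif\vx$ arises purely from normalization: it is the only modification the linear reaction term $J\widehat{Q}$ picks up under passage to the normalized density, while the conservative transport and diffusion terms are unaffected.
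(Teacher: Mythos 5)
Your proposal is correct and follows essentially the same route as the paper's proof: compute $\dif Z/\dif t$ by integrating the PDE (divergence and Laplacian contributions vanish, leaving only $\int J\widehat{Q}$), then apply the quotient rule to $\widehat{q}=\widehat{Q}/Z$ and substitute. The only difference is presentational ordering; there is no gap.
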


\begin{proof}
    By using $Z(t) := \int_{\mathbb{R}^n}\widehat{Q}(\vx,t)\dif\vx$ to denote the normalizing constant for any $t \in [0,T]$, we can then compute the time derivative of $Z(t)$ by plugging in~\eqref{eqn: pde unnormalized in lemma} as follows
    \begin{equation}
    \label{eqn: time derivative of normalizing constant}
    \begin{aligned}
    \frac{\partial}{\partial t}Z(t) &= \frac{\partial}{\partial t}\left(\int_{\mathbb{R}^n}\widehat{Q}(\vx,t)\dif\vx\right) = \int_{\mathbb{R}^n}\left(\frac{\partial}{\partial t}\widehat{Q}(\vx,t)\right)\dif\vx\\
    &=\int_{\mathbb{R}^n}\left(-\nabla_{\vx} \cdot \left(K(\vx,t)\widehat{Q}(\vx,t)\right) + \zeta(t)\Delta_{\vx}\widehat{Q}(\vx,t) + J(\vx,t)\widehat{Q}(\vx,t)\right)\dif\vx\\
    &=\int_{\mathbb{R}^n}J(\vx,t)\widehat{Q}(\vx,t)\dif\vx + \int_{\mathbb{R}^n}\nabla_{\vx} \cdot \left(\zeta(t)\nabla_{\vx}\widehat{Q}(\vx,t) - K(\vx,t)\widehat{Q}(\vx,t)\right)\dif\vx\\
    &=\int_{\mathbb{R}^n}J(\vx,t)\widehat{Q}(\vx,t)\dif\vx.
    \end{aligned}    
    \end{equation}
    Furthermore, we may rewrite the normalized density as $\widehat{q}(\vx,t) = \frac{1}{Z(t)}\widehat{Q}(\vx,t)$ and differentiate the expression with respect to $t$, which yields
    \begin{equation*}
    \begin{aligned}
    \frac{\partial}{\partial t}\widehat{q}(\vx,t) &= \frac{1}{Z(t)^2}\left(\left(\frac{\partial}{\partial t}\widehat{Q}(\vx,t)\right)Z(t) - \left(\frac{\partial}{\partial t}Z(t)\right)\widehat{Q}(\vx,t)\right)\\   
    &= \frac{1}{Z(t)}\left(\frac{\partial}{\partial t}\widehat{Q}(\vx,t)\right) - \frac{1}{Z(t)}\left(\frac{\partial}{\partial t}Z(t)\right)\left(\frac{1}{Z(t)}\widehat{Q}(\vx,t)\right)\\
    &= \frac{1}{Z(t)}\left(-\nabla_{\vx} \cdot \left(K(\vx,t)\widehat{Q}(\vx,t)\right) + \zeta(t)\Delta_{\vx}\widehat{Q}(\vx,t) + J(\vx,t)\widehat{Q}(\vx,t)\right)\\
    &-\frac{1}{Z(t)}\left(\int_{\mathbb{R}^n}J(\vx,t)\widehat{Q}(\vx,t)\dif\vx\right)\widehat{q}(\vx,t)\\
    &= -\nabla_{\vx} \cdot \left(K(\vx,t)\widehat{q}(\vx,t)\right) + \zeta(t)\Delta_{\vx}\widehat{q}(\vx,t) + \left(J(\vx,t) - \int_{\mathbb{R}^n}J(\vx,t)\widehat{q}(\vx,t)\dif \vx\right)\widehat{q}(\vx,t).
    \end{aligned}
    \end{equation*}
    where the second last equality above follows from~\eqref{eqn: pde unnormalized in lemma} and~\eqref{eqn: time derivative of normalizing constant} the last equality is deduced from the definition of the normalized density $\widehat{q}(\vx,t)$. This concludes our proof.
\end{proof}

\begin{remark}
    By setting 
    $$
    K(\vx,t) := \widehat{\mH}(\vx,t)-V(t)^2\nabla_{\vx}\mu_{\vy}(\vx), \quad \zeta(t):=\frac{1}{2}V(t)^2,
    $$ 
    and 
    $$J(\vx,t) := \frac{1}{2}V(t)^2\left(\|\nabla_{\vx}\mu_{\vy}(\vx)\|_2^2 - \Delta_{\vx}\mu_{\vy}(\vx)\right) - \widehat{\mH}(\vx,t)^\intercal\nabla_{\vx}\mu_{\vy}(\vx),$$ 
    one can use Lemma~\ref{lem: normalizing linear term in dynamics} to deduce~\eqref{eqn: PDE of posterior diffusion} from~\eqref{eqn: PDE of unnormalized posterior diffusion}.     
\end{remark}

\begin{lemma}
    \label{lem: derivation of single particle dynamics}    
    Consider a single particle $(\vx_t,\beta_t)$ governed by
    \begin{equation}
    \label{eqn: PDE soln via single particle app} 
    \begin{cases}
    \dif \vx_t &= \left(\widehat{\mH}(\vx_t,t)-V(t)^2\nabla_{\vx}\mu_{\vy}(\vx_t)\right)\dif t + V(t)\dif \vw_t,\\
    \dif \beta_t &= \left(U(\vx_t,t) - \widehat{\mH}(\vx_t,t)^\intercal\nabla_{\vx}\mu_{\vy}(\vx_t)\right)\beta_t \dif t \\
    &- \left(\displaystyle\int_{\mathbb{R}^n}\left(U(\vx,t) - \widehat{\mH}(\vx,t)^\intercal\nabla_{\vx}\mu_{\vy}(\vx)\right)\left(P_{\beta}\gamma_t\right)(\vx)\dif \vx\right)\beta_t \dif t,    
    \end{cases}
    \end{equation}
    with initial condition $\vx_0 = \vx^\ast$ and $\beta_0 = 1$, where $\vx^\ast$ is sampled from the initial posterior distribution $\widehat{q}_{\vy}(\vx,0)$, $(\vw_t)_{t\geq 0}$ is a standard Brownian motion in $\mathbb{R}^n$, $\gamma_t(\vx,\beta)$ denotes the joint probability distribution of $(\vx_t,\beta_t)$ on $\mathbb{R}^n \times \mathbb{R}$, 
    $$
        P_{\beta}\gamma_t(\vx) := \int_{\mathbb{R}}\beta\gamma_{t}(\vx,\beta)\dif\beta
    $$ 
    denotes the weighted projection of $\gamma_t$ onto $\vx$, and 
    $$
        U(\vx,t) := \frac{1}{2}V(t)^2\left(\|\nabla_{\vx}\mu_{\vy}(\vx)\|_2^2-\Delta_{\vx}\mu_{\vy}(\vx)\right).
    $$ 
    Then we have that $P_{\beta}\gamma_t(\vx) = \widehat{q}_{\vy}(\vx,t)$ for any $\vx \in \mathbb{R}^n$ and $t \in [0,T]$, \emph{i.e.} $P_{\beta}\gamma_t(\cdot)$ solves the following PDE:
    \begin{equation}
            \label{eqn: PDE of posterior diffusion app2}
            \begin{aligned}
            \frac{\partial}{\partial t}&\widehat{q}_{\vy} = -\nabla_{\vx} \cdot \left(\left(\widehat{\mH}(\vx,t)-V(t)^2\nabla_{\vx}\mu_{\vy}\right)\widehat{q}_{\vy}\right)
            +\frac{1}{2}V(t)^2\Delta_{x}\widehat{q}_{\vy}\\
            &+\left(U(\vx,t) - \widehat{\mH}(\vx,t)^\intercal\nabla_{\vx}\mu_{\vy} - \int_{\mathbb{R}^n}\left(U(\vx,t) - \widehat{\mH}(\vx,t)^\intercal\nabla_{\vx}\mu_{\vy}\right)\widehat{q}_{\vy}\dif \vx\right)\widehat{q}_{\vy}.
            \end{aligned}
        \end{equation}
\end{lemma}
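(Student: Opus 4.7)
The plan is to derive the Fokker--Planck equation governing the joint law $\gamma_t(\vx,\beta)$ of the McKean--Vlasov-type system \eqref{eqn: PDE soln via single particle app}, then integrate against $\beta$ in the weight coordinate to recover the PDE \eqref{eqn: PDE of posterior diffusion app2} satisfied by the weighted marginal $P_\beta\gamma_t$. Writing $A(\vx,t) := \widehat{\mH}(\vx,t) - V(t)^2\nabla_\vx \mu_\vy(\vx)$ for the $\vx$-drift and
\[
B_t(\vx;\gamma_t) := U(\vx,t) - \widehat{\mH}(\vx,t)^\intercal \nabla_\vx \mu_\vy(\vx) - \int_{\mathbb{R}^n}\left(U - \widehat{\mH}^\intercal \nabla_\vx \mu_\vy\right) P_\beta\gamma_t(\vx)\dif\vx
\]
for the multiplicative growth rate of $\beta_t$, Itô's formula applied to test functions $\varphi \in C_c^\infty(\mathbb{R}^n\times\mathbb{R})$ yields, in the weak sense,
\[
\partial_t \gamma_t = -\nabla_\vx \cdot (A\,\gamma_t) + \tfrac{1}{2}V(t)^2 \Delta_\vx \gamma_t - \partial_\beta\bigl(B_t(\vx;\gamma_t)\,\beta\,\gamma_t\bigr).
\]
The diffusion acts only in $\vx$ because the $\beta$-equation is a random ODE driven by $\vx_t$ (no Brownian term in $\beta$).

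Next, I would multiply both sides by $\beta$ and integrate over $\beta \in \mathbb{R}$. Since the $\vx$-divergence and Laplacian commute with integration in $\beta$, those two terms contribute $-\nabla_\vx \cdot(A\,P_\beta\gamma_t) + \tfrac{1}{2}V(t)^2 \Delta_\vx P_\beta\gamma_t$. A single integration by parts in $\beta$ converts the remaining piece into
\[
-\int_{\mathbb{R}} \beta\,\partial_\beta\bigl(B_t\,\beta\,\gamma_t\bigr)\dif\beta = \int_{\mathbb{R}} B_t\,\beta\,\gamma_t\,\dif\beta = B_t(\vx;\gamma_t)\,P_\beta\gamma_t(\vx),
\]
since $B_t$ does not depend on $\beta$. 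Combining these contributions shows that $\pi_t(\vx) := P_\beta\gamma_t(\vx)$ solves exactly the same PDE \eqref{eqn: PDE of posterior diffusion app2} as $\widehat{q}_\vy(\cdot,t)$, with the self-consistent integral $\int(U - \widehat{\mH}^\intercal\nabla_\vx\mu_\vy)\,\pi_t\,\dif\vx$ entering through $B_t$. The initial data also agree: $\vx_0\sim\widehat{q}_\vy(\cdot,0)$ together with $\beta_0 = 1$ gives $\gamma_0 = \widehat{q}_\vy(\cdot,0)\otimes\delta_{\beta=1}$, so $P_\beta\gamma_0 = \widehat{q}_\vy(\cdot,0)$. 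Well-posedness of \eqref{eqn: PDE of posterior diffusion app2} under the standing hypotheses then forces $P_\beta\gamma_t = \widehat{q}_\vy(\cdot,t)$ for all $t\in[0,T]$.

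The main obstacle I anticipate is justifying the integration by parts in $\beta$, which requires controlling the tails of $\gamma_t(\vx,\beta)$ as $|\beta|\to\infty$. Fortunately the weight dynamics are purely multiplicative, so $\beta_t = \exp\bigl(\int_0^t B_s(\vx_s;\gamma_s)\dif s\bigr)$, and under Assumption~\ref{assump: boundedness and Lipschitz of I} the exponent is uniformly bounded on $[0,T]$. Hence each particle's weight is confined almost surely to a compact interval $[e^{-CT},e^{CT}]$, so $\gamma_t$ is compactly supported in $\beta$ and all boundary contributions vanish identically. The remaining well-posedness of the McKean--Vlasov SDE and of the associated nonlinear PDE, under the stated boundedness and Lipschitz conditions, can be established by a standard Picard iteration on the map $\gamma \mapsto P_\beta \gamma$, closing the argument.
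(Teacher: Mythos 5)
Your approach is essentially the same as the paper's: derive the Fokker--Planck equation for the joint law $\gamma_t(\vx,\beta)$ of the McKean--Vlasov system via It\^o's formula, then project onto $P_\beta\gamma_t$ by pairing against $\beta$ and integrating by parts in the weight variable. The paper phrases both steps in semigroup/test-function language (taking $\varphi(\vx,\beta)=\beta\psi(\vx)$ as the test function) while you multiply the joint equation by $\beta$ and integrate directly, but the computation is identical; your added remarks on the compact support of $\gamma_t$ in $\beta$ and on invoking well-posedness to conclude $P_\beta\gamma_t=\widehat{q}_\vy(\cdot,t)$ are sound extra details the paper leaves implicit.
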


The main idea is to derive the PDE governing the evolution of the joint distribution $\gamma_{t}(\vx,\beta)$, which then leads to a PDE for its weighted projection $P_{\beta}\gamma_{t}(\vx)$. Our derivation uses semigroup theory. 

\begin{definition}[Semigroup Operator]
    \label{def:semigroup}
    For a single particle $(\vx_t,\beta_t)$ with initial condition $(\vx^\ast,\beta^\ast)$, and any suitable test function $\phi:\mathbb{R}^n \times \mathbb{R} \rightarrow \mathbb{R}$, the semigroup operator $\gT^{(\vx,\beta)}_{t}$ is defined as:
    \begin{equation}
        \gT^{(\vx,\beta)}_{t}\phi(\vx^\ast,\beta^\ast) := \mathbb{E}\left[\phi(\vx_t,\beta_t) \ | \ (\vx_0,\beta_0) = (\vx^\ast, \beta^\ast)\right].
        \label{eqn: defn of semigroup}
    \end{equation}
\end{definition}

\begin{definition}[Infinitesimal Generator]
    \label{def:infinitesimal_generator}
    Let $\sI$ be the identity operator. The infinitesimal generator $\Ls^{(\vx,\beta)}$ associated with the semigroup $\gT^{(\vx,\beta)}_{t}$ is defined for any suitable test function $\phi$ as:
    \begin{equation}
        \Ls^{(\vx,\beta)}\phi\left(\vx^\ast,\beta^\ast\right) := \lim_{\Delta t \rightarrow 0^{+}}\frac{1}{\Delta t}\left(\gT^{(\vx,\beta)}_{\Delta t}\phi(\vx^\ast,\beta^\ast) - \phi(\vx^\ast,\beta^\ast)\right).
        \label{eqn: defn of infinitesimal generator}
    \end{equation}
\end{definition}

For any test function $\phi:\mathbb{R}^n \times \mathbb{R} \rightarrow \mathbb{R}$ and input $(\vx^\ast,\beta)$, the following communtativity holds:
\begin{equation}
    \label{eqn: commutativity of semigroups}
    \begin{aligned}
    \gT^{(\vx,\beta)}_{t_2}\circ \gT^{(\vx,\beta)}_{t_1}\phi(\vx^\ast,\beta^\ast) &= \gT^{(\vx,\beta)}_{t_1}\circ\gT^{(\vx,\beta)}_{t_2}\phi(\vx^\ast,\beta^\ast) \\
    &= \mathbb{E}\left[\phi(\vx_{t_1+t_2},\beta_{t_1+t_2}) \ | \ (\vx_0,\beta_0) = (\vx^\ast, \beta^\ast)\right],   
    \end{aligned}    
\end{equation}
demonstrating that $\gT^{(\vx,\beta)}_{t_1} \circ \gT^{(\vx,\beta)}_{t_2} = \gT^{(\vx,\beta)}_{t_2} \circ \gT^{(\vx,\beta)}_{t_1}$ for any times $t_1$ and $t_2$. 

Combining~\eqref{eqn: commutativity of semigroups} with the definition of the infinitesimal generator in~\eqref{eqn: defn of infinitesimal generator}, we can show that for any input $(\vx^\ast,\beta)$ and test function $\phi:\mathbb{R}^n \times \mathbb{R} \rightarrow \mathbb{R}$,
\begin{equation}
    \label{eqn: commutativity of semigroup and infinitesimal generator}
    \begin{aligned}
    &\gT^{(\vx,\beta)}_{t}\circ \Ls^{(\vx,\beta)}\phi \left(\vx^\ast,\beta^\ast\right) = \gT^{(\vx,\beta)}_{t}\lim_{\Delta t \rightarrow 0^{+}}\frac{1}{\Delta t}\left(\gT^{(\vx,\beta)}_{\Delta t}-\sI\right)\phi(\vx^\ast,\beta^\ast) \\
    =& \lim_{\Delta t \rightarrow 0^{+}}\frac{1}{\Delta t}\gT^{(\vx,\beta)}_{t}\left(\gT^{(\vx,\beta)}_{\Delta t}-\sI\right)\phi(\vx^\ast,\beta^\ast)\\
    =&\lim_{\Delta t \rightarrow 0^{+}}\frac{1}{\Delta t}\left(\gT^{(\vx,\beta)}_{\Delta t}-\sI\right)\gT^{(\vx,\beta)}_{t}\phi(\vx^\ast,\beta^\ast) \\
    =& \Ls^{(\vx,\beta)}\circ\gT^{(\vx,\beta)}_{t}\phi\left(\vx^\ast,\beta^\ast\right),
    \end{aligned}    
\end{equation}
\emph{i.e.}, the semigroup $\gT^{(\vx,\beta)}_{t}$ also commutes with the infinitesimal generator $\Ls^{(\vx,\beta)}$ for any time $t$.

Moreover, for any $d \in \mathbb{Z}^{+}$ and two functions $\varphi^{(1)},\varphi^{(2)}:\mathbb{R}^{d} \rightarrow \mathbb{R}$, we use 
$$\left\langle\varphi^{(1)},\varphi^{(2)}\right\rangle_{L^2(\mathbb{R}^d)}:= \int_{\mathbb{R}^d}\varphi^{(1)}(\vx)\varphi^{(2)}(\vx)\dif\vx$$
to denote the inner product between $\varphi^{(1)}$ and $\varphi^{(2)}$. Should no confusion arise, we omit the subscript $L^2(\R^d)$ in the following.

\begin{proposition}
    For any test function $\varphi: \mathbb{R}^n \times \mathbb{R} \rightarrow \mathbb{R}$, the joint distribution $\gamma_t = \gamma_t(\vx, \beta)$ satisfies the following PDE:
    \begin{equation}
        \frac{\partial}{\partial t}\gamma_t = -\nabla_{\vx} \cdot \left(\mK_{t}\gamma_t\right) - \frac{\partial}{\partial \beta}\left(b_t \gamma_t\right)+\frac{1}{2}V(t)^2\Delta_{\vx}\gamma_t,
        \label{eqn: weak form of joint PDE}
    \end{equation}
    with the initial condition $\gamma_0(\vx,\beta) =\widehat{q}_{\vy}(\vx,0) \times \delta_{\beta = 1}$.
\end{proposition}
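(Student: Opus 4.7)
The plan is to derive the stated PDE as the Kolmogorov forward (Fokker--Planck) equation associated with the SDE system~\eqref{eqn: PDE soln via single particle app}, using the semigroup and generator formalism already introduced. The key structural observation is that \eqref{eqn: PDE soln via single particle app} carries Brownian noise only in the $\vx$-component, while $\beta_t$ evolves through an ODE whose right-hand side depends on $(\vx_t,\beta_t)$ and on $\gamma_t$ itself; consequently the joint infinitesimal generator contains $\Delta_{\vx}$ but no $\partial_\beta^2$ term and no mixed second-order term.

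First, for a smooth compactly supported test function $\varphi:\R^n\times\R\to\R$, I would apply It\^o's formula to $\varphi(\vx_t,\beta_t)$. Writing $\mK_t(\vx) := \widehat{\mH}(\vx,t)-V(t)^2\nabla_{\vx}\mu_{\vy}(\vx)$ for the $\vx$-drift and $b_t(\vx,\beta) := \bigl[U(\vx,t) - \widehat{\mH}(\vx,t)^\intercal\nabla_{\vx}\mu_{\vy}(\vx) - \int_{\R^n}(U - \widehat{\mH}^\intercal\nabla_{\vx}\mu_{\vy})\,P_\beta\gamma_t\,\dif\vx\bigr]\beta$ for the $\beta$-drift appearing in \eqref{eqn: PDE soln via single particle app}, It\^o's rule yields
\[
\dif\varphi(\vx_t,\beta_t) = \bigl[\mK_t\cdot\nabla_{\vx}\varphi + b_t\,\partial_\beta\varphi + \tfrac12 V(t)^2\Delta_{\vx}\varphi\bigr]\dif t + V(t)\,\nabla_{\vx}\varphi\cdot\dif\vw_t.
\]
Taking expectations annihilates the martingale term, and using $\E[\varphi(\vx_t,\beta_t)] = \langle \gamma_t,\varphi\rangle$ delivers the weak evolution $\tfrac{\dif}{\dif t}\langle \gamma_t,\varphi\rangle = \langle \gamma_t,\,\Ls^{(\vx,\beta)}\varphi\rangle$, where $\Ls^{(\vx,\beta)}\varphi = \mK_t\cdot\nabla_{\vx}\varphi + b_t\,\partial_\beta\varphi + \tfrac12 V(t)^2\Delta_{\vx}\varphi$ is exactly the infinitesimal generator from Definition~\ref{def:infinitesimal_generator}.

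Next, integration by parts in $\vx$ (using that $\int \Delta_{\vx}\varphi\,\gamma_t = \int \varphi\,\Delta_{\vx}\gamma_t$ and $\int \mK_t\cdot\nabla_{\vx}\varphi\,\gamma_t = -\int \varphi\,\nabla_{\vx}\cdot(\mK_t\gamma_t)$) together with integration by parts in $\beta$ transfers all derivatives onto $\gamma_t$, producing in the distributional sense
\[
\partial_t \gamma_t = -\nabla_{\vx}\cdot(\mK_t\gamma_t) - \partial_\beta(b_t\gamma_t) + \tfrac12 V(t)^2\Delta_{\vx}\gamma_t,
\]
which is exactly \eqref{eqn: weak form of joint PDE}. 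Boundary terms vanish because $\varphi$ has compact support and $\gamma_t$ is a probability measure with appropriate decay. The initial condition $\gamma_0 = \widehat{q}_{\vy}(\cdot,0)\otimes\delta_{\beta=1}$ follows immediately from $\vx_0\sim \widehat{q}_{\vy}(\cdot,0)$ and the deterministic choice $\beta_0 = 1$.

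The main subtlety is that $b_t$ is a \emph{McKean--Vlasov} drift depending on $\gamma_t$ itself through the weighted projection $P_\beta\gamma_t$, so the underlying SDE is nonlinear in the law. I would handle this by viewing the already-realized flow $(P_\beta\gamma_s)_{s\le t}$ as fixed and treating $b_t$ as a deterministic time-dependent coefficient when applying It\^o; the resulting Fokker--Planck equation is then self-consistent. A clean justification requires enough regularity of $b_t$ (e.g.\ boundedness and Lipschitz continuity, which is essentially guaranteed by Assumption~\ref{assump: boundedness and Lipschitz of I}) so that the nonlinear SDE is well-posed and the formal computation above is rigorous, paralleling standard arguments from the propagation-of-chaos literature.
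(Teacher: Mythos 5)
Your proposal is correct and follows essentially the same route as the paper: both apply It\^o's formula to $\varphi(\vx_t,\beta_t)$ to obtain the generator $\Ls^{(\vx,\beta)}\varphi = \mK_t\cdot\nabla_{\vx}\varphi + b_t\,\partial_\beta\varphi + \tfrac12 V(t)^2\Delta_{\vx}\varphi$, pass to expectations to get $\tfrac{\dif}{\dif t}\langle\gamma_t,\varphi\rangle = \langle\gamma_t,\Ls^{(\vx,\beta)}\varphi\rangle$, and integrate by parts to transfer the derivatives onto $\gamma_t$. You simply streamline the paper's explicit semigroup-commutativity machinery into the direct It\^o-plus-expectation argument, and you additionally flag the McKean--Vlasov dependence of $b_t$ on $\gamma_t$, which the paper handles implicitly in the same spirit.
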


\begin{proof}
    
    For any fixed time $t$ and test function $\varphi$, integrating the function $\gT^{(\vx,\beta)}_{t}\varphi$ over the initial joint distribution $\gamma_{0}(\vx,\beta)$ yields
    \begin{equation}
    \label{eqn: change of time in measure}
    \begin{aligned}
    \left\langle \gT^{(\vx,\beta)}_{t}\varphi,\gamma_0\right\rangle &= \int_{\mathbb{R}^n \times \mathbb{R}}\gT^{(\vx,\beta)}_{t}\varphi(\vx^\ast,\beta^\ast)\gamma_{0}\left(\vx^\ast,\beta^\ast\right)\dif\vx^\ast\dif\beta^\ast\\
    &= \int_{\mathbb{R}^n \times \mathbb{R}} \mathbb{E}\left[\varphi(\vx_t,\beta_t) \ | \ (\vx_0,\beta_0) = (\vx^\ast, \beta^\ast)\right]\gamma_{0}\left(\vx^\ast,\beta^\ast\right)\dif\vx^\ast\dif\beta^\ast\\
    &= \int_{\mathbb{R}^n \times \mathbb{R}}\varphi(\vx^\ast,\beta^\ast)\gamma_{t}(\vx^\ast,\beta^\ast)\dif\vx^\ast\dif\beta^\ast = \left\langle\varphi, \gamma_t\right\rangle.
    \end{aligned}    
    \end{equation}

    We integrate on both sides of~\eqref{eqn: commutativity of semigroup and infinitesimal generator} over the initial joint distribution $\gamma_{0}(\vx,\beta)$ and plug in~\eqref{eqn: change of time in measure}, which gives us that for any test function $\varphi:\mathbb{R}^n \times \mathbb{R} \rightarrow \mathbb{R}$,
    \begin{equation}
    \label{eqn: intermediate derivaion of weak form for joint PDE}
    \begin{aligned}
    &\left\langle\varphi, \frac{\partial}{\partial t}\gamma_t\right\rangle = \frac{\dif}{\dif t}\left\langle\varphi,\gamma_t\right\rangle = \frac{\dif}{\dif t}\left\langle\gT^{(\vx,\beta)}_{t}\varphi, \gamma_0\right\rangle \\
    =& \left\langle\frac{\partial}{\partial t}gT^{(\vx,\beta)}_{t}\varphi, \gamma_0\right\rangle 
    = \left\langle\gT^{(\vx,\beta)}_{t}\circ\Ls^{(\vx,\beta)}\varphi, \gamma_0\right\rangle 
    = \left\langle\Ls^{(\vx,\beta)}\varphi, \gamma_t\right\rangle.  
    \end{aligned}    
    \end{equation}

    To further simplify the term on the RHS above, we need to compute the explicit form of the infinitesimal generator defined in~\eqref{eqn: defn of infinitesimal generator}. In fact, applying It\^{o}'s formula to the joint SDE~\eqref{eqn: PDE soln via single particle} yields the following identity for any test function $\varphi:\mathbb{R}^n \times \mathbb{R} \rightarrow \mathbb{R}$,
    \begin{equation}
    \label{eqn: Ito's formula applied to joint SDE}    
    \dif \varphi(\vx_t,\beta_t) = \left(\left(\nabla_{\vx}\varphi\right)^\intercal \mK_t + \frac{\partial \varphi}{\partial \beta}b_t + \frac{1}{2}V(t)^2\Tr\left(\nabla_{\vx}^2\varphi\right)\right)\dif t + V(t)\left(\left(\nabla_{\vx}\varphi\right)^\intercal\dif\vw_t\right),
    \end{equation}
    where $(\vw_t)_{t\geq 0}$ is a standard Brownian motion on $\mathbb{R}^n$ and the two functions $\mK_t:\mathbb{R}^d \rightarrow \mathbb{R}$ and $b_t:\mathbb{R}^d \times \mathbb{R} \rightarrow \mathbb{R}$ correspond to the drift terms in~\eqref{eqn: PDE soln via single particle}, \emph{i.e.},
    \begin{equation}
    \label{eqn: notation for the sde drifts}
    \begin{aligned}
    \mK_t(\vx) &= \widehat{\mH}(\vx,t)-V(t)^2\nabla_{\vx}\mu_{\vy}(\vx),\\
    b_t(\vx,\beta) &= \left(U(\vx,t) - \widehat{\mH}(\vx,t)^\intercal\nabla_{\vx}\mu_{\vy}(\vx)\right)\beta\\
    &- \left(\int_{\mathbb{R}^n}\left(U(\vx^\ast,t) - \widehat{\mH}(\vx^\ast,t)^\intercal\nabla_{\vx}\mu_{\vy}(\vx^\ast)\right)\left(P_{\beta}\gamma_t\right)(\vx^\ast)\dif \vx^\ast\right)\beta.
    \end{aligned}
    \end{equation}

    Taking expectation on both sides of~\eqref{eqn: Ito's formula applied to joint SDE} then yields the explicit expression of the infinitesimal generator for any test function $\varphi:\mathbb{R}^n \times \mathbb{R} \rightarrow \mathbb{R}$ as below:
    \begin{equation}
    \label{eqn: explicit expression of infinitesimal generator}
    \Ls^{(\vx,\beta)}\varphi = \left(\nabla_{\vx}\varphi\right)^\intercal\mK_t + \frac{\partial \varphi}{\partial \beta}b_t + \frac{1}{2}V(t)^2\Delta_{\vx}\varphi.
    \end{equation}

    Below we use $x_i$ and $\mK_{t,i}$ to denote the $i$-th component of $\vx$ and $\mK_t$ for any $i \in [n]$. By substituting~\eqref{eqn: explicit expression of infinitesimal generator} into the RHS of~\eqref{eqn: intermediate derivaion of weak form for joint PDE}, we obtain that for any test function $\varphi:\mathbb{R}^n \times \mathbb{R} \rightarrow \mathbb{R}$,
    \begin{equation}
    \label{eqn: derivation of weak form for joint density}
    \begin{aligned}
    &\left\langle\Ls^{(\vx,\beta)}\varphi, \gamma_t\right\rangle = \left\langle\left(\nabla_{\vx}\varphi\right)^\intercal\mK_t + \frac{\partial \varphi}{\partial \beta}b_t + \frac{1}{2}V(t)^2\Delta_{\vx}\varphi, \gamma_t\right\rangle\\
    =& \sum_{i=1}^{n}\left\langle\frac{\partial \varphi}{\partial x_i},\mK_{t,i} \gamma_{t} \right\rangle + \left\langle\frac{\partial \varphi}{\partial \beta},b_t\gamma_{t} \right\rangle + \frac{1}{2}V(t)^2\sum_{i=1}^{n}\left\langle\frac{\partial^2 \varphi}{\partial x_i^2}, \gamma_{t} \right\rangle\\
    =& -\left\langle\varphi, \sum_{i=1}^{n}\frac{\partial}{\partial x_i}\left(\mK_{t,i} \gamma_{t}\right) + \frac{\partial}{\partial \beta}\left(b_t \gamma_{t}\right) + \frac{1}{2}V(t)^2\Delta_{\vx}\gamma_{t} \right\rangle\\
    =& \left\langle\varphi, -\nabla_{\vx} \cdot \left(\mK_{t}\gamma_t\right) - \frac{\partial}{\partial \beta}\left(b_t \gamma_t\right)+\frac{1}{2}V(t)^2\Delta_{\vx}\gamma_t\right\rangle,
    \end{aligned}    
    \end{equation}
    where the second last equality above follows from integration by parts. 
    
    Substituting the last expression in~\eqref{eqn: derivation of weak form for joint density} above into~\eqref{eqn: intermediate derivaion of weak form for joint PDE} then gives us the weak form of the PDE associated with the joint distribution $\gamma_t$ in~\eqref{eqn: weak form of joint PDE}.
\end{proof}

\begin{proof}[Proof of Lemma~\ref{lem: derivation of single particle dynamics}]

    By defining 
    $$\gamma^{\mP}_t(\vx) := P_{\beta}\gamma_t(\vx) = \int_{\mathbb{R}}\beta\gamma_{t}(\vx,\beta)\dif\beta$$
    to be the weighted projection of $\gamma_t$, we then have that $\gamma^{\mP}_0(\vx) = \widehat{q}_{\vy}(\vx,0)$. 
    Below we proceed to derive the PDE govering the evolution of $\gamma^{\mP}_t$ based on~\eqref{eqn: weak form of joint PDE}. 
    
    For any test function $\psi:\mathbb{R}^n \rightarrow \mathbb{R}$, taking $\varphi(\vx,\beta) = \beta\psi(\vx):\mathbb{R}^{n} \times \mathbb{R} \rightarrow \mathbb{R}$ in the weak form derived in~\eqref{eqn: intermediate derivaion of weak form for joint PDE} and~\eqref{eqn: derivation of weak form for joint density} yields
    \begin{equation}
    \label{eqn: intermediate weak form for PDE of projected measure}
    \begin{aligned}
    &\left\langle\psi, \frac{\partial}{\partial t}\gamma^{\mP}_t\right\rangle = \frac{\dif}{\dif t}\left\langle\psi,\gamma^{\mP}_t\right\rangle= \frac{\dif}{\dif t}\left(\int_{\mathbb{R}^n}\psi(\vx)\left(\int_{\mathbb{R}}\beta\gamma_{t}(\vx,\beta)\dif\beta\right)\dif\vx\right) \\
    =& \frac{\dif}{\dif t}\left\langle\varphi,\gamma_t\right\rangle = \left\langle\varphi, \frac{\partial}{\partial t}\gamma_t\right\rangle
    = \left\langle\varphi, -\nabla_{\vx} \cdot \left(\mK_{t}\gamma_t\right) - \frac{\partial}{\partial \beta}\left(b_t \gamma_t\right)+\frac{1}{2}V(t)^2\Delta_{\vx}\gamma_t\right\rangle\\
    =& -\left\langle \varphi, \nabla_{\vx} \cdot \left(\mK_{t}\gamma_t\right) \right\rangle -\left\langle \varphi, \frac{\partial}{\partial \beta}\left(b_t \gamma_t\right) \right\rangle + \frac{1}{2}V(t)^2\left\langle \varphi, \Delta_{\vx}\gamma_t \right\rangle.
    \end{aligned}    
    \end{equation}

    For the first and third terms in the last expression of~\eqref{eqn: intermediate weak form for PDE of projected measure}, we can further simplify them as follows
    \begin{equation}
    \label{eqn: derivation of first term}
    \begin{aligned}
    \left\langle \varphi, \nabla_{\vx} \cdot \left(\mK_{t}\gamma_t\right) \right\rangle &= \int_{\mathbb{R}}\beta\left(\int_{\mathbb{R}^n}\psi(\vx)\left(\nabla_{\vx} \cdot\left(\mK_t(\vx) \gamma_t(\vx,\beta)\right)\right)\dif\vx\right)\dif\beta\\
    &= \int_{\mathbb{R}^n}\psi(\vx)\left(\nabla_{\vx} \cdot\left(\mK_t(\vx) \left(\int_{\mathbb{R}}\beta\gamma_t(\vx,\beta)\dif\beta\right)\right)\right)\dif\vx\\
    &= \int_{\mathbb{R}^n}\psi(\vx)\left(\nabla_{\vx} \cdot\left(\mK_t(\vx)\gamma^{\mP}_t(\vx)\right)\right)\dif\vx \\
    &=\left\langle\psi,\nabla_{\vx}\cdot(\mK_{t}\gamma^{\mP}_t)\right\rangle
    \end{aligned}    
    \end{equation}
    and 
    \begin{equation}
        \label{eqn: derivation of third term}
        \begin{aligned}
            \left\langle \varphi, \Delta_{\vx}\gamma_t \right\rangle &= \int_{\mathbb{R}}\beta\left(\int_{\mathbb{R}^n}\psi(\vx)\left(\Delta_{\vx}\gamma_t(\vx,\beta)\right)\dif\vx\right)\dif\beta\\
            &= \int_{\mathbb{R}^n}\psi(\vx)\left(\Delta_{\vx}\left(\int_{\mathbb{R}}\beta\gamma_t(\vx,\beta)\dif\beta\right)\right)\dif\vx\\
            &= \int_{\mathbb{R}^n}\psi(\vx)\left(\Delta_{\vx}\gamma^{\mP}_t\right)(\vx)\dif\vx = \left\langle\psi,\Delta_{\vx}\gamma^{\mP}_t\right\rangle,
        \end{aligned}
    \end{equation}
    respectively.

    Moreover, for the second term in the last expression of~\eqref{eqn: intermediate weak form for PDE of projected measure}, we use 
    \begin{equation}
    \label{eqn: defn of J_t}
    \mJ_t(\vx) := U(\vx,t) - \widehat{\mH}(\vx,t)^\intercal\nabla_{\vx}\mu_{\vy}(\vx)
    \end{equation} 
    to denote the integrand in the definition of $b_t$, which helps us rewrite $b_t$ as follows
    $$b_t(\vx,\beta) = \left(\mJ_t(\vx)-\int_{\mathbb{R}^n}\mJ_t(\vx^\ast)\gamma^{\mP}_t(\vx^\ast)\dif\vx^\ast\right)\beta.$$

    Then we apply integration by parts again to deduce that
    \begin{equation}
    \label{eqn: derivation of reweighting term}
    \begin{aligned}
    &\left\langle \varphi, \frac{\partial}{\partial \beta}\left(b_t \gamma_t\right) \right\rangle = -\left\langle \frac{\partial}{\partial \beta}\varphi, b_t \gamma_t \right\rangle = -\left\langle \psi(\vx), b_t \gamma_t \right\rangle\\
    = &-\int_{\mathbb{R}}\psi(\vx)\gamma_t(\vx,\beta)\left(\mJ_t(\vx)-\int_{\mathbb{R}^n}\mJ_t(\vx^\ast)\gamma^{\mP}_t(\vx^\ast)\dif\vx^\ast\right)\beta\dif\vx\dif\beta\\
    = &-\int_{\mathbb{R}}\psi(\vx)\gamma^{\mP}_t(\vx)\left(\mJ_t(\vx)-\int_{\mathbb{R}^n}\mJ_t(\vx^\ast)\gamma^{\mP}_t(\vx^\ast)\dif\vx^\ast\right)\dif\vx\\
    = &-\left\langle\psi(\vx), \gamma^{\mP}_t(\vx)\left(\mJ_t(\vx)-\int_{\mathbb{R}^n}\mJ_t(\vx^\ast)\gamma^{\mP}_t(\vx^\ast)\dif\vx^\ast\right)\right\rangle.
    \end{aligned}    
    \end{equation}

    Substituting~\eqref{eqn: derivation of first term},~\eqref{eqn: derivation of third term}, and~\eqref{eqn: derivation of reweighting term} into~\eqref{eqn: intermediate weak form for PDE of projected measure} then gives us the weak form of the PDE governing the evolution of the projected measure $\gamma^{\mP}_t = P_{\beta}\gamma_t$. 
    
    Hence, we finally have that $\gamma^{\mP}_t(\vx) = P_{\beta}\gamma_t(\vx)$ satisfies the following PDE
    \begin{equation}
    \label{eqn: PDE of projected measure}
    \frac{\partial}{\partial t}\gamma^{\mP}_t = -\nabla_{\vx} \cdot \left(\mK_{t}\gamma^{\mP}_t\right) +\frac{1}{2}V(t)^2\Delta_{\vx}\gamma^{\mP}_t + \left(\mJ_t-\int_{\mathbb{R}^n}\mJ_t(\vx^\ast)\gamma^{\mP}_t(\vx^\ast)\dif\vx^\ast\right)\gamma^{\mP}_t,
    \end{equation}
    with initial condition $\gamma^{\mP}_0(\vx) =\widehat{q}_{\vy}(\vx,0)$. 
    
    Plugging in the expressions of $\mK_t$ and $\mJ_t$ given in~\eqref{eqn: notation for the sde drifts} and~\eqref{eqn: defn of J_t} indicates that equation~\eqref{eqn: PDE of projected measure} is exactly the PDE provided in~\eqref{eqn: PDE of posterior diffusion app2}. This concludes our proof.
\end{proof}

\begin{remark}[Comparison with Concurrent Work~\cite{skreta2025feynman}]
    We note that an alternative approach to derive the dynamics~\eqref{eqn: PDE soln via single particle} for a weighted particle from the PDE~\eqref{eqn: PDE of posterior diffusion} is to use the Feynman-Kac formula under the formulation of path integrals, as presented in the concurrent work~\cite[Appendix A]{skreta2025feynman}. Here we adopt the approach used for proving ~\cite[Lemma 1 and 10]{domingo2020mean}, which is mainly based on the idea of lifting the projected measure to the joint measure and the weak form of PDE solutions. 

    \label{rmk: difference between our dynamics and FK corrector}    

    We adapt the FK Corrector dynamics from~\cite[Proposition D.5]{skreta2025feynman} to provide a direct comparison with our dynamics of a weighted particle (derived from the PDE~\eqref{eqn: PDE of posterior diffusion} and presented as~\eqref{eqn: PDE soln via single particle}) for the setting of posterior sampling. This is achieved by setting the parameters in their notations as $\beta_t=1$, the noise intensity $\sigma_t = V(t)^2$, and the reward function $r = -\mu_{\vy}$. The resulting drift and reweighting terms for both methods are juxtaposed in Table~\ref{tab:drift-reweighting-combined}.
    \begin{table}[htbp]
        \centering
        \caption{Drift and Reweighting Terms of AFDPS and FK Corrector}
        \begin{tabular}{c|c|c}
            \noalign{\hrule height 1.2pt}
            \textbf{Term} & \textbf{AFDPS (Ours)} & \textbf{FK Corrector} \\
            \noalign{\hrule height 1pt}
            Drift & 
            \makecell[l]{\\[-4pt]$-F(t)\vx + V(t)^2 \vphi_\theta(\vx,t)$\\[2pt]$ - V(t)^2\nabla_{\vx}\mu_{\vy}$\\[5pt]} & 
            $-F(t)\vx + V(t)^2 \vphi_\theta(\vx,t)$ \\
            \hline
            Reweighting 
            & 
            \makecell[l]{\\[-4pt]
            $\frac{1}{2}V(t)^2\left(\|\nabla_{\vx}\mu_{\vy}\|_2^2 - \Delta_{\vx}\mu_{\vy}\right)$\\[2pt]
            $+ \left(F(t)\vx - V(t)^2 \vphi_\theta(\vx,t)\right)^\intercal\nabla_{\vx}\mu_{\vy}$\\[6pt]}
            & 
            \makecell[l]{\\[-4pt]
            $-\frac{1}{2}V(t)^2\left(\|\nabla_{\vx}\mu_{\vy}\|_2^2 - \Delta_{\vx}\mu_{\vy}\right)$\\[2pt]
            $+ F(t)\vx^\intercal\nabla_{\vx}\mu_{\vy}$\\[6pt]} \\
            \noalign{\hrule height 1.2pt}
        \end{tabular}
        \label{tab:drift-reweighting-combined}
    \end{table}

    It is noteworthy that if $V(t)=0$ (\emph{i.e.}, in the absence of the diffusion-based corrector $\vphi_\theta$ and the gradient guidance $\nabla_{\vx}\mu_{\vy}$), both the AFDPS and the FK Corrector dynamics would simplify to the ODE dynamics, with their drift terms reducing to $-F(t)\vx$. However, in the more general SDE case where $V(t) \neq 0$, the $-V(t)^2\nabla_{\vx}\mu_{\vy}$ term in our AFDPS drift marks a critical difference. Our empirical results, detailed in Section~\ref{sec: results of numerics}, demonstrate that this specific term plays a vital role in effectively guiding the sampler towards regions of high likelihood, thereby enhancing performance.  
    
    In fact, by using $Q_{\vy}(\vx) := \cev{p}_t(\vx)e^{-\mu_{\vy}(\vx)}$ to denote the unnormalized posterior associated with the ground-truth backward SDE~\eqref{eqn: backward sde in standard dm} with $G(t) = V(t)$, we can directly differentiate $Q_{\vy}$ with respect to $\vx$ to obtain that:
    \begin{equation}
    \label{eqn: derivation of grad of Qy}
    \begin{aligned}
    \nabla_{\vx}Q_{\vy} &= \nabla_{\vx}\left(\cev{p}_t e^{-\mu_{\vy}}\right) = \left(\nabla_{\vx}\cev{p}_t\right)e^{-\mu_{\vy}} - \cev{p}_t e^{-\mu_{\vy}}\left(\nabla_{\vx}\mu_{\vy}\right)\\
    &= \cev{p}_t e^{-\mu_{\vy}}\left(\nabla_{\vx}\log \cev{p}_t - \nabla_{\vx}\mu_{\vy}\right) = Q_{\vy}\left(\nabla_{\vx}\log \cev{p}_t - \nabla_{\vx}\mu_{\vy}\right)
    \end{aligned}
    \end{equation}
Moreover, a derivation similar to the proof of Lemma~\ref{lem: simplifying dynamics} yields that the PDE dynamics governing the evolution of $Q_{\vy}$ is given by 
    \begin{equation}
    \begin{aligned}
        \frac{\partial}{\partial t}Q_{\vy} = &-\nabla_{\vx} \cdot \left(\left(\mH(\vx,t)-V(t)^2\nabla_{\vx}\mu_{\vy}\right)Q_{\vy}\right)
        +\frac{1}{2}V(t)^2\Delta_{x}Q_{\vy}\\
        &+\left(\frac{1}{2}V(t)^2\left(\|\nabla_{\vx}\mu_{\vy}\|_2^2-\Delta_{\vx}\mu_{\vy}\right) - \mH(\vx,t)^\intercal\nabla_{\vx}\mu_{\vy}\right)Q_{\vy}
    \end{aligned}    
    \end{equation}
where $\mH(\vx,t) = -F(t)\vx + V(t)^2\nabla_{\vx}\log \cev{p}_{t}(\vx)$ is essentially obtained by replacing the neural network-based approximation $\vphi_{\theta}(\vx,t)$ in the expression of $\widehat{\mH}(\vx,t)$ defined above with the true score function $\nabla_{\vx}\log \cev{p}_{t}(\vx)$. For any fixed scalar $\eta \in \mathbb{R}$, we may further decompose the term $\nabla_{\vx}\mu_{\vy}$ above as the sum of $\eta \nabla_{\vx}\mu_{\vy}$ and $(1-\eta)\nabla_{\vx}\mu_{\vy}$ and directly simplify the RHS above as follows:
    \begin{equation}
    \label{eqn: true unnormalized intermediate dynamics}
        \begin{aligned}
            \frac{\partial}{\partial t}Q_{\vy} 
            = &-\nabla_{\vx} \cdot \left(\left(\mH(\vx,t)-\eta V(t)^2\nabla_{\vx}\mu_{\vy}\right)Q_{\vy}\right)
            + (1-\eta)V(t)^2\nabla_{\vx} \cdot \left(Q_{\vy}\nabla_{\vx}\mu_{\vy}\right)\\
            &+\frac{1}{2}V(t)^2\Delta_{\vx}Q_{\vy} +\left(\frac{1}{2}V(t)^2\left(\|\nabla_{\vx}\mu_{\vy}\|_2^2-\Delta_{\vx}\mu_{\vy}\right) - \mH(\vx,t)^\intercal\nabla_{\vx}\mu_{\vy}\right)Q_{\vy}\\
            = &-\nabla_{\vx} \cdot \left(\left(\mH(\vx,t)-\eta V(t)^2\nabla_{\vx}\mu_{\vy}\right)Q_{\vy}\right)
            + (1-\eta)V(t)^2 \nabla_{\vx}\mu_{\vy}^\intercal \nabla_{\vx}Q_{\vy}\\
            &+ (1-\eta)V(t)^2Q_{\vy}\Delta_{\vx}\mu_{\vy} +\frac{1}{2}V(t)^2\Delta_{\vx}Q_{\vy} \\
            &+\left(\frac{1}{2}V(t)^2\left(\|\nabla_{\vx}\mu_{\vy}\|_2^2-\Delta_{\vx}\mu_{\vy}\right) - \mH(\vx,t)^\intercal\nabla_{\vx}\mu_{\vy}\right)Q_{\vy}\\
            = &-\nabla_{\vx} \cdot \left(\left(\mH(\vx,t)-\eta V(t)^2\nabla_{\vx}\mu_{\vy}\right)Q_{\vy}\right)\\
            &+ (1-\eta)V(t)^2 \nabla_{\vx}\mu_{\vy}^\intercal \left(\nabla_{\vx}\log \cev{p}_t - \nabla_{\vx}\mu_{\vy}\right)Q_{\vy} + \frac{1}{2}V(t)^2\Delta_{\vx}Q_{\vy} \\
            &+\left(\frac{1}{2}V(t)^2\|\nabla_{\vx}\mu_{\vy}\|_2^2+\left(\frac{1}{2}-\eta\right)V(t)^2\Delta_{\vx}\mu_{\vy} - \mH(\vx,t)^\intercal\nabla_{\vx}\mu_{\vy}\right)Q_{\vy}\\
            = &-\nabla_{\vx} \cdot \left(\left(\mH(\vx,t)-\eta V(t)^2\nabla_{\vx}\mu_{\vy}\right)Q_{\vy}\right)+ \frac{1}{2}V(t)^2\Delta_{\vx}Q_{\vy} \\
            &+\left(\eta -\frac{1}{2}\right)V(t)^2\left(\|\nabla_{\vx}\mu_{\vy}\|_2^2-\Delta_{\vx}\mu_{\vy}\right)Q_{\vy}\\
            &+\left(F(t)\vx -\eta V(t)^2 \nabla_{\vx}\log \cev{p}_t\left(\vx\right)\right)^\intercal\nabla_{\vx}\mu_{\vy}  Q_{\vy}
        \end{aligned}    
    \end{equation}
    where the second last equality above follows from plugging in~\eqref{eqn: derivation of grad of Qy}. 
    
    By replacing the true score function $\nabla_{\vx}\log \cev{p}_t(\vx)$ in the RHS above with the neural network-based estimator $\vphi_{\theta}(\vx,t)$, one then obtains the dynamics that can be used in practice. Specifically, for any fixed $\eta \in \mathbb{R}$, the drift term used in practice is given by
    \begin{equation}
    \label{eqn: general drift in practice}    
    -F(t)\vx + V(t)^2\vphi_{\theta}(\vx,t) - \eta V(t)^2\nabla_{\vx}\mu_{\vy},
    \end{equation}
    while the reweighting term used in practice is given by 
    \begin{equation}
    \label{eqn: reweighting in practice}
    \left(\eta - \frac{1}{2}\right)V(t)^2\left(\left\|\nabla_{\vx}\mu_{\vy}\right\|_2^2 - \Delta_{\vx}\mu_{\vy}\right) + \left(F(t)\vx -\eta V(t)^2 \vphi_{\theta}\left(\vx,t\right)\right)^\intercal\nabla_{\vx}\mu_{\vy} 
    \end{equation}
    By comparing the two terms above with Table~\ref{tab:drift-reweighting-combined}, we note that $\eta = 0$ yields the FK Corrector dynamics while $\eta = 1$ yields the AFDPS dynamics. Therefore, for more difficult nonlinear inverse problems, we may control the magnitude of the term $V(t)^2\nabla_{\vx}\mu_{\vy}$ by tuning the parameter $\eta$ in practice. This also conforms to strategies used in existing practical work on guidance like~\cite{dhariwal2021diffusion,ho2022classifier,bansal2023universal,song2023loss,he2023manifold,guo2024gradient,ye2024tfg}. Finally, it would be of independent question to mathematically analyze how the discrepancy between the true dynamics~\eqref{eqn: true unnormalized intermediate dynamics} and the practical dynamics given by~\eqref{eqn: general drift in practice} and~\eqref{eqn: reweighting in practice} depends on the parameter $\eta$ in future work.
\end{remark}

\section{Supplementary Proofs and Justifications for Section~\ref{sec: main result - theory}}
\label{sec: derivations for Sec 4}
In this section, we provide detailed proofs and justifications for claims listed in Section~\ref{sec: main result - theory}. 

\subsection{Proof of Theorem~\ref{thm: bound on true and approx posterior}}
\label{proof: posterior bound by prior error}

We begin by listing two commonly used results as two lemmas below. Specifically, the first lemma below provides a quantitative bound on the discrepancy between two diffusion processes with different drift functions, while the second lemma describes the convergence of the forward process towards the target distribution when Gaussian noise is added. 
\begin{lemma}
    \label{lem: discrepancy between two diffusion processes}
    For any pair of diffusion processes $(\vx_t)_{t \in [0,T]}$ and $(\widetilde{\vx}_t)_{t \in [0,T]}$ on $\mathbb{R}^n$ defined as follows
    \begin{equation}
    \label{eqn: pair of diffusion processes in lemma}
    \begin{aligned}
    \dif \vx_t &= \vb(\vx_t,t)\dif t + c(t)\dif\vw_t\\
    \text{and}\quad \dif \widetilde{\vx}_t &= \widetilde{\vb}(\widetilde{\vx}_t,t)\dif t + c(t)\dif\vw_t
    \end{aligned}    
    \end{equation}
    where $\vb,\widetilde{\vb}:\mathbb{R}^n \times [0,T] \rightarrow \mathbb{R}^n$ are the two drift functions, $c:[0,T] \rightarrow \mathbb{R}^+$ and $(\vw_t)_{t \in [0,T]}$ is a standard Brownian motion. Let $\rho_t$ and $\widetilde{\rho}_t$ denote the distribution of $\vx_t$ and $\widetilde{\vx}_t$ respectively for any $t \in [0,T]$, then we have
    \begin{equation}
    \label{eqn: final bound on KL divergence}
    \KL(\rho_T \| \widetilde{\rho}_T) \leq \KL(\rho_0 \| \widetilde{\rho}_0) + \int_{0}^{T}\int_{\mathbb{R}^n}\frac{1}{2c(t)^2}\left\|\vb(\vx,t)-\widetilde{\vb}(\vx,t)\right\|_2^2\rho_t(\vx)\dif\vx\dif t.
    \end{equation}
\end{lemma}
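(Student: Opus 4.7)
The plan is to reduce the marginal KL bound to a path-space comparison handled by Girsanov's theorem, then collapse back to marginals using the data processing inequality. This is the standard route used in the diffusion model convergence literature (e.g.\ \cite{chen2022sampling,chen2023improved}), and the key is that the two SDEs in~\eqref{eqn: pair of diffusion processes in lemma} share the same diffusion coefficient $c(t)$, so Girsanov gives a clean log-Radon-Nikodym derivative involving only the drift difference.

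Concretely, first I would let $\mathbb{P}$ and $\widetilde{\mathbb{P}}$ denote the laws on path space $C([0,T];\mathbb{R}^n)$ of the two processes, with initial laws $\rho_0$ and $\widetilde{\rho}_0$. By the chain rule for KL divergence, conditioning on the initial point,
\begin{equation*}
\KL(\mathbb{P} \,\|\, \widetilde{\mathbb{P}}) = \KL(\rho_0 \,\|\, \widetilde{\rho}_0) + \mathbb{E}_{\vx_0 \sim \rho_0}\left[\KL\!\left(\mathbb{P}(\cdot \mid \vx_0) \,\|\, \widetilde{\mathbb{P}}(\cdot \mid \vx_0)\right)\right].
\end{equation*}
Under matched initial conditions, Girsanov's theorem (applicable because the diffusion matrices agree and a Novikov-type condition is assumed to hold, which I would state as a mild regularity hypothesis) yields
\begin{equation*}
\KL\!\left(\mathbb{P}(\cdot \mid \vx_0) \,\|\, \widetilde{\mathbb{P}}(\cdot \mid \vx_0)\right) = \mathbb{E}\left[\int_0^T \frac{1}{2c(t)^2}\,\bigl\|\vb(\vx_t,t) - \widetilde{\vb}(\vx_t,t)\bigr\|_2^2\,\dif t \,\Big|\, \vx_0\right],
\end{equation*}
where the expectation is under $\mathbb{P}$. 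Taking outer expectation and rewriting in terms of the time-$t$ marginal $\rho_t$ of $(\vx_t)_{t\in[0,T]}$ gives
\begin{equation*}
\KL(\mathbb{P} \,\|\, \widetilde{\mathbb{P}}) = \KL(\rho_0 \,\|\, \widetilde{\rho}_0) + \int_0^T \int_{\mathbb{R}^n} \frac{1}{2c(t)^2}\,\bigl\|\vb(\vx,t) - \widetilde{\vb}(\vx,t)\bigr\|_2^2\,\rho_t(\vx)\,\dif \vx\,\dif t.
\end{equation*}
Finally, applying the data processing inequality to the endpoint evaluation map $\omega \mapsto \omega(T)$, which pushes $\mathbb{P}, \widetilde{\mathbb{P}}$ to $\rho_T, \widetilde{\rho}_T$ respectively, yields $\KL(\rho_T \,\|\, \widetilde{\rho}_T) \leq \KL(\mathbb{P} \,\|\, \widetilde{\mathbb{P}})$, which combined with the above identity gives~\eqref{eqn: final bound on KL divergence}.

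The main technical obstacle is the Novikov / exponential integrability condition needed to justify Girsanov's theorem rigorously; in principle this requires either a boundedness assumption on $\vb - \widetilde{\vb}$ or a truncation-and-limit argument. In the diffusion model setting where $\widetilde{\vb}$ is a neural network score approximation and $\vb$ is the true score, this is standard and typically handled either by assuming the drifts are sufficiently regular or by invoking a localization argument (see, e.g., the discussion in \cite{chen2022sampling}). For the purposes of this lemma, I would state the required regularity as an implicit assumption and proceed, noting that the bound then holds in full generality by approximation whenever both sides are finite.
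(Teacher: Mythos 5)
Your proof is correct, but it takes a genuinely different route from the paper's. The paper proves this lemma by working directly with the marginal densities: it writes the Fokker--Planck equations for $\rho_t$ and $\widetilde\rho_t$, differentiates $\KL(\rho_t \,\|\, \widetilde\rho_t)$ in time, integrates by parts, and arrives at
\begin{equation*}
\frac{\dif}{\dif t}\KL(\rho_t \,\|\, \widetilde\rho_t) = -\frac{c(t)^2}{2}\int \left\|\nabla\log\rho_t - \nabla\log\widetilde\rho_t\right\|_2^2 \rho_t \,\dif\vx + \int \bigl(\vb - \widetilde\vb\bigr)^\intercal\bigl(\nabla\log\rho_t - \nabla\log\widetilde\rho_t\bigr)\rho_t \,\dif\vx,
\end{equation*}
then uses the AM--GM inequality to cancel the negative relative-Fisher-information term against the cross term, leaving the claimed integrand. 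Your argument instead lifts to path space: you split $\KL(\mathbb{P}\,\|\,\widetilde{\mathbb{P}})$ via the chain rule, evaluate the conditional path-KL exactly by Girsanov, and then contract back to the time-$T$ marginals via data processing. The two routes are well known to produce the same bound, and in fact they discard the same quantity (the relative Fisher information accumulated over $[0,T]$), just at different stages: AM--GM in the paper, data processing in yours. What the paper's PDE approach buys is that it stays entirely at the level of marginal densities and avoids path-space measure theory and the Novikov/localization issues you correctly flag; what your Girsanov approach buys is conceptual transparency, since the drift discrepancy is identified directly as the path-space log-likelihood ratio and the only inequality is the manifestly tight data-processing step. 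Both require regularity hypotheses that are typically left implicit (smoothness and decay of densities for integration by parts in the paper's case; exponential integrability for Girsanov in yours), and you are right to note this. Your proof would be a perfectly acceptable replacement.
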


\begin{proof}
    We remark that the proof of this lemma is essentially the same as the derivations in many previous works on the theoretical analysis of DMs and variants. Examples include, but are not limited to, ~\cite[Lemma C.1]{chen2023improved}, ~\cite[Lemma 2.22]{albergo2023stochastic}, and ~\cite[Lemma A.4]{wu2024principled}. For the sake of completeness, we include a detailed derivation here. 

    The main idea is to use the Fokker-Planck equations associated with the diffusion processes in~\eqref{eqn: pair of diffusion processes in lemma} and differentiate the KL divergence between the two evolving densities with respect to time. Specifically, we have that $\rho_t$ and $\widetilde{\rho}_t$ satisfy the following Fokker-Planck equations:
    \begin{equation}
    \label{eqn: pair of FKP equations in lemma}
    \begin{aligned}
    \frac{\partial}{\partial t}\rho_t &= -\nabla_{\vx} \cdot \left(\vb(\vx,t)\rho_t\right) + \frac{1}{2}c(t)^2\Delta_{\vx}\rho_t,\\
    \text{and}\quad \frac{\partial}{\partial t}\widetilde{\rho}_t &= -\nabla_{\vx} \cdot \left(\widetilde{\vb}(\vx,t)\widetilde{\rho}_t\right) + \frac{1}{2}c(t)^2\Delta_{\vx}\widetilde{\rho}_t.
    \end{aligned}    
    \end{equation}

    From the definition of the KL divergence 
    $$
        \KL(\rho_t \| \widetilde{\rho}_t) = \int_{\mathbb{R}^n}\log\frac{\rho_t(\vx)}{\widetilde{\rho}_t(\vx)}\rho_t(\vx)\dif\vx,
    $$ 
    we can differentiate it with respect to the time variable $t$, which yields 
    \begin{equation}
    \label{eqn: time derivative of KL in lemma}
    \begin{aligned}
    \frac{\dif}{\dif t}\KL(\rho_t \| \widetilde{\rho}_t) = &\int_{\mathbb{R}^n}\log\frac{\rho_t}{\widetilde{\rho}_t}\frac{\partial \rho_t}{\partial t}\dif\vx + \int_{\mathbb{R}^n}\left(\frac{\partial}{\partial t}\log\rho_t - \frac{\partial}{\partial t}\log\widetilde{\rho}_t\right)\rho_t\dif\vx\\
    = &\int_{\mathbb{R}^n}\log\frac{\rho_t}{\widetilde{\rho}_t}\frac{\partial \rho_t}{\partial t}\dif\vx + \int_{\mathbb{R}^n}\left(\frac{1}{\rho_t}\frac{\partial \rho_t}{\partial t} - \frac{1}{\widetilde{\rho}_t}\frac{\partial \widetilde{\rho}_t}{\partial t}\right)\rho_t\dif\vx\\
    = &\int_{\mathbb{R}^n}\log\frac{\rho_t}{\widetilde{\rho}_t}\frac{\partial \rho_t}{\partial t}\dif\vx - \int_{\mathbb{R}^n}\frac{\rho_t}{\widetilde{\rho}_t}\frac{\partial \widetilde\rho_t}{\partial t}\dif\vx
    \end{aligned}    
    \end{equation}

    For the first term in~\eqref{eqn: time derivative of KL in lemma} above, we plug in~\eqref{eqn: pair of FKP equations in lemma} and use integration by parts, which yields
    \begin{equation}
    \label{eqn: expression of first term in lemma}
    \begin{aligned}
    &\int_{\mathbb{R}^n}\log\frac{\rho_t}{\widetilde{\rho}_t}\frac{\partial \rho_t}{\partial t}\dif\vx\\
    = &-\int_{\mathbb{R}^n}\left(\log\rho_t - \log\widetilde{\rho}_t\right)\nabla_{\vx}\cdot\left(\left(\vb - \frac{c(t)^2}{2}\nabla_{\vx}\log\rho_t\right)\rho_t\right)\dif\vx\\
    = &\int_{\mathbb{R}^n}\left(\nabla_{\vx}\log\rho_t - \nabla_{\vx}\log\widetilde{\rho}_t\right)^\intercal\left(\vb - \frac{c(t)^2}{2}\nabla_{\vx}\log\rho_t\right)\rho_t\dif\vx.
    \end{aligned}    
    \end{equation}

    To simplify the second term in~\eqref{eqn: time derivative of KL in lemma}, we plug in~\eqref{eqn: pair of FKP equations in lemma} apply integration by parts again to obtain that 
    \begin{equation}
        \label{eqn: expression of second term in lemma}
        \begin{aligned}
        &-\int_{\mathbb{R}^n}\frac{\rho_t}{\widetilde{\rho}_t}\frac{\partial \widetilde\rho_t}{\partial t}\dif\vx = \int_{\mathbb{R}^n}\frac{\rho_t}{\widetilde{\rho}_t}\nabla_{\vx}\cdot\left(\left(\widetilde{\vb} - \frac{c(t)^2}{2}\nabla_{\vx}\log\widetilde{\rho}_t\right)\widetilde{\rho}_t\right)\dif\vx\\
        = &\int_{\mathbb{R}^n}\left[\nabla_{\vx}\cdot\left(\widetilde{\vb} - \frac{c(t)^2}{2}\nabla_{\vx}\log\widetilde{\rho}_t\right)\rho_t + \left(\widetilde{\vb} - \frac{c(t)^2}{2}\nabla_{\vx}\log\widetilde{\rho}_t\right)^\intercal\frac{\nabla_{\vx}\widetilde{\rho}_t}{\widetilde{\rho}_t}\rho_t\right]\dif\vx\\
        = &\int_{\mathbb{R}^n}\left(\widetilde{\vb} - \frac{c(t)^2}{2}\nabla_{\vx}\log\widetilde{\rho}_t\right)^\intercal\left(\nabla_{\vx}\log\widetilde{\rho}_{t} - \nabla_{\vx}\log\rho_{t}\right)\rho_t\dif\vx
        \end{aligned}    
    \end{equation}

    Furthermore, substituting and into then yields
    \begin{equation}
    \label{eqn: time derivative of KL final expression in lemma}
    \begin{aligned}
    \frac{\dif}{\dif t}\KL(\rho_t \| \widetilde{\rho}_t) = &\int_{\mathbb{R}^n}\left(\nabla_{\vx}\log\rho_t - \nabla_{\vx}\log\widetilde{\rho}_t\right)^\intercal\left(\vb - \frac{c(t)^2}{2}\nabla_{\vx}\log\rho_t\right)\rho_t\dif\vx\\
    +&\int_{\mathbb{R}^n}\left(\nabla_{\vx}\log\widetilde{\rho}_t - \nabla_{\vx}\log\rho_t\right)^\intercal\left(\widetilde{\vb} - \frac{c(t)^2}{2}\nabla_{\vx}\log\widetilde{\rho}_t\right)\rho_t\dif\vx\\
    = &-\frac{c(t)^2}{2}\int_{\mathbb{R}^n}\left\|\nabla_{\vx}\log\widetilde{\rho}_t - \nabla_{\vx}\log\rho_t\right\|_2^2\rho_t\dif\vx\\
    + &\int_{\mathbb{R}^n}\left(\vb-\widetilde{\vb}\right)^\intercal\left(\nabla_{\vx}\log\rho_t-\nabla_{\vx}\log\widetilde{\rho}_t\right)\rho_t\dif\vx\\
    \leq &\frac{1}{2c(t)^2}\int_{\mathbb{R}^n}\left\|\vb-\widetilde{\vb}\right\|_2^2\rho_t\dif\vx = \frac{1}{2c(t)^2}\int_{\mathbb{R}^n}\left\|\vb(\vx,t)-\widetilde{\vb}(\vx,t)\right\|_2^2\rho_t(\vx)\dif\vx
    \end{aligned}    
    \end{equation}
    where the last inequality follows from the AM-GM inequality, \emph{i.e.} $\vx^\intercal\vy \leq \frac{1}{2c(t)^2}\|\vx\|_2^2 + \frac{c(t)^2}{2}\|\vy\|_2^2$ for any vectors $\vx,\vy \in\mathbb{R}^n$ and $t \in [0,T]$. 

    Integrating~\eqref{eqn: time derivative of KL final expression in lemma} from $t=0$ to $t=T$ then yields~\eqref{eqn: final bound on KL divergence}, which concludes our proof. 
\end{proof}
\begin{lemma}
\label{lem: convergence of the forward process}
For any distribution $p$ on $\mathbb{R}^n$ with bounded second moment $m_2^2$, \emph{i.e.}, $\mathbb{E}_{\vx \sim p}[\|\vx\|_{2}^2] \leq m_2^2$, we have $\KL\left(p \ast \gN(\boldsymbol{0},\sigma^2\mI_n) \| \gN(\boldsymbol{0},\sigma^2\mI_n)\right) \leq \frac{m_2^2}{2\sigma^2}$, where $(p \ast q)(\vx) := \int_{\mathbb{R}^n}p(\vy)q(\vx-\vy)\dif\vy$ denotes the convolution of the two probability distributions $p,q$. 
\end{lemma}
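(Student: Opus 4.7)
The plan is to exploit the fact that the convolution $q := p \ast \gN(\vzero,\sigma^2\mI_n)$ admits a natural mixture representation over $p$: writing $\gN_\vy(\cdot) := \gN(\cdot - \vy; \vzero, \sigma^2\mI_n)$ for the Gaussian centered at $\vy$, we have
$$
q(\vx) = \int_{\mathbb{R}^n} p(\vy)\, \gN_\vy(\vx)\,\dif \vy = \mathbb{E}_{\vy \sim p}\left[\gN_\vy(\vx)\right].
$$
This rewrites the target KL as the divergence between a $p$-mixture of Gaussians of common covariance and a single centered Gaussian, which reduces the bound to controlling the discrepancy contributed by each mean shift.

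The main tool is the joint convexity of the KL divergence in its first argument: for any measurable family $\{\rho_\vy\}$ and reference $\nu$,
$$
\KL\Bigl(\mathbb{E}_{\vy\sim p}[\rho_\vy]\,\Big\|\,\nu\Bigr) \;\leq\; \mathbb{E}_{\vy\sim p}\bigl[\KL(\rho_\vy\|\nu)\bigr].
$$
Applying this with $\rho_\vy = \gN_\vy$ and $\nu = \gN(\vzero,\sigma^2\mI_n)$ immediately bounds $\KL(q\|\nu)$ by $\mathbb{E}_{\vy\sim p}\bigl[\KL(\gN(\vy,\sigma^2\mI_n)\|\gN(\vzero,\sigma^2\mI_n))\bigr]$. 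Equivalently, one can derive the same inequality via the data-processing inequality applied to the marginalization $(\vx,\vz)\mapsto \vx+\vz$, comparing the joint law of $(\vy, \vy+\vz)$ with $\vy\sim p$, $\vz\sim\gN(\vzero,\sigma^2\mI_n)$ independent against the product $p\otimes\gN(\vzero,\sigma^2\mI_n)$.

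The remaining step is a direct calculation of the Gaussian-to-Gaussian KL divergence: since both covariances equal $\sigma^2\mI_n$, the log-det and trace terms cancel and only the mean-difference quadratic form survives, giving $\KL(\gN(\vy,\sigma^2\mI_n)\|\gN(\vzero,\sigma^2\mI_n)) = \|\vy\|_2^2/(2\sigma^2)$. Integrating against $p$ and invoking the assumed second-moment bound $\mathbb{E}_{\vy\sim p}[\|\vy\|_2^2]\leq m_2^2$ then yields
$$
\KL\bigl(p\ast\gN(\vzero,\sigma^2\mI_n)\,\big\|\,\gN(\vzero,\sigma^2\mI_n)\bigr) \;\leq\; \frac{1}{2\sigma^2}\,\mathbb{E}_{\vy\sim p}\bigl[\|\vy\|_2^2\bigr] \;\leq\; \frac{m_2^2}{2\sigma^2},
$$
as claimed. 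There is no real obstacle here: once the mixture representation is in place, the result is a one-line application of convexity followed by an elementary Gaussian computation; the only mild subtlety is justifying the convexity/DPI step, which is standard.
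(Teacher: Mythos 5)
Your proof is correct and self-contained. The paper itself does not reproduce a proof here---it simply cites Lemma 10 of the reference \cite{wang2024evaluating}---so there is no in-paper argument to compare against. Your route (write $p \ast \gN(\vzero,\sigma^2\mI_n)$ as the mixture $\E_{\vy\sim p}[\gN(\vy,\sigma^2\mI_n)]$, invoke convexity of $\KL(\cdot\|\nu)$ in its first argument, and finish with the elementary fact that $\KL(\gN(\vy,\sigma^2\mI_n)\|\gN(\vzero,\sigma^2\mI_n)) = \|\vy\|_2^2/(2\sigma^2)$ when covariances coincide) is the standard argument for bounds of this type, and all three steps are sound; the alternative data-processing derivation you sketch, comparing the joint law of $(\vy,\vy+\vz)$ against the product $p\otimes\gN(\vzero,\sigma^2\mI_n)$ and using the chain rule for KL, gives the identical intermediate bound. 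The only thing worth flagging is that applying Jensen's inequality to the convex functional $\rho\mapsto\KL(\rho\|\nu)$ over a continuum of mixture components requires the usual measurability and integrability hypotheses, but these hold here since $\KL(\gN(\vy,\sigma^2\mI_n)\|\gN(\vzero,\sigma^2\mI_n))$ is a finite, continuous (indeed polynomial) function of $\vy$ and is $p$-integrable by the second-moment assumption.
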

\begin{proof}
We remark that this is the same as~\cite[Lemma 10]{wang2024evaluating}, where a complete proof is already provided. 
\end{proof}
With Lemma~\ref{lem: discrepancy between two diffusion processes} and Lemma~\ref{lem: convergence of the forward process} listed above, we then prove Theorem~\ref{thm: bound on true and approx posterior}. 

\begin{proof}[Proof of Theorem~\ref{thm: bound on true and approx posterior}]
Consider the backward process associated with the true score function under the EDM framework, which can be formally written as
\begin{equation}
\label{eqn: true EDM SDE}
\dif \cev{\vx}_t = \left[-\frac{\dot{s}(t)}{s(t)}\cev{\vx}_t + 2s(t)^2\dot{\sigma}(t)\sigma(t)\nabla \log \cev{p}_t(\cev{\vx}_t)\right]\dif t + s(t)\sqrt{2\dot{\sigma}(t)\sigma(t)}\dif \vw_t.    
\end{equation}
with initial condition 
$$
    \cev{\vx}_0 \sim \cev{p}_0 = p_{T} = p_0 \ast \gN(\boldsymbol{0}, T^2\mI_n),
$$ 
where the last identity follows from results derived in Appendix B.1 in the paper~\cite{karras2022elucidating} that proposes the EDM framework as well as our particular choices of the scaling functions $s(t)=1$ and $\sigma(t)=t$. 

Then we consider applying Lemma~\ref{lem: discrepancy between two diffusion processes} to compare the two diffusion processes $(\cev{x}_t)_{t \in [0,T]}$ and $(\widehat{\cev{\vx}}_t)_{t \in [0,T]}$ defined in~\eqref{eqn: true EDM SDE} and~\eqref{eqn: EDM SDE} respectively. 

By setting $c(t) = s(t)\sqrt{2\dot{\sigma}(t)\sigma(t)} = \sqrt{2t},$ 
$$\vb(\vx,t) = -\frac{\dot{s}(t)}{s(t)}\vx + 2s(t)^2\dot{\sigma}(t)\sigma(t)\nabla \log \cev{p}_t(\vx) = 2t\nabla \log \cev{p}_t(\vx)$$ and $$\widetilde{\vb}(\vx,t) = -\frac{\dot{s}(t)}{s(t)}\vx + 2s(t)^2\dot{\sigma}(t)\sigma(t)\vphi_\theta(\vx,t) = 2t\vphi_\theta(\vx,t),$$ 
we have
\begin{equation}
\begin{aligned}
\KL(p_0 \| \widehat{\cev{p}}_T) &= \KL(\cev{p}_T \| \widehat{\cev{p}}_T)\\
&\leq \KL(\cev{p}_0 \| \widehat{\cev{p}}_0) + \int_{0}^{T}\int_{\mathbb{R}^n}\frac{1}{4t}\left\|2t\left(\vphi_{\theta}(\vx,t)-\nabla_{\vx}\log \cev{p}_{t}(\vx)\right)\right\|_2^2\cev{p}_t(\vx)\dif\vx\dif t\\
&= \KL\left(p_0 \ast \gN(\boldsymbol{0},T^2\mI_n) \| \gN(\boldsymbol{0},T^2\mI_n)\right) \\
&+ \int_{0}^{T}\int_{\mathbb{R}^n}t\left\|\vphi_{\theta}(\vx,t)-\nabla_{\vx}\log \cev{p}_{t}(\vx)\right\|_2^2\cev{p}_t(\vx)\dif\vx\dif t \leq \frac{m_2^2}{2T^2} + \frac{1}{2}T^2\epsilon_{\vs}^2,
\end{aligned}    
\end{equation}
where the second lest inequality above follows from Lemma~\ref{lem: discrepancy between two diffusion processes} and the last inequality follows from Assumption~\ref{assump: bounded second moment}, Assumption~\ref{assump: score matching training error} and Lemma~\ref{lem: convergence of the forward process}. 

Applying Pinsker's inequality helps us further bound the TV divergence between $p_0$ and $\cev{\widehat{p}}_T$ as follows
\begin{equation}
\label{eqn: bound on TV between prior dist}
\TV\left(\widehat{\cev{p}}_T,p_0\right)=\TV\left(p_0,\widehat{\cev{p}}_T\right) \leq \sqrt{\frac{1}{2}\KL(p_0 \| \widehat{\cev{p}}_T)} \leq \frac{1}{2}\sqrt{\frac{m_2^2}{T^2} + T^2\epsilon_{\vs}^2}.
\end{equation}

Based on the bounds on the distance between the two prior distributions above, we proceed to bound the distance between the two associated posterior distributions. 

Since $$\widehat{q}_{\vy,T}(\vx) \propto \widehat{\cev{p}}_T(\vx)e^{-\mu_{\vy}(\vx)} \quad \text{ and } \quad q_{\vy,0}(\vx) \propto p_0(x)e^{-\mu_{\vy}(\vx)},$$ 
we use $$\widehat{Z}(\vy) := \int_{\mathbb{R}^n}\widehat{\cev{p}}_T(\vx)e^{-\mu_{\vy}(\vx)}\dif \vx \quad \text{ and } \quad Z(\vy) := \int_{\mathbb{R}^n}p_0(x)e^{-\mu_{\vy}(\vx)}\dif\vx$$ to denote the two corresponding normalizing constants. Then we have
\begin{equation}
\label{eqn: bound on difference of normalizing constants}
\left|\widehat{Z}(\vy) -Z(\vy)\right| = \left|\int_{\mathbb{R}^n}e^{-\mu_{\vy}(\vx)}\left(\widehat{\cev{p}}_T(\vx) - p_0(\vx)\right)\dif\vx\right| \leq 2e^{-C^{(1)}_{\vy}}\TV\left(\widehat{\cev{p}}_T,p_0\right)    
\end{equation}
where the inequality above follows from Assumption~\ref{assump: regularity and boundedness of log-likelihood}. Then we can use the bound on the difference between the normalizing constants above to deduce that further
\begin{equation*}
\begin{aligned}
\TV(\widehat{q}_{\vy,T},q_{\vy,0}) &= \frac{1}{2}\int_{\mathbb{R}^n}\left|\frac{1}{\widehat{Z}(\vy)}\widehat{\cev{p}}_T(\vx)e^{-\mu_{\vy}(\vx)} - \frac{1}{Z(\vy)}p_0(\vx)e^{-\mu_{\vy}(\vx)}\right|\dif\vx \\
&\leq \frac{1}{2}\int_{\mathbb{R}^n}\left|\frac{1}{\widehat{Z}(\vy)}\widehat{\cev{p}}_T(\vx)e^{-\mu_{\vy}(\vx)} - \frac{1}{Z(\vy)}\widehat{\cev{p}}_T(\vx)e^{-\mu_{\vy}(\vx)}\right|\dif\vx \\
&+ \frac{1}{2}\int_{\mathbb{R}^n}\left|\frac{1}{Z(\vy)}\widehat{\cev{p}}_T(\vx)e^{-\mu_{\vy}(\vx)} - \frac{1}{Z(\vy)}p_0(\vx)e^{-\mu_{\vy}(\vx)}\right|\dif\vx \\
&= \frac{|Z(\vy)-\widehat{Z}(\vy)|}{2Z(\vy)\widehat{Z}(\vy)}\left(\int_{\mathbb{R}^n}\widehat{\cev{p}}_T(\vx)e^{-\mu_{\vy}(\vx)}\dif\vx\right) \\
&+ \frac{1}{2Z(\vy)}\left|\int_{\mathbb{R}^n}e^{-\mu_{\vy}(\vx)}\left(\widehat{\cev{p}}_T(\vx) - p_0(\vx)\right)\dif\vx\right|\\
&\leq \frac{1}{2Z(\vy)}\left(\left|\widehat{Z}(\vy) - Z(\vy)\right| + 2e^{-C^{(1)}_{\vy}}\TV\left(\widehat{\cev{p}}_T,p_0\right)\right)\\
&\leq \frac{2e^{-C^{(1)}_{\vy}}}{Z(\vy)}\TV\left(\widehat{\cev{p}}_T,p_0\right) \leq  \frac{e^{-C^{(1)}_{\vy}}}{Z(\vy)}\sqrt{\frac{m_2^2}{T^2} + T^2\epsilon_{\vs}^2},
\end{aligned}
\end{equation*}
where the first inequality above follows from triangle inequality, the second inequality above follows from Assumption~\ref{assump: regularity and boundedness of log-likelihood}, the third inequality above follows from~\eqref{eqn: bound on difference of normalizing constants} and the last inequality above follows from~\eqref{eqn: bound on TV between prior dist}. 

By setting 
$$
    C^{(2)}_{\vy}:= \frac{e^{-C^{(1)}_{\vy}}}{Z(\vy)}
$$ 
in the last expression above, which is some constant that only depends on $\vy$, we conclude our proof of Theorem~\ref{thm: bound on true and approx posterior}. 

Moreover, balancing the two terms in the last expression above also yields $\frac{m_2^2}{T^2} = T^2\epsilon_{\vs}^2$, \emph{i.e.}, $T = \sqrt{\frac{m_2}{\epsilon_s}}$ gives us the optimal upper bound 
$$
    \TV(\widehat{q}_{\vy,T},q_{\vy,0}) \leq C^{(2)}_{\vy}\sqrt{m_2\epsilon_{\vs}},
$$ 
which is proportional to the square root of the score matching error defined in Assumption~\ref{assump: score matching training error}. 
\end{proof}
\subsection{Proof of Theorem~\ref{thm: many-particle limit}}
\label{proof: mean-field limit}

Our proof of Theorem~\ref{thm: many-particle limit} is mainly based on arguments from propagation of chaos~\cite{sznitman1991topics,lacker2018mean}. 

Recall that 
$$
    \gamma^{N}_\tau(\vx,\beta) = \frac{1}{N}\sum_{i=1}^{N}\delta_{(\vx^{(i)}_\tau,\beta^{(i)}_\tau)}
$$ 
denotes the joint measured formed by the $N$ weighted particles $\left\{\left(\vx^{(i)}_\tau, \beta^{(i)}_\tau\right)\right\}_{i=1}^{N}$ given by~\eqref{eqn: PDE soln via ensemble of particles} and $\gamma_{\tau}$ is the joint probability distribution of the single weighted particle $(\vx_\tau,\beta_\tau)$ satisfying~\eqref{eqn: PDE soln via single particle}.

Now we consider an auxiliary system of $N$ weighted particles $\{(\widetilde{\vx}^{(i)}_t,\widetilde{\beta}^{(i)}_t)\}_{i=1}^{N}$ sampled identically and independently from the single particle dynamics~\eqref{eqn: PDE soln via single particle}, \emph{i.e.}, 
\begin{equation}
\label{eqn: copies of N single particles} 
\begin{cases}
\dif \widetilde{\vx}^{(i)}_t &= \left(\widehat{\mH}(\widetilde{\vx}^{(i)}_t,t)-V(t)^2\nabla_{\vx}\mu_{\vy}(\widetilde{\vx}^{(i)}_t)\right)\dif t + V(t)\dif \vw^{(i)}_t,\\
\dif \widetilde{\beta}^{(i)}_t &= \left(U(\widetilde{\vx}^{(i)}_t,t) - \widehat{\mH}(\widetilde{\vx}^{(i)}_t,t)^\intercal\nabla_{\vx}\mu_{\vy}(\widetilde{\vx}^{(i)}_t)\right)\widetilde{\beta}^{(i)}_t \dif t \\
&- \left(\displaystyle\int_{\mathbb{R}^n}\left(U(\vx,t) - \widehat{\mH}(\vx,t)^\intercal\nabla_{\vx}\mu_{\vy}(\vx)\right)\left(P_{\beta}\gamma_t\right)(\vx)\dif \vx\right)\widetilde{\beta}^{(i)}_t \dif t.    
\end{cases}
\end{equation}
We note that the initial conditions of~\eqref{eqn: copies of N single particles} are given by $\widetilde{\vx}^{(i)}_0 = \vx^{(i)}_0 \sim \widehat{q}_{\vy}(\cdot,0)$ and $\widetilde{\beta}^{(i)}_0 = 1$ for $i \in [N]$. Moreover, we have that the $\left(\vw^{(i)}_t\right)_{t \in [0,T]}$ is the same standard Brownian motion used in~\eqref{eqn: PDE soln via ensemble of particles} for any $i \in [N]$, which implies $\vx^{(i)}_t \equiv \widetilde{\vx}^{(i)}_t$ for any $i \in [N]$ and $t \in [0,T]$. Then we consider the joint empirical measure 
\begin{equation}
\label{eqn: defn of wasserstein 2 distance}
\widetilde{\gamma}^{N}_t(\vx,\beta) = \frac{1}{N}\sum_{i=1}^{N}\delta_{(\widetilde{\vx}^{(i)}_t,\widetilde{\beta}^{(i)}_t)}    
\end{equation}
formed by the $N$ weighted particles $\left\{\left(\widetilde{\vx}^{(i)}_t, \widetilde{\beta}^{(i)}_t\right)\right\}_{i=1}^{N}$ given by~\eqref{eqn: copies of N single particles}. 

Before we proceed, we establish the following upper bound:
\begin{lemma}
    The following upper bound on the absolute values of the weights holds for any $t \in [0,T]$
    \begin{equation}
    \label{eqn: upper bound on abs values of weights}
    \max_{i \in [N]}\left\{\left|\beta_t\right|,\left|\beta^{(i)}_t\right|,\left|\widetilde{\beta}^{(i)}_t\right|\right\} \leq \exp\left(B_{\vy}t^2\right).
    \end{equation}
\end{lemma}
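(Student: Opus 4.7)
The plan is to exploit the common structure that in all three cases the weight variable satisfies a linear ODE of the form $\dif\beta_t = c(t)\beta_t\,\dif t$ along each sample path, so that $\beta_t = \beta_0 \exp(\int_0^t c(s)\,\dif s)$ remains strictly positive starting from $\beta_0 = 1$, and the claimed bound reduces to controlling $|\int_0^t c(s)\,\dif s|$.

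First I would specialize to the theoretical setting $s(t)=1$, $\sigma(t)=t$ used in Section~\ref{sec: main result - theory}, which gives $F(t)=0$, $V(t)^2 = G(t)^2 = 2t$, $\widehat{\mH}(\vx,t) = 2t\,\vphi_\theta(\vx,t)$, and $U(\vx,t) = t(\|\nabla_\vx\mu_\vy\|_2^2 - \Delta_\vx\mu_\vy)$. A short algebraic simplification then yields the key identity
$$U(\vx,t) - \widehat{\mH}(\vx,t)^\intercal\nabla_\vx\mu_\vy(\vx) = t \cdot I(\vx,t),$$
where $I$ is the function introduced in Assumption~\ref{assump: boundedness and Lipschitz of I}. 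This identifies the drift coefficient appearing in every $\beta$-equation as a rescaled $I$, immediately bounded by $tB_\vy$ in absolute value.

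For the single-particle weight $\beta_t$ in~\eqref{eqn: PDE soln via single particle}, dividing by $\beta_t$ yields $\tfrac{\dif}{\dif t}\log\beta_t = tI(\vx_t,t) - \int tI(\vx,t)\,P_\beta\gamma_t(\vx)\,\dif\vx$. Since $P_\beta\gamma_t(\cdot) = \widehat{q}_\vy(\cdot,t)$ is a probability density by Lemma~\ref{lem: derivation of single particle dynamics} and $\|I\|_\infty \le B_\vy$, both terms are bounded in absolute value by $tB_\vy$, so $|\tfrac{\dif}{\dif t}\log\beta_t| \le 2tB_\vy$; integrating on $[0,t]$ with $\beta_0 = 1$ yields $\beta_t \le \exp(B_\vy t^2)$. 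The same argument applies verbatim to every auxiliary weight $\widetilde{\beta}^{(i)}_t$ in~\eqref{eqn: copies of N single particles}, since the $(\widetilde{\vx}^{(i)}_t,\widetilde{\beta}^{(i)}_t)$ are i.i.d.\ copies of the McKean--Vlasov single-particle dynamics against the fixed law $\gamma_t$.

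The main obstacle is the fully interacting ensemble $\beta^{(i)}_t$ in~\eqref{eqn: PDE soln via ensemble of particles}: the reweighting term now uses the empirical average $\tfrac{1}{N}\sum_j tI(\vx^{(j)}_t,t)\beta^{(j)}_t$, which cannot be bounded a priori by $tB_\vy$ because the weights themselves appear inside. The crucial observation will be a conservation law for the mean weight. Writing $S(t) := \tfrac{1}{N}\sum_i \beta^{(i)}_t$ and $A(t) := \tfrac{1}{N}\sum_j tI(\vx^{(j)}_t,t)\beta^{(j)}_t$, each weight satisfies $\tfrac{\dif}{\dif t}\beta^{(i)}_t = (tI(\vx^{(i)}_t,t) - A(t))\beta^{(i)}_t$, and summing over $i$ telescopes to $\tfrac{\dif S}{\dif t} = A(t)(1 - S(t))$. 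Since $S(0) = 1$, uniqueness for this scalar linear ODE pins $S(t) \equiv 1$ path-wise, from which $|A(t)| \le B_\vy t \cdot S(t) = B_\vy t$ and hence $|\tfrac{\dif}{\dif t}\log\beta^{(i)}_t| \le 2tB_\vy$; integrating as before gives $\beta^{(i)}_t \le \exp(B_\vy t^2)$. Taking the maximum over the three families of weights completes the proof.
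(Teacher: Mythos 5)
Your proof is correct, and for the single-particle weight $\beta_t$ and the i.i.d.\ copies $\widetilde{\beta}^{(i)}_t$ it matches the paper's argument in substance: specialize to $s(t)=1$, $\sigma(t)=t$ so that the coefficient in front of $\beta$ is exactly $tI(\vx,t)$; use $\|I\|_\infty\le B_\vy$ together with the fact that $P_\beta\gamma_\tau$ is a probability measure (Lemma~\ref{lem: derivation of single particle dynamics}); and integrate a linear differential inequality (the paper phrases this as an integral inequality plus Gronwall, you as a bound on $\tfrac{\dif}{\dif t}\log\beta_t$, but these are equivalent given $\beta_0=1>0$ forces $\beta_t>0$).

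Where you genuinely depart from the paper is the fully interacting ensemble $\beta^{(i)}_t$. The paper simply asserts this case follows ``via the same procedure,'' but that is not immediate: in~\eqref{eqn: PDE soln via ensemble of particles} the averaging term is the empirical quantity $A(t)=\tfrac{1}{N}\sum_j tI(\vx^{(j)}_t,t)\beta^{(j)}_t$, which involves the unknown weights themselves; bounding it naively by $B_\vy t\cdot\max_j|\beta^{(j)}_t|$ leads to a quadratic (Riccati-type) Gronwall inequality that blows up in finite time and cannot yield $\exp(B_\vy t^2)$. Your observation that $S(t):=\tfrac{1}{N}\sum_i\beta^{(i)}_t$ obeys $S'(t)=A(t)(1-S(t))$ with $S(0)=1$, hence $S\equiv 1$ pathwise, is precisely what restores the linear-Gronwall structure: positivity of the weights then gives $|A(t)|\le B_\vy t\,S(t)=B_\vy t$, exactly as in the other two cases. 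This conservation law is an additional insight that the paper's proof elides but actually needs; it is worth making explicit, as you did.
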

\begin{proof}
    Below we will only prove the upper bound in~\eqref{eqn: upper bound on abs values of weights} above for the weight $\widetilde{\beta}^{(i)}_t$ governed by~\eqref{eqn: copies of N single particles}, as the same upper bound for $\beta^{(i)}_t$ governed by~\eqref{eqn: PDE soln via ensemble of particles} and $\beta_t$ governed by~\eqref{eqn: PDE soln via single particle} can be proved via the same procedure. 
    
    By integrating from $0$ to $t$ on both sides of~\eqref{eqn: copies of N single particles} and applying the bound on $I$ provided in the statement of Theorem~\ref{thm: many-particle limit}, we have that
    \begin{equation}
    \label{eqn: integral bound on weight}
    \begin{aligned}
    \left|\widetilde{\beta}^{(i)}_t\right| &\leq \left|\int_{0}^{t}\left(U\left(\widetilde{\vx}^{(i)}_{\tau},\tau\right) - \widehat{\mH}\left(\widetilde{\vx}^{(i)}_{\tau},\tau\right)^\intercal\nabla_{\vx}\mu_{\vy}\left(\widetilde{\vx}^{(i)}_{\tau}\right)\right)\widetilde{\beta}^{(i)}_{\tau}\dif\tau\right| + \widetilde{\beta}^{(i)}_0\\
    &+\left|\int_{0}^{t}\left(\int_{\mathbb{R}^n}\left(U(\vx,\tau) - \widehat{\mH}(\vx,\tau)^\intercal\nabla_{\vx}\mu_{\vy}(\vx)\right)\left(P_{\beta}\gamma_\tau\right)(\vx)\dif \vx\right)\widetilde{\beta}^{(i)}_{\tau}\dif\tau\right|\\
    &\leq \int_{0}^{t}\left|\tau I\left(\widetilde{\vx}^{(i)}_{\tau},\tau\right)\widetilde{\beta}^{(i)}_{\tau}\right|\dif\tau + \int_{0}^{t}\left|\tau\left(\int_{\mathbb{R}^n}I\left(\vx,\tau\right)\left(P_{\beta}\gamma_\tau\right)(\vx)\dif \vx\right)\widetilde{\beta}^{(i)}_{\tau}\right|\dif\tau + 1\\
    &\leq 2\int_{0}^{t}B_{\vy}\tau\left|\widetilde{\beta}^{(i)}_{\tau}\right|\dif\tau  + 1,
    \end{aligned}    
    \end{equation}
    where the last inequality above follows from the assumed upper bound on $I$ and Lemma~\ref{lem: derivation of single particle dynamics}, which shows that the weighted projection $P_{\beta}\gamma_{t}$ is a probability measure on $\mathbb{R}^n$ for any $t \in [0,T]$. Applying Gronwall's inequality to~\eqref{eqn: integral bound on weight} then yields the upper bound in~\eqref{eqn: upper bound on abs values of weights}, as desired.
\end{proof}

\begin{proof}[Proof of Theorem~\ref{thm: many-particle limit}]

    By recalling the definition of the Wasserstein-2 distance as follows
    \begin{equation}
    \label{eqn: defn of wasserstein-2 distance}
    \gW_{2}^2(\mu,\nu) := \inf_{\Gamma \in \Pi(\mu,\nu)}\left(\int_{\mathbb{R}^d \times \mathbb{R}^d}\|\vx-\vy\|_2^2\Gamma(\vx,\vy)\dif\vx\dif\vy\right),
    \end{equation}
    where $\Pi(\mu,\nu)$ denotes the set of couplings between any two distributions $\mu,\nu$ on $\mathbb{R}^d$ for fixed $d$, we can apply triangle inequality 
    $$
        \|a+b\|_2^2 \leq 2\left(\|a\|_2^2 + \|b\|_2^2\right), 
    $$ 
    and take expectation on both sides to deduce that for any fixed $t \in [0,T]$ and $\tau \in [0,t]$, the following inequality
    \begin{equation}
    \label{eqn: wasserstein triangle inequality}
    \E\left[\gW^2_2(\gamma^{N}_\tau, \gamma_\tau)\right] \leq 2\E\left[\gW^2_2(\gamma^{N}_\tau, \widetilde{\gamma}^{N}_\tau)\right] + 2\E\left[\gW^2_2(\widetilde{\gamma}^{N}_\tau, \gamma_\tau)\right]
    \end{equation}
    holds for any $N$. 
    
    Taking supremum with respect to $\tau \in [0,t]$ on both sides above then yields 
    \begin{equation}
    \label{eqn: wasserstein triangle inequality sup version}
    \sup_{\tau \in [0,t]}\E\left[\gW^2_2(\gamma^{N}_\tau, \gamma_\tau)\right] \leq 2\sup_{\tau \in [0,t]}\E\left[\gW^2_2(\gamma^{N}_\tau, \widetilde{\gamma}^{N}_\tau)\right] + 2\sup_{\tau \in [0,t]}\E\left[\gW^2_2(\widetilde{\gamma}^{N}_\tau, \gamma_\tau)\right].
    \end{equation}

    We then need to bound the two terms on the RHS of~\eqref{eqn: wasserstein triangle inequality}. 
    
    For the first term in~\eqref{eqn: wasserstein triangle inequality}, we note that the empirical measure 
    $$
        \frac{1}{N}\sum_{i=1}^{N}\delta_{(\vx^{(i)}_\tau,\beta^{(i)}_\tau),(\widetilde{\vx}^{(i)}_\tau,\widetilde{\beta}^{(i)}_\tau)}
    $$ 
    defined on $\mathbb{R}^{n+1} \times \mathbb{R}^{n+1}$ is a coupling between $\gamma^N_\tau$ and $\widetilde{\gamma}^N_\tau$ for any time $\tau \in [0,t]$. Setting $\Gamma$ in~\eqref{eqn: defn of wasserstein 2 distance} to be such a coupling then gives us the following upper bound on the expected Wasserstein-2 distance:
    \begin{equation}
        \label{eqn: coupling bound}
        \begin{aligned}
        \sup_{\tau \in [0,t]}\E\left[\gW^2_2(\gamma^{N}_\tau, \widetilde{\gamma}^{N}_\tau)\right] &\leq \sup_{\tau \in [0,t]}\left(\frac{1}{N}\sum_{i=1}^{N}\E\left[\left\|\vx^{(i)}_t - \widetilde{\vx}^{(i)}_t\right\|_2^2 + \left|\beta^{(i)}_t - \widetilde{\beta}^{(i)}_t\right|^2\right]\right)\\
        &= \frac{1}{N}\sum_{i=1}^{N}\sup_{\tau \in [0,t]}\E\left[\left|\beta^{(i)}_t - \widetilde{\beta}^{(i)}_t\right|^2\right].     
        \end{aligned}  
    \end{equation}
    where the equality above follows from the observation $\vx^{(i)}_t \equiv \widetilde{\vx}^{(i)}_t$ for any $i \in [N]$ and $t \in [0,T]$. 
    
    Below, we use 
    $$
        L(\vx,t):=U(\vx,t) - \widehat{\mH}(\vx,t)^\intercal\nabla_{\vx}\mu_{\vy}(\vx)
    $$ 
    to denote the drift function appearing in the dynamics~\eqref{eqn: PDE soln via ensemble of particles} and~\eqref{eqn: copies of N single particles}. By plugging in the choices $s(t) = 1, \sigma(t) = t$ stated in Theorem~\ref{thm: many-particle limit}, we then have 
    \begin{equation}
    \label{eqn: computation of drift function under particular choices}
    L =U - \widehat{\mH}^\intercal\nabla_{\vx}\mu_{\vy} = t\left(\left\|\nabla_{\vx}\mu_{\vy}\right\|_2^2 - \Delta_{\vx}\mu_{\vy}\right) - 2t\vphi_{\theta}^\intercal\left(\nabla_{\vx}\mu_{\vy}\right) = tI,
    \end{equation}
    where $I = I(\vx,t)$ is defined in the statement of Theorem~\ref{thm: many-particle limit}.

    Now we return to bound the RHS of~\eqref{eqn: coupling bound}. By taking the difference between the two dynamics~\eqref{eqn: PDE soln via ensemble of particles} and~\eqref{eqn: copies of N single particles} and applying triangle inequality, we then plug in $\widetilde{\vx}^{(i)}_t \equiv \vx^{(i)}_t$ to obtain the following decomposed upper bound for any $\tau' \in [0,\tau]$ with fixed $\tau \in [0,t]$ and $i \in [N]$:
    \begin{equation}
    \label{eqn: bound on difference between derivatives}
    \begin{aligned}
    \left|\frac{\dif}{\dif \tau'}\beta^{(i)}_{\tau'} - \frac{\dif}{\dif \tau'}\widetilde{\beta}^{(i)}_{\tau'}\right| &\leq \left|L\left(\vx^{(i)}_{\tau'},\tau'\right)\left(\beta^{(i)}_{\tau'} - \widetilde{\beta}^{(i)}_{\tau'}\right)\right|\\
    &+\left|\left(\int_{\mathbb{R}^n}L\left(\vx,\tau'\right)\left(P_{\beta}\gamma_{\tau'}\right)(\vx)\dif \vx\right)\left(\beta^{(i)}_{\tau'} - \widetilde{\beta}^{(i)}_{\tau'}\right)\right|\\
    &+\left|\int_{\mathbb{R}^{n+1}}\beta L\left(\vx,\tau'\right)\left(\gamma^N_{\tau'}(\vx,\beta) - \gamma_{\tau'}(\vx,\beta)\right)\dif\vx\dif\beta\right|\left|\beta^{(i)}_{\tau'}\right|\\
    &\leq  2B_{\vy}\tau'\left|\beta^{(i)}_{\tau'} - \widetilde{\beta}^{(i)}_{\tau'}\right|\\ &+\left|\int_{\mathbb{R}^{n+1}}\beta I\left(\vx,\tau'\right)\left(\gamma^N_{\tau'}(\vx,\beta) - \gamma_{\tau'}(\vx,\beta)\right)\dif\vx\dif\beta\right|\tau'\exp\left(B_{\vy}\tau'^2\right),
    \end{aligned}    
    \end{equation}
    where the last inequality above follows from~\eqref{eqn: upper bound on abs values of weights} and assumed upper bound on the function $I=\frac{1}{t}L$. 
    
    Furthermore, we recall the following property of the Wasserstein distances $\gW_1$ and $\gW_2$:
    \begin{equation}
    \label{eqn: bound on Wasserstein-1 distance}
    \gW_{1}(\mu,\nu) := \sup_{g:\mathbb{R}^n \rightarrow \mathbb{R}, \ \mathrm{Lip}(g) \leq 1}\int_{\mathbb{R}^n}g(\vx)\left(\mu(\vx)-\nu(\vx)\right)\dif\vx \leq \gW_{2}(\mu,\nu).
    \end{equation}

    From the assumed upper bound on $\mathrm{Lip}\left(I\right)$ given in Theorem~\ref{thm: many-particle limit}, we have $\mathrm{Lip}\left(\frac{1}{B_{\vy}}I\right) \leq 1$. Setting $$g(\vx,\beta):=\frac{\beta I(\vx,\tau')}{e^{B_{\vy}\tau'^2}B_{\vy}},$$ $\mu:=\gamma^N_{\tau'}$, and $\nu:=\gamma_{\tau'}$ in~\eqref{eqn: bound on Wasserstein-1 distance} above for any $\tau' \in [0,\tau]$ then implies
    \begin{equation}
    \label{eqn: bound on second part via wasserstein 1-2}
    \begin{aligned}
    \left|\int_{\mathbb{R}^{n}}\beta I\left(\vx,\tau'\right)\left(\gamma^N_{\tau'}(\vx,\beta) - \gamma_{\tau'}(\vx,\beta)\right)\dif\vx\dif\beta\right| &\leq B_{\vy}e^{B_{\vy}\tau'^2}\gW_1\left(\gamma^N_{\tau'},\gamma_{\tau'}\right) \\
    &\leq B_{\vy}e^{B_{\vy}\tau'^2}\gW_2\left(\gamma^N_{\tau'},\gamma_{\tau'}\right).
    \end{aligned}    
    \end{equation}
    Substituting~\eqref{eqn: bound on second part via wasserstein 1-2} into~\eqref{eqn: bound on difference between derivatives}, squaring on both sides and applying AM-GM inequality indicate that for any $\tau' \in [0,\tau]$ and $i \in [N]$:
    \begin{equation}
    \begin{aligned}
    \left|\frac{\dif}{\dif \tau'}\beta^{(i)}_{\tau'} - \frac{\dif}{\dif \tau'}\widetilde{\beta}^{(i)}_{\tau'}\right|^2 &\leq  \left(2B_{\vy}\tau'\left|\beta^{(i)}_{\tau'} - \widetilde{\beta}^{(i)}_{\tau'}\right| + B_{\vy}\tau'\exp\left(2B_{\vy}\tau'^2\right)\gW_2\left(\gamma^N_{\tau'},\gamma_{\tau'}\right)\right)^2 \\
    &\leq 8B_{\vy}^2\tau'^2\left|\beta^{(i)}_{\tau'} - \widetilde{\beta}^{(i)}_{\tau'}\right|^2 + 2B_{\vy}^2\tau'^2\exp\left(4B_{\vy}\tau'^2\right)\gW^2_2\left(\gamma^N_{\tau'},\gamma_{\tau'}\right).
    \end{aligned}    
    \end{equation}

    Integrating from $\tau'=0$ to $\tau'=\tau$ on both sides above and applying Cauchy-Schwarz inequality imply that for any $\tau \in [0,t]$ and $i \in [N]$:
    \begin{equation}
    \label{eqn: intermediate upper bound on difference between weights}
    \begin{aligned}
    \left|\beta^{(i)}_{\tau} - \widetilde{\beta}^{(i)}_{\tau}\right|^2 &= \left|\int_{0}^{\tau}\left(\frac{\dif}{\dif \tau'}\beta^{(i)}_{\tau'} - \frac{\dif}{\dif \tau'}\widetilde{\beta}^{(i)}_{\tau'}\right)\dif\tau'\right|^2 \leq \tau\left(\int_{0}^{\tau}\left|\frac{\dif}{\dif \tau'}\beta^{(i)}_{\tau'} - \frac{\dif}{\dif \tau'}\widetilde{\beta}^{(i)}_{\tau'}\right|^2\dif\tau'\right)\\
    &\leq 8B_{\vy}^2\tau\int_{0}^{\tau}\tau'^2\left|\beta^{(i)}_{\tau'} - \widetilde{\beta}^{(i)}_{\tau'}\right|^2\dif\tau' \\
    &+ 2\tau\int_{0}^{\tau}B_{\vy}^2\tau'^2\exp\left(4B_{\vy}\tau'^2\right)\gW^2_2\left(\gamma^N_{\tau'},\gamma_{\tau'}\right)\dif\tau'.\\
    \end{aligned}    
    \end{equation}

    Applying Gronwall's inequality to the function $\frac{1}{\tau'}\left|\beta^{(i)}_{\tau'} - \widetilde{\beta}^{(i)}_{\tau'}\right|^2$ in~\eqref{eqn: intermediate upper bound on difference between weights} above then yields 
    \begin{equation}
    \label{eqn: Gronwall for divided function}
    \begin{aligned}
    \frac{1}{\tau}\left|\beta^{(i)}_{\tau} - \widetilde{\beta}^{(i)}_{\tau}\right|^2 &\leq 2\left(\int_{0}^{\tau}B_{\vy}^2\tau'^2\exp\left(4B_{\vy}\tau'^2\right)\gW^2_2\left(\gamma^N_{\tau'},\gamma_{\tau'}\right)\dif\tau'\right)e^{\int_{0}^{\tau}8B_{\vy}^2\tau'^3\dif\tau'}.
    \end{aligned}    
    \end{equation}

    Then we multiply $\tau$ and take the expectation on both sides of~\eqref{eqn: Gronwall for divided function}. A direct application of Fubini's Theorem then indicates that for any $i \in [N]$ and $\tau \in [0,t]$:
    \begin{equation}
    \label{eqn: bound on expectation via integral}
    \begin{aligned}
    \E\left[\left|\beta^{(i)}_{\tau} - \widetilde{\beta}^{(i)}_{\tau}\right|^2\right] 
    &\leq 2\tau e^{2B_{\vy}^2\tau^4}\int_{0}^{\tau}B_{\vy}^2\tau'^2\exp\left(4B_{\vy}\tau'^2\right)\E\left[\gW^2_2\left(\gamma^N_{\tau'},\gamma_{\tau'}\right)\right]\dif\tau'\\
    &\leq 2B_{\vy}^2\tau^3e^{2B_{\vy}^2\tau^4 + 4B_{\vy}\tau^2}\int_{0}^{\tau}\sup_{\tau''\in[0,\tau']}\E\left[\gW^2_2\left(\gamma^N_{\tau''},\gamma_{\tau''}\right)\right]\dif\tau'.
    \end{aligned}    
    \end{equation}

    Taking supremum with respect to $\tau \in [0,t]$ on both sides of~\eqref{eqn: bound on expectation via integral} further implies 
    \begin{equation}
    \label{eqn: bound on sup expectation via integral}
    \begin{aligned}
    \sup_{\tau \in [0,t]}\E\left[\left|\beta^{(i)}_{\tau} - \widetilde{\beta}^{(i)}_{\tau}\right|^2\right] \leq 2B_{\vy}^2t^3e^{2B_{\vy}^2t^4 + 4B_{\vy}t^2}\int_{0}^{t}\sup_{\tau' \in[0,\tau]}\E\left[\gW^2_2\left(\gamma^N_{\tau'},\gamma_{\tau'}\right)\right]\dif\tau,    
    \end{aligned}    
    \end{equation}
    for any $i \in [N]$ and $t \in [0,T]$. 
    
    Substituting~\eqref{eqn: bound on sup expectation via integral} above into~\eqref{eqn: coupling bound} and then~\eqref{eqn: wasserstein triangle inequality sup version} indicates
    \begin{equation}
    \label{eqn: pre gronwall for the final bound}
    \begin{aligned}
    \sup_{\tau \in [0,t]}\E\left[\gW^2_2(\gamma^{N}_\tau, \gamma_\tau)\right] \leq &4B_{\vy}^2t^3e^{2B_{\vy}^2t^4 + 4B_{\vy}t^2}\int_{0}^{t}\sup_{\tau' \in[0,\tau]}\E\left[\gW^2_2\left(\gamma^N_{\tau'},\gamma_{\tau'}\right)\right]\dif\tau \\
    + &2\sup_{\tau \in [0,t]}\E\left[\gW^2_2(\widetilde{\gamma}^{N}_\tau, \gamma_\tau)\right]\\
    \leq &\int_{0}^{t}4B_{\vy}^2T^3e^{2B_{\vy}^2T^4 + 4B_{\vy}T^2}\sup_{\tau' \in[0,\tau]}\E\left[\gW^2_2\left(\gamma^N_{\tau'},\gamma_{\tau'}\right)\right]\dif\tau \\
    + &2\sup_{\tau \in [0,t]}\E\left[\gW^2_2(\widetilde{\gamma}^{N}_\tau, \gamma_\tau)\right],
    \end{aligned}
    \end{equation}
    for any $t \in [0,T]$. 
    
    Applying Gronwall's inequality again to the function $\sup_{\tau \in [0,t]}\E\left[\gW^2_2(\gamma^{N}_\tau, \gamma_\tau)\right]$ further implies that
    \begin{equation}
    \label{eqn: final upper bound}
    \sup_{\tau \in [0,t]}\E\left[\gW^2_2(\gamma^{N}_\tau, \gamma_\tau)\right] \leq 2\exp\left(4B_{\vy}^2T^4e^{2B_{\vy}^2T^4 + 4B_{\vy}T^2}\right)\sup_{\tau \in [0,t]}\E\left[\gW^2_2(\widetilde{\gamma}^{N}_\tau, \gamma_\tau)\right]
    \end{equation}
    for any $t \in [0,T]$. 
    
    By setting $t=T$ in~\eqref{eqn: final upper bound} above and taking the limit $N \rightarrow \infty$, we then have 
    $$\lim_{N \rightarrow \infty}\E\left[\gW^2_2(\gamma^{N}_\tau, \gamma_\tau)\right] = \lim_{N \rightarrow \infty}\E\left[\gW^2_2(\widetilde{\gamma}^{N}_\tau, \gamma_\tau)\right] =0,$$
    for any $\tau \in [0,T]$ with $T$ fixed, where the last equality above follows from Lemma~\ref{lem: derivation of single particle dynamics} and the law of large numbers (See, for instance, \cite[Corollary 2.14]{lacker2018mean}). This concludes our proof. 
\end{proof}

\begin{remark}
We note that one may also adopt similar arguments used in~\cite{domingo2020mean} to prove existence and uniqueness of solutions to the SDE systems~\eqref{eqn: PDE soln via ensemble of particles} and~\eqref{eqn: PDE soln via single particle}. In fact, such type of mean field analysis based on arguments from propogation of chaos have been widely adopted for studying different types of PDEs arising from subfields of not only physical sciences but also data sciences, such as fluid dynamics~\cite{goodman1990convergence}, kinetic theory~\cite{carrillo2021wasserstein,borghi2025wasserstein}, theory of two layer neural networks~\cite{mei2019mean,hu2021mean}, ensemble-based sampling and variational inference~\cite{lu2019scaling, kelly2014well,schillings2017analysis,schillings2018convergence,ding2021ensemble1,ding2021ensemble2}. For some good reference on related mathematical models, one may refer to~\cite{muntean2016macroscopic}. Therefore, it would be of independent interest to investigate whether we can develop more refined mathematical theory for the two sampling algorithms proposed in this paper by combining perspectives from gradient flows or numerical analysis. Moreover, it would also be interesting to investigate how existing mathematical theory~\cite{eberle2006convergence,schweizer2012non,eberle2013quantitative,beskos2014error,beskos2014stability,beskos2016convergence,giraud2017nonasymptotic} developed for SMC can be applied to analyze Algorithm~\ref{alg:sde} and Algorithm~\ref{alg:edm pf-ode + corrector} that we proposed here. 
\end{remark}

\section{Additional Implementation Details}
\label{sec: implementation detail app}
\subsection{Datasets, model checkpoints and inverse problem setups}
\paragraph{Data usage} We mainly test our methods and the baseline methods on the FFHQ-256~\cite{karras2019style} dataset and the ImageNet-256~\cite{deng2009imagenet} dataset. All images used for the tests in this paper are in RGB. For FFHQ-256, the 100 testing images were selected to be the first 100 images in the dataset, whoses indexes range from $00000$ to $00099$. For ImageNet-256, the 100 testing images were selected to be the first 100 images in the ImageNet-1k validation set. However, when we further test only the two algorithms AFDPS-SDE and AFDPS-ODE proposed in this paper and exhibit their performance in Appendix~\ref{sec: experiment results detail app}, we have enlarged the testing dataset to be the whole FFHQ-dataset and ImageNet 1k-validation dataset. 

\paragraph{Model checkpoints}
The two pretrained score functions for the FFHQ-256 and the ImageNet-256 datasets used in this paper were directly taken from the ones used in~\cite{chung2022diffusion}, which are available in the following Google Drive \footnote{\href{https://drive.google.com/drive/folders/1jElnRoFv7b31fG0v6pTSQkelbSX3xGZh}{Pretrained score functions used in~\cite{chung2022diffusion}}}. However, since these checkpoints were all trained based on the DDPM formulation~\cite{ho2020denoising}, we adpoted the same transformation used in~\cite{wu2024principled} to convert the pretrained score function from the DDPM formulation to the EDM formulation~\cite{karras2022elucidating}. One may refer to the ``Preconditioning'' subsection in Appendix C.2 of~\cite{wu2024principled} for an explicit formula of the transformation deployed here. 

\paragraph{Inverse problem setups} Below we provide a discussion on the mathematical formulations of the four inverse problems we tested on here. 

\subparagraph{Super-resolution} The forward model in~\eqref{eqn: BIP setup} associated with the super-resolution problem we test on here can be written as 
$$\vy = P_{f}\vx + \vn$$
where $P_{f} \in \mathbb{R}^{\frac{n}{f} \times n}$ implements a block averaging filter that downscales each image by a factor of $f$ and $\vn \sim \gN(\vzero, 0.2 \mI_{\frac{n}{f}})$. Using similar setups as many previous work~\cite{chung2022diffusion,kawar2022denoising,wu2024principled} on solving inverse problems via diffusion models, here we pick $f =4$. 

\subparagraph{Gaussian and motion deblurring} The forward model associated with any deblurring problem can be summarized as 
$$\vy = B_{k}\vx + \vn$$
where $\vn \sim \gN(\vzero, 0.2 \mI_{n})$ and $B_{k} \in \mathbb{R}^{n \times n}$ is a circulant matrix that realizes a convolution with the kernel $k$ under circular boundary condition. Again, we adopt the same settings used in most previous work~\cite{chung2022diffusion,kawar2022denoising,wu2024principled}. 

Specifically, for the Gaussian deblurring problem, the convolutional kernel $k$ is fixed to be a Gaussian kernel of standard deviation $3.0$ and size $61 \times 61$. For the motion deblurring problem, the kernel $k$ is randomly generated via code used in previous work~\cite{kawar2022denoising,wu2024principled}, where the size is chosen to be $61 \times 61$ and the intensity is set to be $0.5$. In order to ensure a fair comparison, we use the same motion kernel $k$ for each image across different methods. 

\subparagraph{Box inpainting} The forward model for the box inpainting problem is given by 
$$\vy = D \vx +\vn$$
where $\vn \sim \gN(\vzero, 0.2 \mI_{n})$ and $D$ is a diagonal matrix with either $0$ or $1$ on its diagonal. In particular, here we choose $D$ such that a centered square patch of size $64 \times 64$ (i.e., the side length is a quarter of the original image's side length) is masked out.

\subsection{Implementation details of AFDPS and all baseline methods}
Regarding computing resources, all experiments included in this paper were conducted on NVIDIA RTX A100 and A6000 GPUs. A major part of the code implementing Algorithm~\ref{alg:sde} and~\ref{alg:edm pf-ode + corrector} in this paper were adapted from the following Github repository\footnote{\href{https://github.com/zihuiwu/PnP-DM-public}{Source code for~\cite{wu2024principled}}}. Specifically, we used the same numerical discretization as that of the EDM framework~\cite{karras2019style}, which is also deployed in~\cite{wu2024principled}. One major difference is that we had tuned the terminal time to be $T=8$ for both AFDPS-SDE (Alg.~\ref{alg:sde}) and AFDPS-ODE (Alg.~\ref{alg:edm pf-ode + corrector}), while $T$ is set to be $80$ for both the SGS-EDM method~\cite{wu2024principled} and the original EDM framework~\cite{karras2022elucidating}. Moreover, we increased the number of discretized timesteps as our methods avoids running multiple backward diffusion processes for different iterations. Specifically, for AFDPS-SDE the number particles and discretized timesteps were set to be $10$ and $2000$, respectively. For the AFDPS-ODE method, in order to control the total number of evaluations (NFEs), we set the number of particles, discretized timesteps and number of corrector steps at each time to be $5$, $1000$ and $4$, respectively. Moreover, for both AFDPS-SDE (Algorithm~\ref{alg:sde}) and AFDPS-ODE (Algorithm~\ref{alg:edm pf-ode + corrector}), we save computational cost by skipping the resampling step specified in Algorithm~\ref{alg:resampling} in our implementation, which allows us to implement the dynamics of the particles' positions and weights in a parallel way. Finally, we return the particle associated with the largest weight as our best estimator of the recovered image. Given that we already take the logarithm of the weights in both AFDPS-SDE and AFDPS-ODE, they are guaranteed to remain numerically stable as time increases. 

Here we further elaborate on the implementation details associated with the baseline methods. One thing to note is that two extra baselines are included in the extended numerical results presented in Tables~\ref{tab:ffhq result} and~\ref{tab:imagenet result} above. The following list provides an extended summary of these baselines and how we choose the parameters:
\begin{itemize}

    \item \emph{DPS~\cite{chung2022diffusion}}: a method that performs posterior sampling by guiding the reverse diffusion process with manifold-constrained gradients derived from the measurement likelihood, enabling efficient inference in general noisy (non)linear inverse problems. We adopt most parameters usedin the default setting. The only difference is that we increase the number of discretized timesteps from $1000$ to $1500$, which helps make the method more tolerant of problems with higher observational noises
    
    \item \emph{DCDP~\cite{li2024decoupled}}: a framework that alternates between data-consistent reconstruction and diffusion-based purification, which decouples data fidelity and prior sampling to improve flexibility and performance in image restoration tasks. In order to make the DCDP method adaptive to problems with higher observational noise, we change their settings by picking the number of iterations involved in both the data-reconstruction step and the diffusion-based purification step to be $100$. Regarding the learning rates used for the data-reconstruction step, we have tuned them to yield the best possible performance. Specifically, the learning rates for the Gaussian deblurring, box inpainting, motion deblurring and super-resolution problems were set to be $10,7,10$ and $3$, respectively. 
    
    \item \emph{SGS-EDM~\cite{wu2024principled}}: a method that couples a split Gibbs sampler with a diffusion model, interpreting posterior inference as alternating between likelihood-based updates and Gaussian denoising via a learned generative prior. For the SGS-EDM method, we adopt the default setting used in~\cite{wu2024principled}.
    
    \item \emph{FK-Corrector~\cite{skreta2025feynman}}: a method that uses the Feynman-Kac formula to design corrector steps within a sequential Monte Carlo framework, improving the accuracy of samples from forward diffusion trajectories. We use the same set of parameters deployed in the AFDPS-SDE method by setting the number of particles and discretized timesteps to be $10$ and $2000$ as well, which ensures a fair comparison. 
    
    \item \emph{PF-SMC-DM~\cite{dou2024diffusion}}: a framework that formulates posterior sampling as a particle filtering problem, combining sequential Monte Carlo with diffusion models for efficient inference in high-dimensional spaces. Again, to ensure a fair comparison, we increase the number of particles and discretized timesteps to be $10$ and $2000$ for PF-SMC-DM as well. 
    
\end{itemize}

\section{Additional Experimental Results and Discussions}
\label{sec: experiment results detail app}

In this section, we provide additional experimental results and detailed qualitative comparisons between our proposed methods and existing baselines.

\paragraph{Summary.}

Across the diverse inverse problems evaluated on FFHQ-256 and ImageNet-256 (detailed in Table~\ref{tab:ffhq result} and Table~\ref{tab:imagenet result}), the AFDPS framework consistently delivers strong results. The AFDPS-SDE variant, in particular, frequently distinguishes itself by producing visually compelling outcomes, excelling in the generation of sharp details and fine textures that contribute to high perceptual quality. This is evident in Figures~\ref{fig:gaussian-deblurring-grid}-\ref{fig:box-inpainting-grid}, where AFDPS-SDE's reconstructions often appear more intricate and realistic. The AFDPS-ODE variant also provides coherent results, which are typically characterized by a notable smoothness. For tasks where capturing the utmost detail and textural accuracy is paramount, AFDPS-SDE often provides a particularly effective solution, frequently leading in or strongly competing for the best perceptual metrics (LPIPS).

\paragraph{Gaussian Deblurring.}
In Gaussian deblurring, AFDPS-SDE showcases its ability to produce perceptually rich outputs, achieving the best LPIPS on ImageNet-256 (0.3925) and a competitive LPIPS on FFHQ-256 (0.2580). Figure~\ref{fig:gaussian-deblurring-grid} highlights SDE's strength in rendering sharp, defined textures like the dog's fur (ImageNet, row 2). Concurrently, AFDPS-ODE achieves high PSNR on both datasets and the best LPIPS on FFHQ-256 (24.98 PSNR, 0.2560 LPIPS), delivering notably clean and smooth outputs, for example, on the baby's facial skin (FFHQ, row 2).

\paragraph{Motion Deblurring.}
For motion deblurring, AFDPS-SDE demonstrates strong perceptual quality, securing the best LPIPS score (0.2869) on FFHQ-256, while PF-SMC-DM leads in PSNR. Figure~\ref{fig:motion-deblurring-grid} emphasizes SDE's proficiency in transforming blurred images into sharp, detailed reconstructions, meticulously recovering fine details like individual hair strands in FFHQ portraits (e.g., row 5). AFDPS-ODE also effectively removes blur, yielding coherent results, typically with a characteristically smoother finish.

\paragraph{Super-Resolution.}
AFDPS-SDE stands out in super-resolution, achieving the best PSNR and LPIPS scores on both FFHQ-256 (22.96 PSNR, 0.3063 LPIPS) and ImageNet-256 (20.97 PSNR, 0.4643 LPIPS). Figure~\ref{fig:super-resolution-grid} compellingly shows SDE generating sharp, highly detailed images from severely degraded inputs, adeptly reconstructing fine facial features (FFHQ, row 2 and 5) and intricate object textures like butterfly patterns (ImageNet, row 2). AFDPS-ODE also provides coherent upscaled outputs, especially for FFHQ dataset, reaffirming the metrics in the tables.

\paragraph{Box Inpainting.}
In box inpainting combined with denoising, AFDPS-SDE shows robust performance, securing the highest PSNR on ImageNet-256 (23.15). Figure~\ref{fig:box-inpainting-grid} highlights SDE's ability to generate detailed and realistically textured inpainted regions, such as the intricate dog fur (ImageNet, row 1) or sharp keyboard key structures (ImageNet, row 4). AFDPS-ODE also performs strongly, achieving best LPIPS on both datasets (FFHQ: 0.1969, ImageNet: 0.2716) and best PSNR on FFHQ (25.73), producing notably smooth and coherent fills, like seamless facial features (FFHQ, row 1).


\clearpage
\begin{figure}[!ht]
    \centering
    
    \begin{minipage}[t]{0.49\textwidth}
    \makebox[0.24\textwidth]{\small Measurement}
    \makebox[0.24\textwidth]{\small Ours (SDE)}
    \makebox[0.24\textwidth]{\small Ours (ODE)}
    \makebox[0.24\textwidth]{\small Ground Truth}
        
    \includegraphics[width=0.24\textwidth]{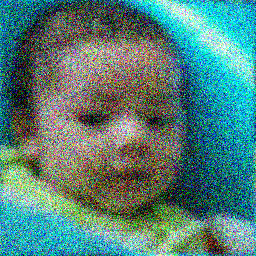}
    \includegraphics[width=0.24\textwidth]{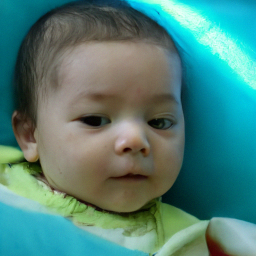}
    \includegraphics[width=0.24\textwidth]{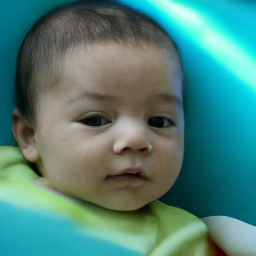}
    \includegraphics[width=0.24\textwidth]{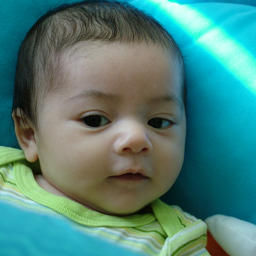}

    \includegraphics[width=0.24\textwidth]{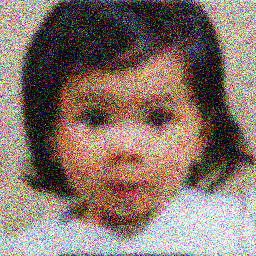}
    \includegraphics[width=0.24\textwidth]{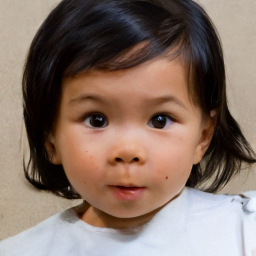}
    \includegraphics[width=0.24\textwidth]{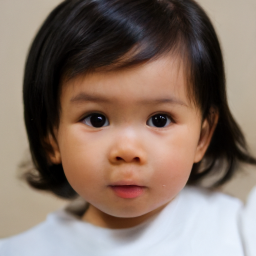}
    \includegraphics[width=0.24\textwidth]{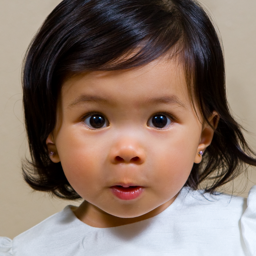}

    \includegraphics[width=0.24\textwidth]{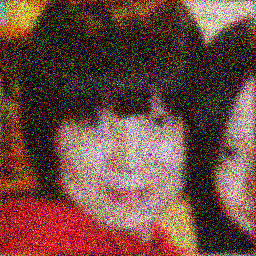}
    \includegraphics[width=0.24\textwidth]{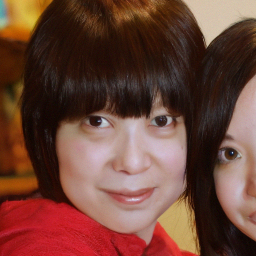}
    \includegraphics[width=0.24\textwidth]{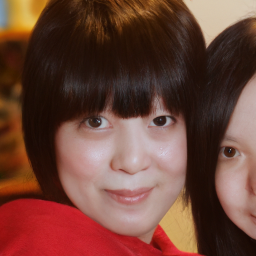}
    \includegraphics[width=0.24\textwidth]{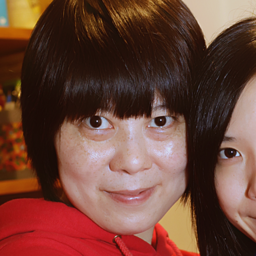}

    \includegraphics[width=0.24\textwidth]{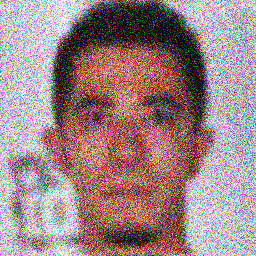}
    \includegraphics[width=0.24\textwidth]{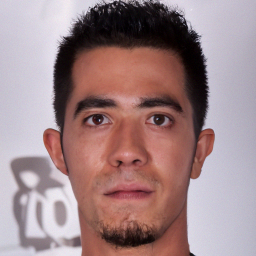}
    \includegraphics[width=0.24\textwidth]{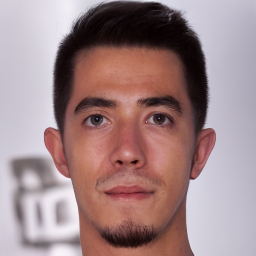}
    \includegraphics[width=0.24\textwidth]{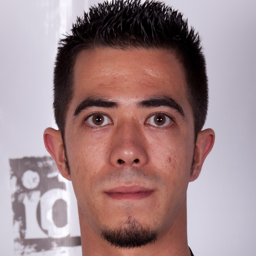}

    \includegraphics[width=0.24\textwidth]{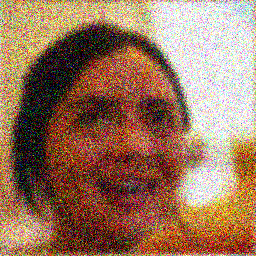}
    \includegraphics[width=0.24\textwidth]{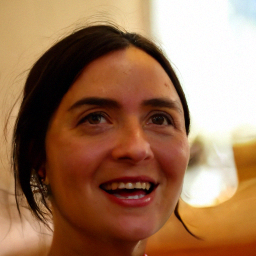}
    \includegraphics[width=0.24\textwidth]{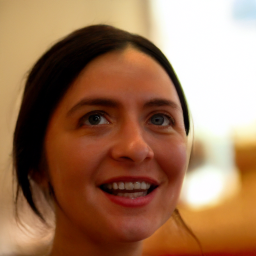}
    \includegraphics[width=0.24\textwidth]{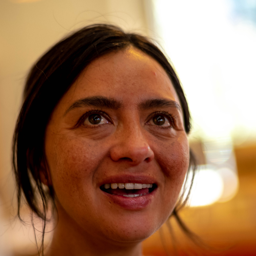}
    \end{minipage}
    \hfill
    \begin{minipage}[t]{0.49\textwidth}
    \makebox[0.24\textwidth]{\small Measurement}
    \makebox[0.24\textwidth]{\small Ours (SDE)}
    \makebox[0.24\textwidth]{\small Ours (ODE)}
    \makebox[0.24\textwidth]{\small Ground Truth}

    \includegraphics[width=0.24\textwidth]{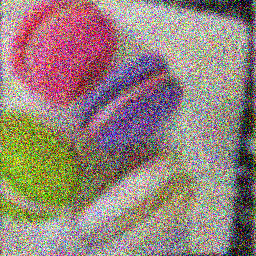}
    \includegraphics[width=0.24\textwidth]{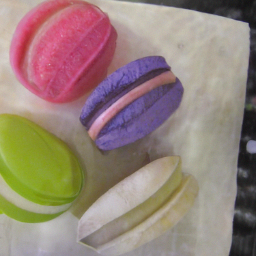}
    \includegraphics[width=0.24\textwidth]{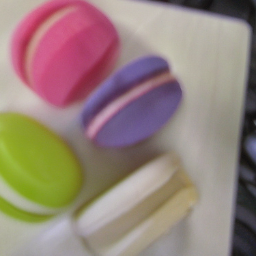}
    \includegraphics[width=0.24\textwidth]{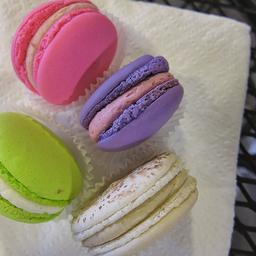}

    \includegraphics[width=0.24\textwidth]{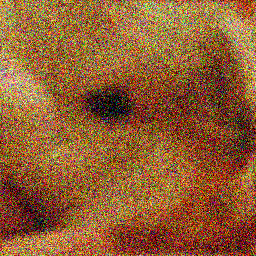}
    \includegraphics[width=0.24\textwidth]{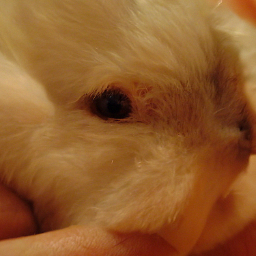}
    \includegraphics[width=0.24\textwidth]{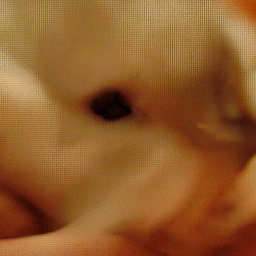}
    \includegraphics[width=0.24\textwidth]{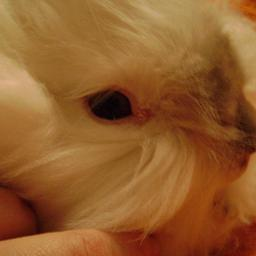}

    \includegraphics[width=0.24\textwidth]{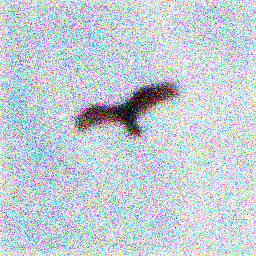}
    \includegraphics[width=0.24\textwidth]{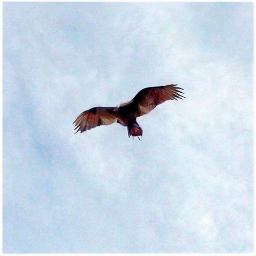}
    \includegraphics[width=0.24\textwidth]{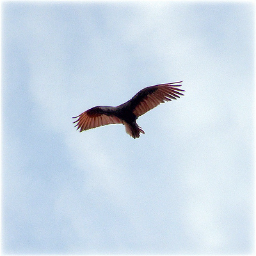}
    \includegraphics[width=0.24\textwidth]{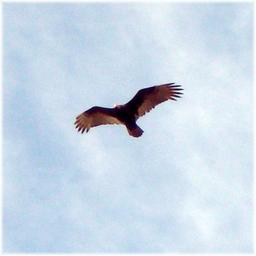}

    \includegraphics[width=0.24\textwidth]{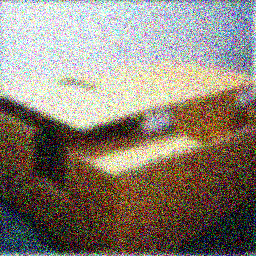}
    \includegraphics[width=0.24\textwidth]{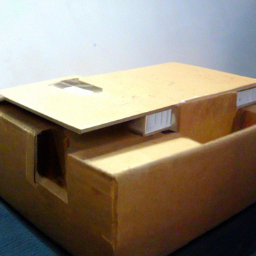}
    \includegraphics[width=0.24\textwidth]{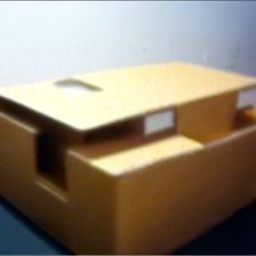}
    \includegraphics[width=0.24\textwidth]{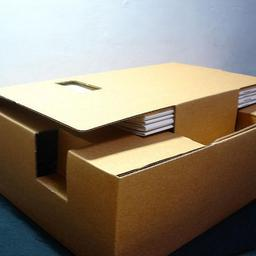}

    \includegraphics[width=0.24\textwidth]{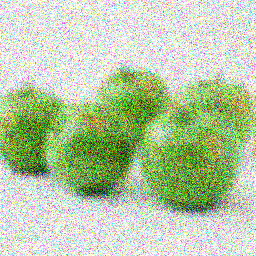}
    \includegraphics[width=0.24\textwidth]{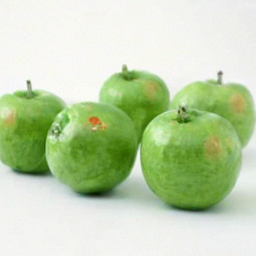}
    \includegraphics[width=0.24\textwidth]{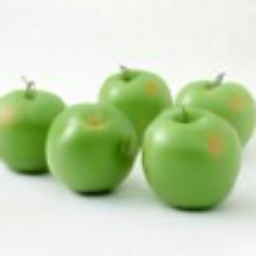}
    \includegraphics[width=0.24\textwidth]{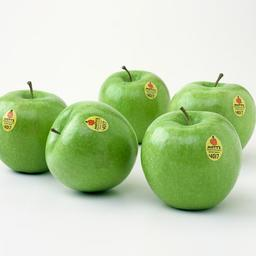}

    \end{minipage}
    \caption{Additional visual examples for the Gaussian deblurring problem on FFHQ and ImageNet.}
    \label{fig:gaussian-deblurring-grid}
\end{figure}

\begin{figure}[!ht]
    \centering
    
    \begin{minipage}[t]{0.49\textwidth}
    \makebox[0.24\textwidth]{\small Measurement}
    \makebox[0.24\textwidth]{\small Ours (SDE)}
    \makebox[0.24\textwidth]{\small Ours (ODE)}
    \makebox[0.24\textwidth]{\small Ground Truth}
        
    \includegraphics[width=0.24\textwidth]{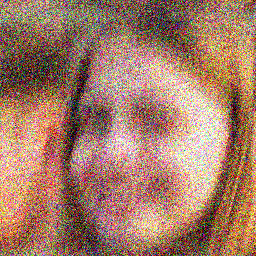}
    \includegraphics[width=0.24\textwidth]{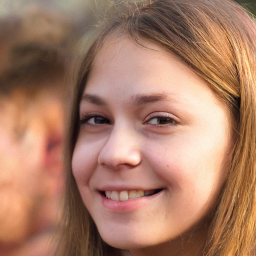}
    \includegraphics[width=0.24\textwidth]{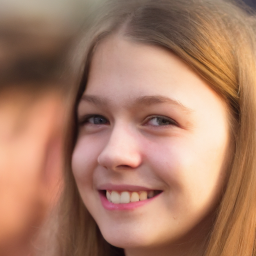}
    \includegraphics[width=0.24\textwidth]{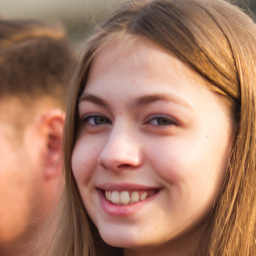}

    \includegraphics[width=0.24\textwidth]{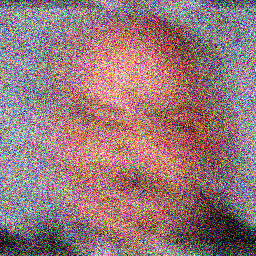}
    \includegraphics[width=0.24\textwidth]{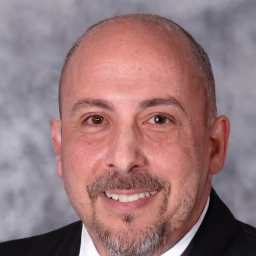}
    \includegraphics[width=0.24\textwidth]{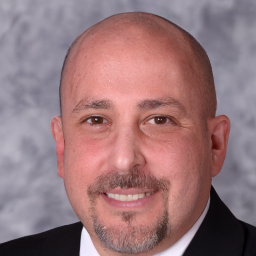}
    \includegraphics[width=0.24\textwidth]{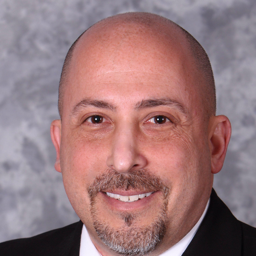}

    \includegraphics[width=0.24\textwidth]{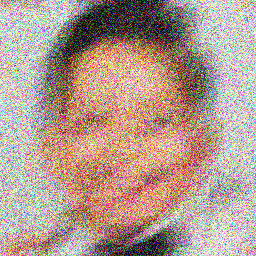}
    \includegraphics[width=0.24\textwidth]{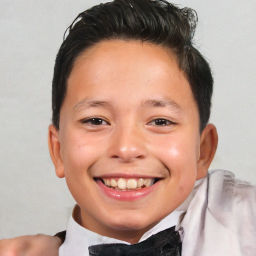}
    \includegraphics[width=0.24\textwidth]{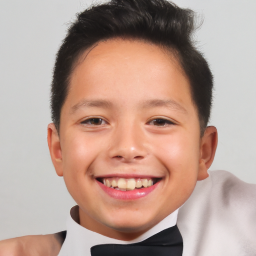}
    \includegraphics[width=0.24\textwidth]{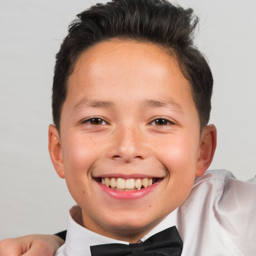}

    \includegraphics[width=0.24\textwidth]{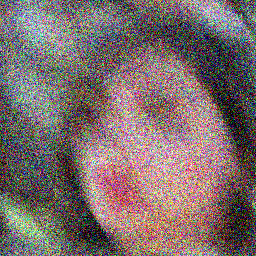}
    \includegraphics[width=0.24\textwidth]{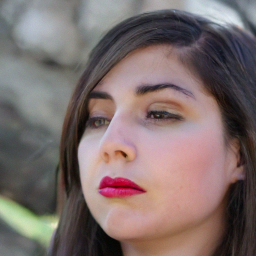}
    \includegraphics[width=0.24\textwidth]{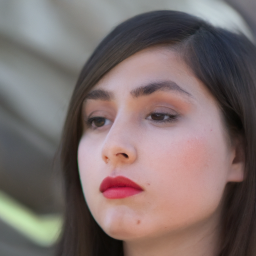}
    \includegraphics[width=0.24\textwidth]{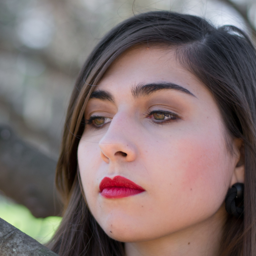}

    \includegraphics[width=0.24\textwidth]{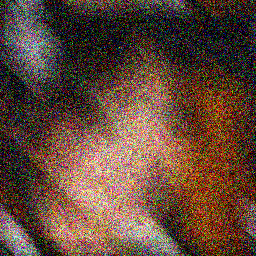}
    \includegraphics[width=0.24\textwidth]{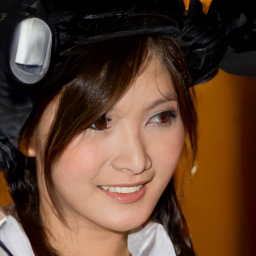}
    \includegraphics[width=0.24\textwidth]{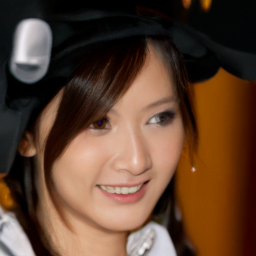}
    \includegraphics[width=0.24\textwidth]{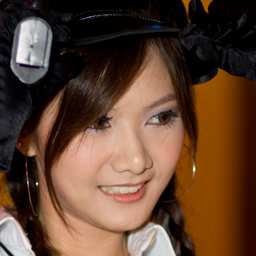}
    \end{minipage}
    \hfill
    \begin{minipage}[t]{0.49\textwidth}
    \makebox[0.24\textwidth]{\small Measurement}
    \makebox[0.24\textwidth]{\small Ours (SDE)}
    \makebox[0.24\textwidth]{\small Ours (ODE)}
    \makebox[0.24\textwidth]{\small Ground Truth}

    \includegraphics[width=0.24\textwidth]{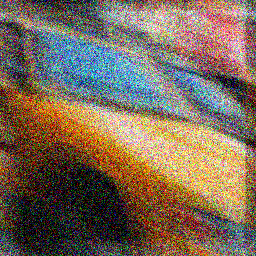}
    \includegraphics[width=0.24\textwidth]{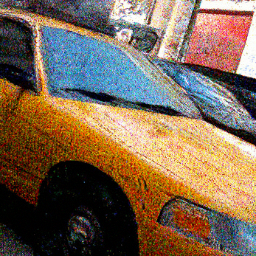}
    \includegraphics[width=0.24\textwidth]{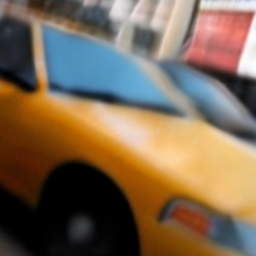}
    \includegraphics[width=0.24\textwidth]{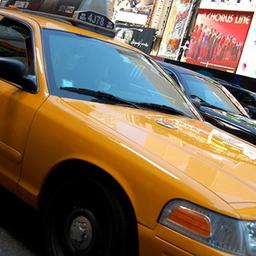}

    \includegraphics[width=0.24\textwidth]{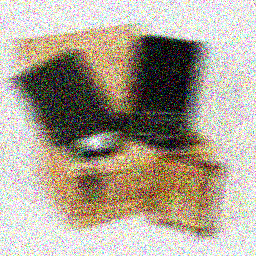}
    \includegraphics[width=0.24\textwidth]{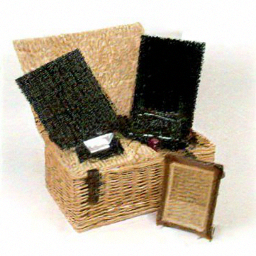}
    \includegraphics[width=0.24\textwidth]{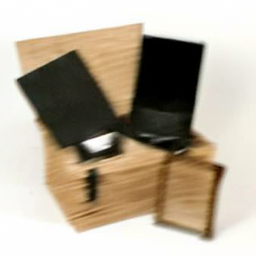}
    \includegraphics[width=0.24\textwidth]{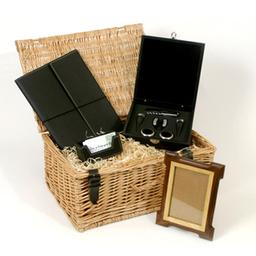}

    \includegraphics[width=0.24\textwidth]{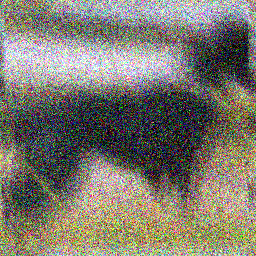}
    \includegraphics[width=0.24\textwidth]{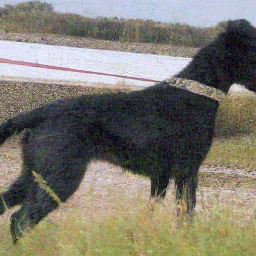}
    \includegraphics[width=0.24\textwidth]{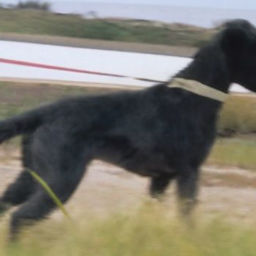}
    \includegraphics[width=0.24\textwidth]{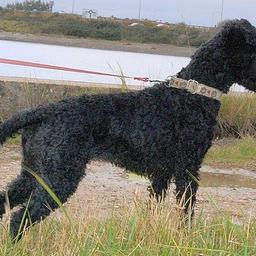}

     \includegraphics[width=0.24\textwidth]{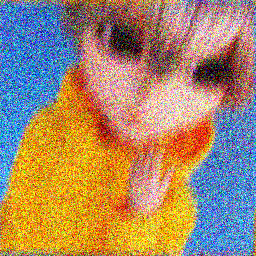}
    \includegraphics[width=0.24\textwidth]{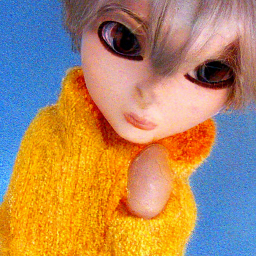}
    \includegraphics[width=0.24\textwidth]{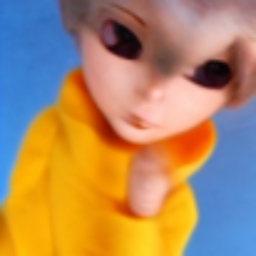}
    \includegraphics[width=0.24\textwidth]{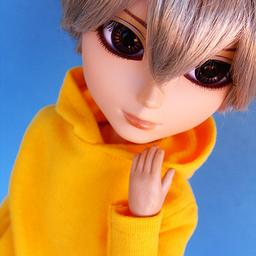}

     \includegraphics[width=0.24\textwidth]{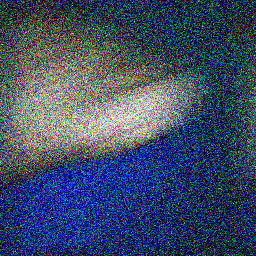}
    \includegraphics[width=0.24\textwidth]{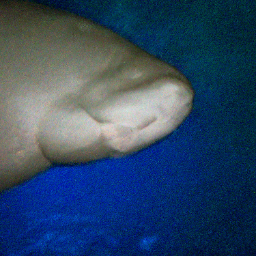}
    \includegraphics[width=0.24\textwidth]{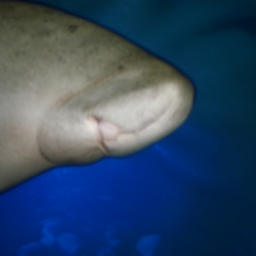}
    \includegraphics[width=0.24\textwidth]{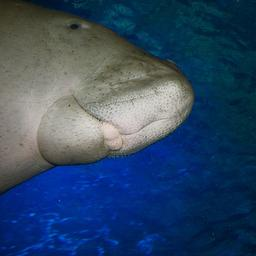}
    \end{minipage}
    \caption{Additional visual examples for the motion deblurring problem on FFHQ and ImageNet.}
    \label{fig:motion-deblurring-grid}
\end{figure}

\begin{figure}[!ht]
    \centering
    
    \begin{minipage}[t]{0.49\textwidth}
    \makebox[0.24\textwidth]{\small Measurement}
    \makebox[0.24\textwidth]{\small Ours (SDE)}
    \makebox[0.24\textwidth]{\small Ours (ODE)}
    \makebox[0.24\textwidth]{\small Ground Truth}
        
    \includegraphics[width=0.24\textwidth]{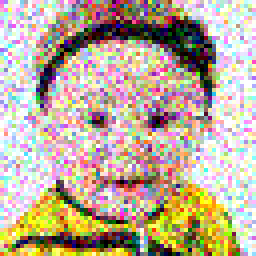}
    \includegraphics[width=0.24\textwidth]{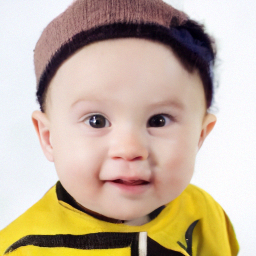}
    \includegraphics[width=0.24\textwidth]{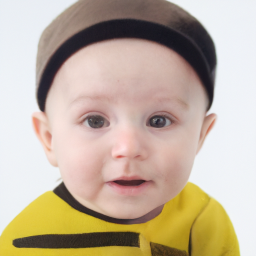}
    \includegraphics[width=0.24\textwidth]{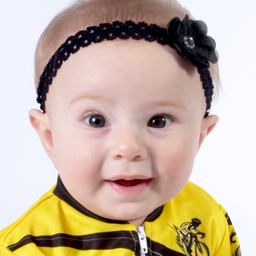}

    \includegraphics[width=0.24\textwidth]{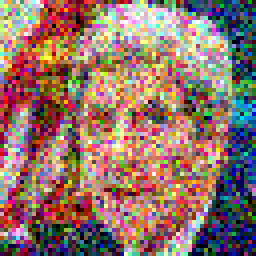}
    \includegraphics[width=0.24\textwidth]{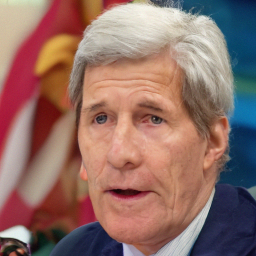}
    \includegraphics[width=0.24\textwidth]{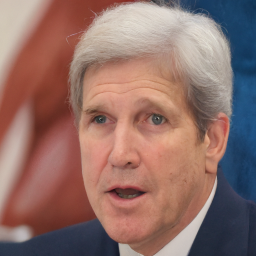}
    \includegraphics[width=0.24\textwidth]{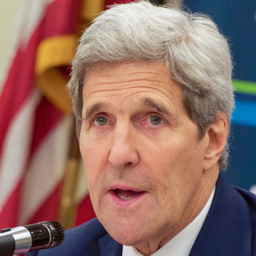}

    \includegraphics[width=0.24\textwidth]{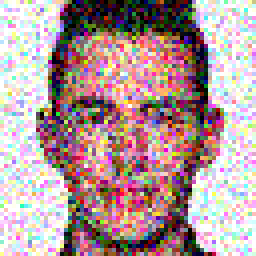}
    \includegraphics[width=0.24\textwidth]{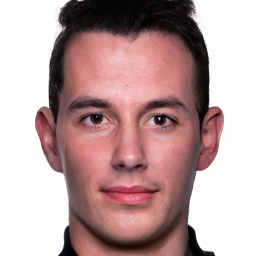}
    \includegraphics[width=0.24\textwidth]{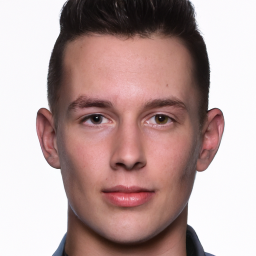}
    \includegraphics[width=0.24\textwidth]{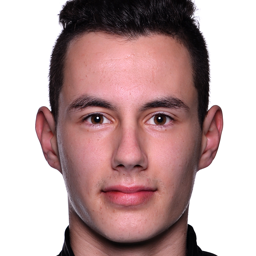}

    \includegraphics[width=0.24\textwidth]{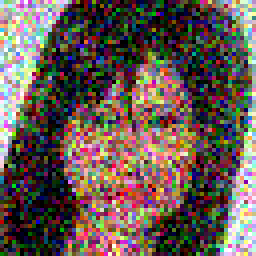}
    \includegraphics[width=0.24\textwidth]{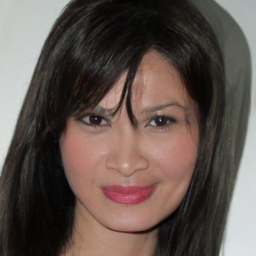}
    \includegraphics[width=0.24\textwidth]{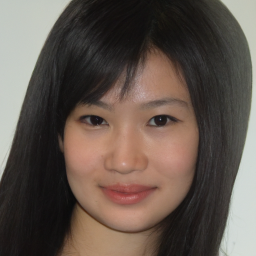}
    \includegraphics[width=0.24\textwidth]{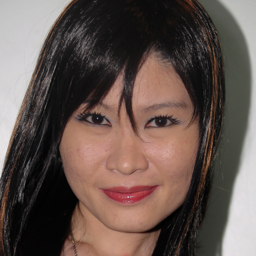}

    \includegraphics[width=0.24\textwidth]{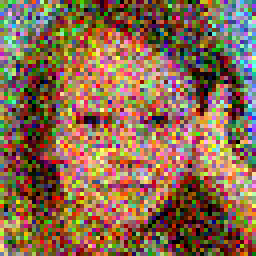}
    \includegraphics[width=0.24\textwidth]{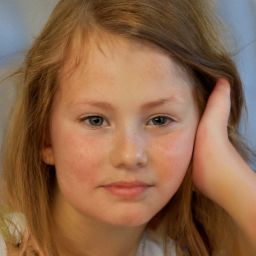}
    \includegraphics[width=0.24\textwidth]{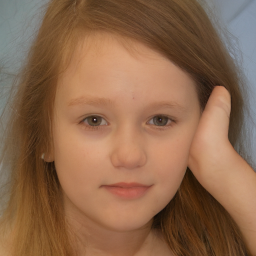}
    \includegraphics[width=0.24\textwidth]{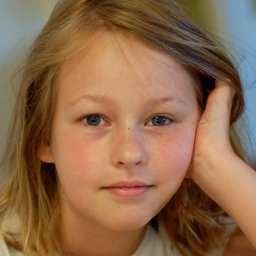}
    \end{minipage}
    \hfill
    \begin{minipage}[t]{0.49\textwidth}
    \makebox[0.24\textwidth]{\small Measurement}
    \makebox[0.24\textwidth]{\small Ours (SDE)}
    \makebox[0.24\textwidth]{\small Ours (ODE)}
    \makebox[0.24\textwidth]{\small Ground Truth}
    
    \includegraphics[width=0.24\textwidth]{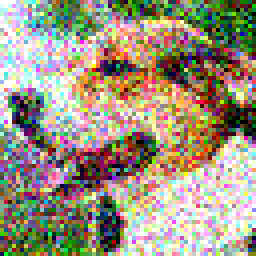}
    \includegraphics[width=0.24\textwidth]{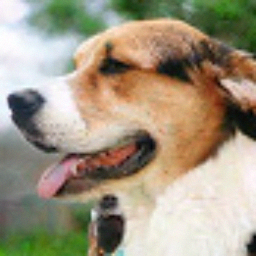}
    \includegraphics[width=0.24\textwidth]{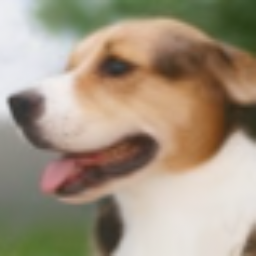}
    \includegraphics[width=0.24\textwidth]{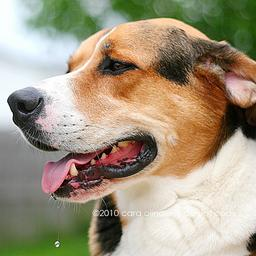}

     \includegraphics[width=0.24\textwidth]{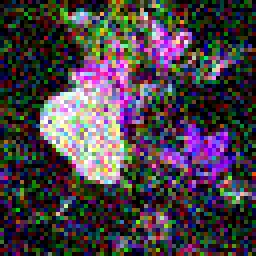}
    \includegraphics[width=0.24\textwidth]{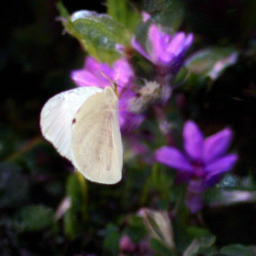}
    \includegraphics[width=0.24\textwidth]{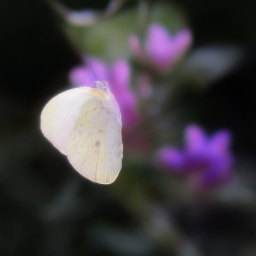}
    \includegraphics[width=0.24\textwidth]{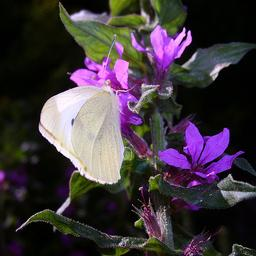}

    \includegraphics[width=0.24\textwidth]{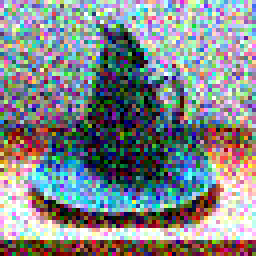}
    \includegraphics[width=0.24\textwidth]{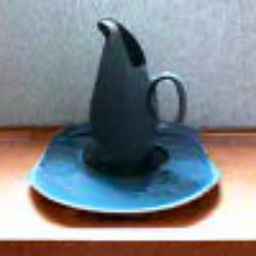}
    \includegraphics[width=0.24\textwidth]{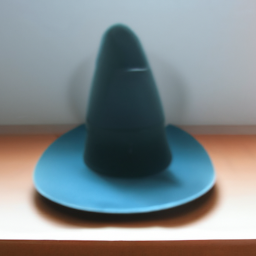}
    \includegraphics[width=0.24\textwidth]{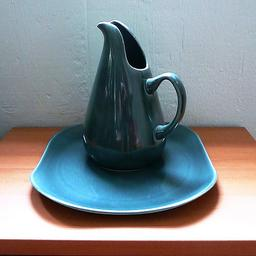}

    \includegraphics[width=0.24\textwidth]{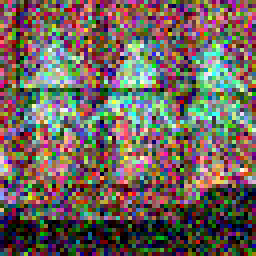}
    \includegraphics[width=0.24\textwidth]{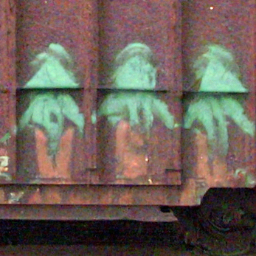}
    \includegraphics[width=0.24\textwidth]{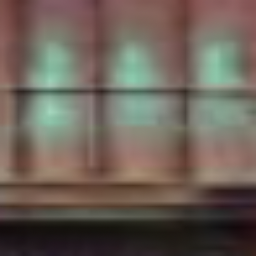}
    \includegraphics[width=0.24\textwidth]{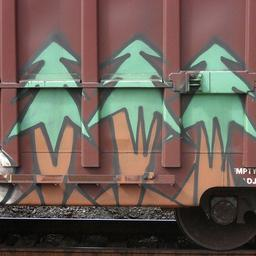}

    \includegraphics[width=0.24\textwidth]{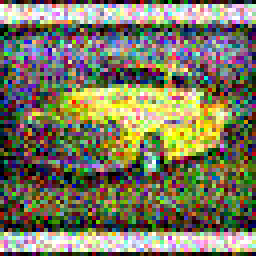}
    \includegraphics[width=0.24\textwidth]{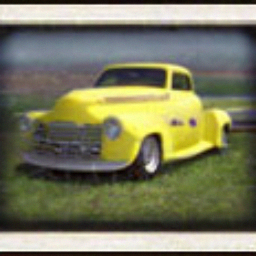}
    \includegraphics[width=0.24\textwidth]{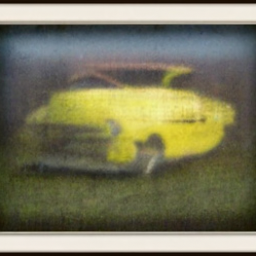}
    \includegraphics[width=0.24\textwidth]{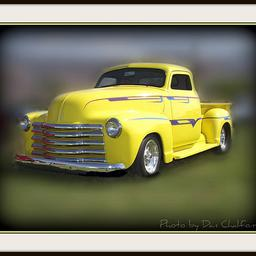}
    \end{minipage}
    \caption{Additional visual examples for the super-resolution problem on FFHQ and ImageNet.}
    \label{fig:super-resolution-grid}
\end{figure}

\begin{figure}[!ht]
    \centering
    
    \begin{minipage}[t]{0.49\textwidth}
    \makebox[0.24\textwidth]{\small Measurement}
    \makebox[0.24\textwidth]{\small Ours (SDE)}
    \makebox[0.24\textwidth]{\small Ours (ODE)}
    \makebox[0.24\textwidth]{\small Ground Truth}

    \includegraphics[width=0.24\textwidth]{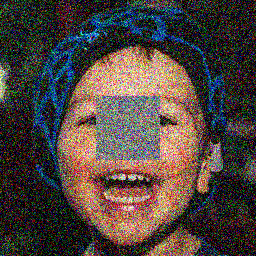}
    \includegraphics[width=0.24\textwidth]{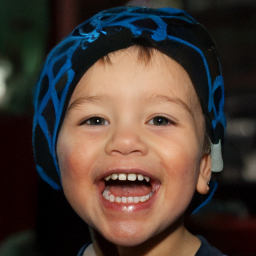}
    \includegraphics[width=0.24\textwidth]{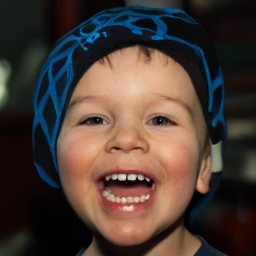}
    \includegraphics[width=0.24\textwidth]{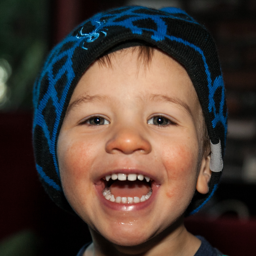}
        
    \includegraphics[width=0.24\textwidth]{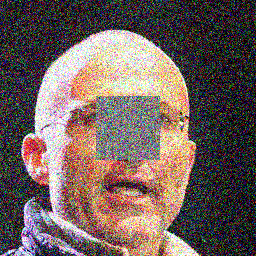}
    \includegraphics[width=0.24\textwidth]{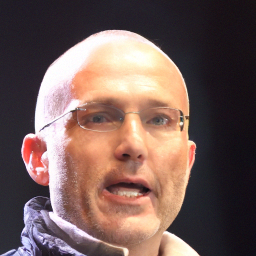}
    \includegraphics[width=0.24\textwidth]{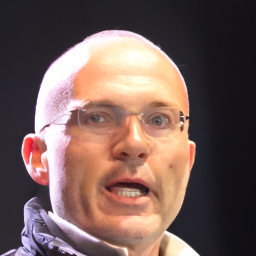}
    \includegraphics[width=0.24\textwidth]{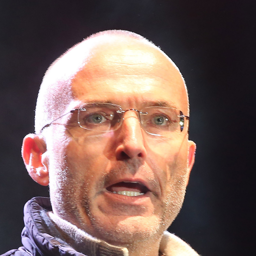}

    \includegraphics[width=0.24\textwidth]{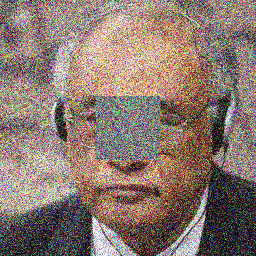}
    \includegraphics[width=0.24\textwidth]{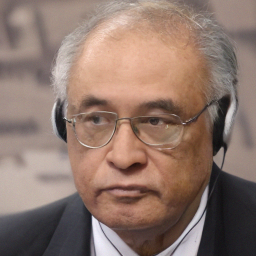}
    \includegraphics[width=0.24\textwidth]{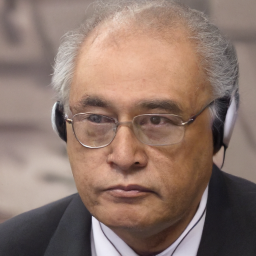}
    \includegraphics[width=0.24\textwidth]{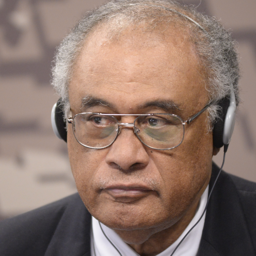}

    \includegraphics[width=0.24\textwidth]{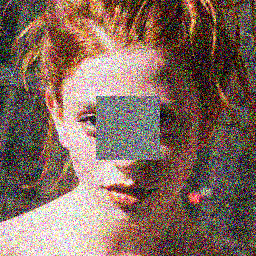}
    \includegraphics[width=0.24\textwidth]{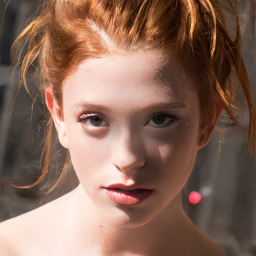}
    \includegraphics[width=0.24\textwidth]{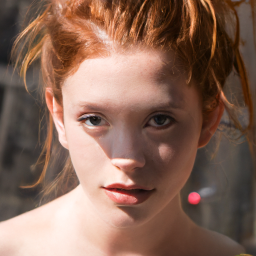}
    \includegraphics[width=0.24\textwidth]{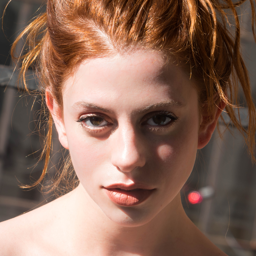}

    \includegraphics[width=0.24\textwidth]{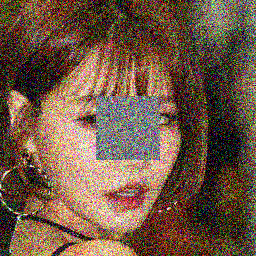}
    \includegraphics[width=0.24\textwidth]{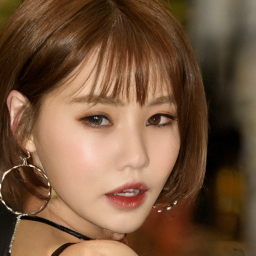}
    \includegraphics[width=0.24\textwidth]{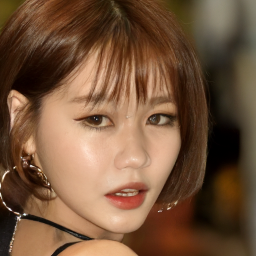}
    \includegraphics[width=0.24\textwidth]{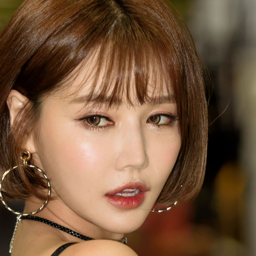}
    \end{minipage}
    \hfill
    \begin{minipage}[t]{0.49\textwidth}
    \makebox[0.24\textwidth]{\small Measurement}
    \makebox[0.24\textwidth]{\small Ours (SDE)}
    \makebox[0.24\textwidth]{\small Ours (ODE)}
    \makebox[0.24\textwidth]{\small Ground Truth}
    
    \includegraphics[width=0.24\textwidth]{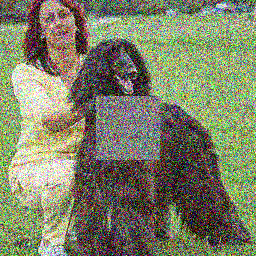}
    \includegraphics[width=0.24\textwidth]{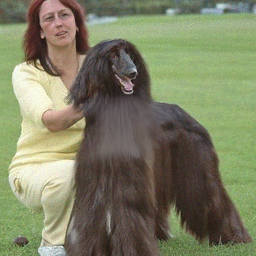}
    \includegraphics[width=0.24\textwidth]{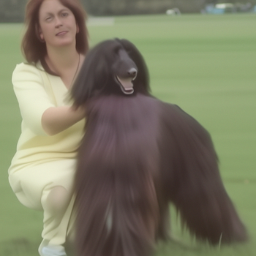}
    \includegraphics[width=0.24\textwidth]{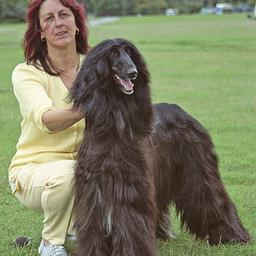}

    \includegraphics[width=0.24\textwidth]{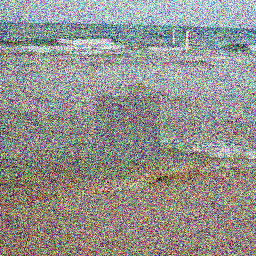}
    \includegraphics[width=0.24\textwidth]{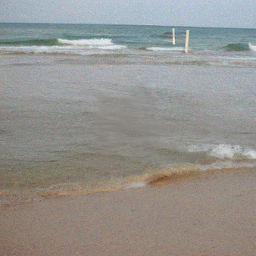}
    \includegraphics[width=0.24\textwidth]{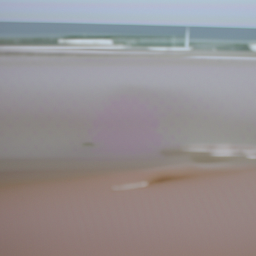}
    \includegraphics[width=0.24\textwidth]{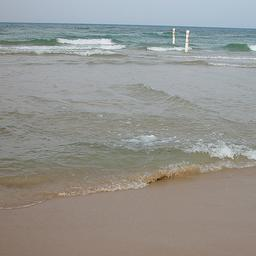}

    \includegraphics[width=0.24\textwidth]{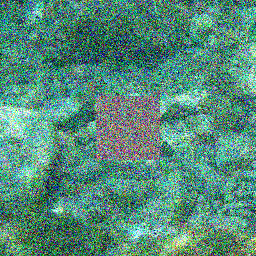}
    \includegraphics[width=0.24\textwidth]{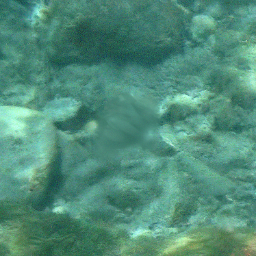}
    \includegraphics[width=0.24\textwidth]{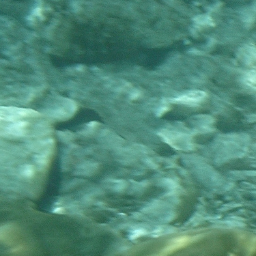}
    \includegraphics[width=0.24\textwidth]{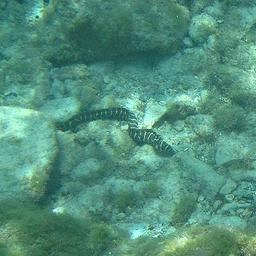}

    \includegraphics[width=0.24\textwidth]{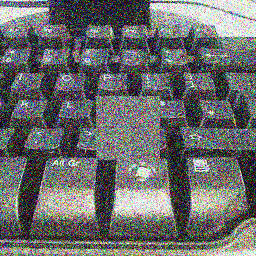}
    \includegraphics[width=0.24\textwidth]{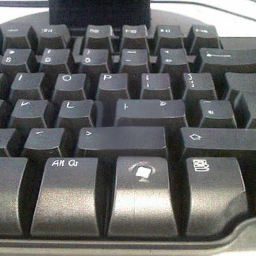}
    \includegraphics[width=0.24\textwidth]{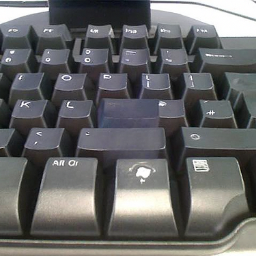}
    \includegraphics[width=0.24\textwidth]{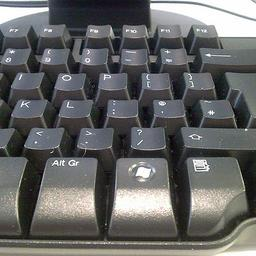}

    \includegraphics[width=0.24\textwidth]{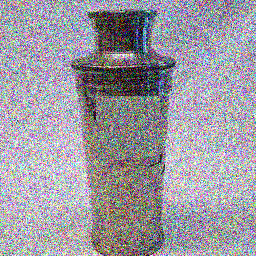}
    \includegraphics[width=0.24\textwidth]{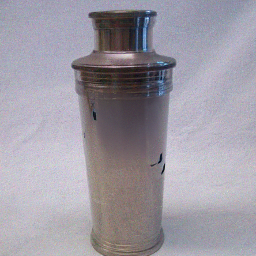}
    \includegraphics[width=0.24\textwidth]{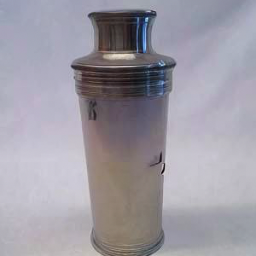}
    \includegraphics[width=0.24\textwidth]{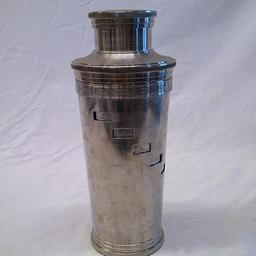}
    \end{minipage}
    \caption{Additional visual examples for the box inpainting problem on FFHQ and ImageNet.}
    \label{fig:box-inpainting-grid}
\end{figure}

\clearpage
\bibliographystyle{unsrtnat}
\bibliography{reference}

\begin{thebibliography}{269}
\providecommand{\natexlab}[1]{#1}
\providecommand{\url}[1]{\texttt{#1}}
\expandafter\ifx\csname urlstyle\endcsname\relax
  \providecommand{\doi}[1]{doi: #1}\else
  \providecommand{\doi}{doi: \begingroup \urlstyle{rm}\Url}\fi

\bibitem[Cotter et~al.(2009)Cotter, Dashti, Robinson, and Stuart]{cotter2009bayesian}
Simon~L Cotter, Massoumeh Dashti, James~Cooper Robinson, and Andrew~M Stuart.
\newblock Bayesian inverse problems for functions and applications to fluid mechanics.
\newblock \emph{Inverse problems}, 25\penalty0 (11):\penalty0 115008, 2009.

\bibitem[Sellier(2016)]{sellier2016inverse}
Mathieu Sellier.
\newblock Inverse problems in free surface flows: a review.
\newblock \emph{Acta Mechanica}, 227\penalty0 (3):\penalty0 913--935, 2016.

\bibitem[Richter(2021)]{richter2021inverse}
Mathias Richter.
\newblock \emph{Inverse problems: Basics, theory and applications in geophysics}.
\newblock Springer Nature, 2021.

\bibitem[Lustig et~al.(2007)Lustig, Donoho, and Pauly]{lustig2007sparse}
Michael Lustig, David Donoho, and John~M Pauly.
\newblock Sparse mri: The application of compressed sensing for rapid mr imaging.
\newblock \emph{Magnetic Resonance in Medicine: An Official Journal of the International Society for Magnetic Resonance in Medicine}, 58\penalty0 (6):\penalty0 1182--1195, 2007.

\bibitem[Choi et~al.(2007)Choi, Fang-Yen, Badizadegan, Oh, Lue, Dasari, and Feld]{choi2007tomographic}
Wonshik Choi, Christopher Fang-Yen, Kamran Badizadegan, Seungeun Oh, Niyom Lue, Ramachandra~R Dasari, and Michael~S Feld.
\newblock Tomographic phase microscopy.
\newblock \emph{Nature methods}, 4\penalty0 (9):\penalty0 717--719, 2007.

\bibitem[Bertero et~al.(2021)Bertero, Boccacci, and De~Mol]{bertero2021introduction}
Mario Bertero, Patrizia Boccacci, and Christine De~Mol.
\newblock \emph{Introduction to inverse problems in imaging}.
\newblock CRC press, 2021.

\bibitem[Neal et~al.(2011)]{neal2011mcmc}
Radford~M Neal et~al.
\newblock Mcmc using hamiltonian dynamics.
\newblock \emph{Handbook of markov chain monte carlo}, 2\penalty0 (11):\penalty0 2, 2011.

\bibitem[Welling and Teh(2011)]{welling2011bayesian}
Max Welling and Yee~W Teh.
\newblock Bayesian learning via stochastic gradient langevin dynamics.
\newblock In \emph{Proceedings of the 28th international conference on machine learning (ICML-11)}, pages 681--688. Citeseer, 2011.

\bibitem[Cui et~al.(2016)Cui, Law, and Marzouk]{cui2016dimension}
Tiangang Cui, Kody~JH Law, and Youssef~M Marzouk.
\newblock Dimension-independent likelihood-informed mcmc.
\newblock \emph{Journal of Computational Physics}, 304:\penalty0 109--137, 2016.

\bibitem[Asim et~al.(2020)Asim, Daniels, Leong, Ahmed, and Hand]{asim2020invertible}
Muhammad Asim, Max Daniels, Oscar Leong, Ali Ahmed, and Paul Hand.
\newblock Invertible generative models for inverse problems: mitigating representation error and dataset bias.
\newblock In \emph{International conference on machine learning}, pages 399--409. PMLR, 2020.

\bibitem[Hou et~al.(2019)Hou, Lam, Zhang, and Zhang]{hou2019solving}
Thomas~Y Hou, Ka~Chun Lam, Pengchuan Zhang, and Shumao Zhang.
\newblock Solving bayesian inverse problems from the perspective of deep generative networks.
\newblock \emph{Computational Mechanics}, 64:\penalty0 395--408, 2019.

\bibitem[Zhang et~al.(2021)Zhang, Zhang, and Hou]{zhang2021multiscale}
Shumao Zhang, Pengchuan Zhang, and Thomas~Y Hou.
\newblock Multiscale invertible generative networks for high-dimensional bayesian inference.
\newblock In \emph{International Conference on Machine Learning}, pages 12632--12641. PMLR, 2021.

\bibitem[Whang et~al.(2021{\natexlab{a}})Whang, Lindgren, and Dimakis]{whang2021composing}
Jay Whang, Erik Lindgren, and Alex Dimakis.
\newblock Composing normalizing flows for inverse problems.
\newblock In \emph{International Conference on Machine Learning}, pages 11158--11169. PMLR, 2021{\natexlab{a}}.

\bibitem[Whang et~al.(2021{\natexlab{b}})Whang, Lei, and Dimakis]{whang2021solving}
Jay Whang, Qi~Lei, and Alex Dimakis.
\newblock Solving inverse problems with a flow-based noise model.
\newblock In \emph{International Conference on Machine Learning}, pages 11146--11157. PMLR, 2021{\natexlab{b}}.

\bibitem[Hagemann et~al.(2022)Hagemann, Hertrich, and Steidl]{hagemann2022stochastic}
Paul Hagemann, Johannes Hertrich, and Gabriele Steidl.
\newblock Stochastic normalizing flows for inverse problems: A markov chains viewpoint.
\newblock \emph{SIAM/ASA Journal on Uncertainty Quantification}, 10\penalty0 (3):\penalty0 1162--1190, 2022.

\bibitem[Patel and Oberai(2019)]{patel2019bayesian}
Dhruv Patel and Assad~A Oberai.
\newblock Bayesian inference with generative adversarial network priors.
\newblock \emph{arXiv preprint arXiv:1907.09987}, 2019.

\bibitem[Bora et~al.(2017)Bora, Jalal, Price, and Dimakis]{bora2017compressed}
Ashish Bora, Ajil Jalal, Eric Price, and Alexandros~G Dimakis.
\newblock Compressed sensing using generative models.
\newblock In \emph{International conference on machine learning}, pages 537--546. PMLR, 2017.

\bibitem[Albergo et~al.(2023{\natexlab{a}})Albergo, Boffi, and Vanden-Eijnden]{albergo2023stochastic}
Michael~S Albergo, Nicholas~M Boffi, and Eric Vanden-Eijnden.
\newblock Stochastic interpolants: A unifying framework for flows and diffusions.
\newblock \emph{arXiv preprint arXiv:2303.08797}, 2023{\natexlab{a}}.

\bibitem[Albergo and Vanden-Eijnden(2022)]{albergo2022building}
Michael~S Albergo and Eric Vanden-Eijnden.
\newblock Building normalizing flows with stochastic interpolants.
\newblock \emph{arXiv preprint arXiv:2209.15571}, 2022.

\bibitem[Lipman et~al.(2022)Lipman, Chen, Ben-Hamu, Nickel, and Le]{lipman2022flow}
Yaron Lipman, Ricky~TQ Chen, Heli Ben-Hamu, Maximilian Nickel, and Matt Le.
\newblock Flow matching for generative modeling.
\newblock \emph{arXiv preprint arXiv:2210.02747}, 2022.

\bibitem[Liu et~al.(2022{\natexlab{a}})Liu, Gong, and Liu]{liu2022flow}
Xingchao Liu, Chengyue Gong, and Qiang Liu.
\newblock Flow straight and fast: Learning to generate and transfer data with rectified flow.
\newblock \emph{arXiv preprint arXiv:2209.03003}, 2022{\natexlab{a}}.

\bibitem[Sohl-Dickstein et~al.(2015)Sohl-Dickstein, Weiss, Maheswaranathan, and Ganguli]{sohl2015deep}
Jascha Sohl-Dickstein, Eric Weiss, Niru Maheswaranathan, and Surya Ganguli.
\newblock Deep unsupervised learning using nonequilibrium thermodynamics.
\newblock In \emph{International Conference on Machine Learning}, pages 2256--2265. PMLR, 2015.

\bibitem[Ho et~al.(2020)Ho, Jain, and Abbeel]{ho2020denoising}
Jonathan Ho, Ajay Jain, and Pieter Abbeel.
\newblock Denoising diffusion probabilistic models.
\newblock \emph{Advances in neural information processing systems}, 33:\penalty0 6840--6851, 2020.

\bibitem[Song et~al.(2020{\natexlab{a}})Song, Meng, and Ermon]{song2020denoising}
Jiaming Song, Chenlin Meng, and Stefano Ermon.
\newblock Denoising diffusion implicit models.
\newblock \emph{arXiv preprint arXiv:2010.02502}, 2020{\natexlab{a}}.

\bibitem[Song et~al.(2021{\natexlab{a}})Song, Durkan, Murray, and Ermon]{song2021maximum}
Yang Song, Conor Durkan, Iain Murray, and Stefano Ermon.
\newblock Maximum likelihood training of score-based diffusion models.
\newblock \emph{Advances in neural information processing systems}, 34:\penalty0 1415--1428, 2021{\natexlab{a}}.

\bibitem[Song and Ermon(2019)]{song2019generative}
Yang Song and Stefano Ermon.
\newblock Generative modeling by estimating gradients of the data distribution.
\newblock \emph{Advances in neural information processing systems}, 32, 2019.

\bibitem[Song et~al.(2020{\natexlab{b}})Song, Sohl-Dickstein, Kingma, Kumar, Ermon, and Poole]{song2020score}
Yang Song, Jascha Sohl-Dickstein, Diederik~P Kingma, Abhishek Kumar, Stefano Ermon, and Ben Poole.
\newblock Score-based generative modeling through stochastic differential equations.
\newblock \emph{arXiv preprint arXiv:2011.13456}, 2020{\natexlab{b}}.

\bibitem[Zhang et~al.(2018{\natexlab{a}})Zhang, E, and Wang]{zhang2018monge}
Linfeng Zhang, Weinan E, and Lei Wang.
\newblock Monge-amp\`{e}re flow for generative modeling.
\newblock \emph{arXiv preprint arXiv:1809.10188}, 2018{\natexlab{a}}.

\bibitem[Chung et~al.(2022)Chung, Kim, Mccann, Klasky, and Ye]{chung2022diffusion}
Hyungjin Chung, Jeongsol Kim, Michael~T Mccann, Marc~L Klasky, and Jong~Chul Ye.
\newblock Diffusion posterior sampling for general noisy inverse problems.
\newblock \emph{arXiv preprint arXiv:2209.14687}, 2022.

\bibitem[Song et~al.(2023{\natexlab{a}})Song, Vahdat, Mardani, and Kautz]{song2023pseudoinverse}
Jiaming Song, Arash Vahdat, Morteza Mardani, and Jan Kautz.
\newblock Pseudoinverse-guided diffusion models for inverse problems.
\newblock In \emph{International Conference on Learning Representations}, 2023{\natexlab{a}}.

\bibitem[Wu et~al.(2023)Wu, Trippe, Naesseth, Blei, and Cunningham]{wu2023practical}
Luhuan Wu, Brian Trippe, Christian Naesseth, David Blei, and John~P Cunningham.
\newblock Practical and asymptotically exact conditional sampling in diffusion models.
\newblock \emph{Advances in Neural Information Processing Systems}, 36:\penalty0 31372--31403, 2023.

\bibitem[Cardoso et~al.(2023)Cardoso, Le~Corff, Moulines, et~al.]{cardoso2023monte}
Gabriel Cardoso, Sylvain Le~Corff, Eric Moulines, et~al.
\newblock Monte carlo guided denoising diffusion models for bayesian linear inverse problems.
\newblock In \emph{The Twelfth International Conference on Learning Representations}, 2023.

\bibitem[Dou and Song(2024)]{dou2024diffusion}
Zehao Dou and Yang Song.
\newblock Diffusion posterior sampling for linear inverse problem solving: A filtering perspective.
\newblock In \emph{The Twelfth International Conference on Learning Representations}, 2024.

\bibitem[Sun et~al.(2024)Sun, Wu, Chen, Feng, and Bouman]{sun2024provable}
Yu~Sun, Zihui Wu, Yifan Chen, Berthy~T Feng, and Katherine~L Bouman.
\newblock Provable probabilistic imaging using score-based generative priors.
\newblock \emph{IEEE Transactions on Computational Imaging}, 2024.

\bibitem[Xu and Chi(2024)]{xu2024provably}
Xingyu Xu and Yuejie Chi.
\newblock Provably robust score-based diffusion posterior sampling for plug-and-play image reconstruction.
\newblock \emph{arXiv preprint arXiv:2403.17042}, 2024.

\bibitem[Wu et~al.(2024{\natexlab{a}})Wu, Sun, Chen, Zhang, Yue, and Bouman]{wu2024principled}
Zihui Wu, Yu~Sun, Yifan Chen, Bingliang Zhang, Yisong Yue, and Katherine Bouman.
\newblock Principled probabilistic imaging using diffusion models as plug-and-play priors.
\newblock \emph{Advances in Neural Information Processing Systems}, 37:\penalty0 118389--118427, 2024{\natexlab{a}}.

\bibitem[Bruna and Han(2024)]{bruna2024provable}
Joan Bruna and Jiequn Han.
\newblock Provable posterior sampling with denoising oracles via tilted transport.
\newblock \emph{Advances in Neural Information Processing Systems}, 37:\penalty0 82863--82894, 2024.

\bibitem[Daras et~al.(2024{\natexlab{a}})Daras, Chung, Lai, Mitsufuji, Ye, Milanfar, Dimakis, and Delbracio]{daras2024survey}
Giannis Daras, Hyungjin Chung, Chieh-Hsin Lai, Yuki Mitsufuji, Jong~Chul Ye, Peyman Milanfar, Alexandros~G Dimakis, and Mauricio Delbracio.
\newblock A survey on diffusion models for inverse problems.
\newblock \emph{arXiv preprint arXiv:2410.00083}, 2024{\natexlab{a}}.

\bibitem[Choi et~al.(2021)Choi, Kim, Jeong, Gwon, and Yoon]{choi2021ilvr}
Jooyoung Choi, Sungwon Kim, Yonghyun Jeong, Youngjune Gwon, and Sungroh Yoon.
\newblock Ilvr: Conditioning method for denoising diffusion probabilistic models.
\newblock \emph{arXiv preprint arXiv:2108.02938}, 2021.

\bibitem[Song et~al.(2021{\natexlab{b}})Song, Shen, Xing, and Ermon]{song2021solving}
Yang Song, Liyue Shen, Lei Xing, and Stefano Ermon.
\newblock Solving inverse problems in medical imaging with score-based generative models.
\newblock \emph{arXiv preprint arXiv:2111.08005}, 2021{\natexlab{b}}.

\bibitem[Boys et~al.(2023)Boys, Girolami, Pidstrigach, Reich, Mosca, and Akyildiz]{boys2023tweedie}
Benjamin Boys, Mark Girolami, Jakiw Pidstrigach, Sebastian Reich, Alan Mosca, and O~Deniz Akyildiz.
\newblock Tweedie moment projected diffusions for inverse problems.
\newblock \emph{arXiv preprint arXiv:2310.06721}, 2023.

\bibitem[Wang et~al.(2022)Wang, Yu, and Zhang]{wang2022zero}
Yinhuai Wang, Jiwen Yu, and Jian Zhang.
\newblock Zero-shot image restoration using denoising diffusion null-space model.
\newblock \emph{arXiv preprint arXiv:2212.00490}, 2022.

\bibitem[Kawar et~al.(2022)Kawar, Elad, Ermon, and Song]{kawar2022denoising}
Bahjat Kawar, Michael Elad, Stefano Ermon, and Jiaming Song.
\newblock Denoising diffusion restoration models.
\newblock \emph{Advances in Neural Information Processing Systems}, 35:\penalty0 23593--23606, 2022.

\bibitem[Rout et~al.(2023)Rout, Raoof, Daras, Caramanis, Dimakis, and Shakkottai]{rout2023solving}
Litu Rout, Negin Raoof, Giannis Daras, Constantine Caramanis, Alex Dimakis, and Sanjay Shakkottai.
\newblock Solving linear inverse problems provably via posterior sampling with latent diffusion models.
\newblock \emph{Advances in Neural Information Processing Systems}, 36:\penalty0 49960--49990, 2023.

\bibitem[Coeurdoux et~al.(2024)Coeurdoux, Dobigeon, and Chainais]{coeurdoux2024plug}
Florentin Coeurdoux, Nicolas Dobigeon, and Pierre Chainais.
\newblock Plug-and-play split gibbs sampler: embedding deep generative priors in bayesian inference.
\newblock \emph{IEEE Transactions on Image Processing}, 2024.

\bibitem[Zheng et~al.(2025)Zheng, Chu, Zhang, Wu, Wang, Feng, Zou, Sun, Kovachki, Ross, et~al.]{zheng2025inversebench}
Hongkai Zheng, Wenda Chu, Bingliang Zhang, Zihui Wu, Austin Wang, Berthy~T Feng, Caifeng Zou, Yu~Sun, Nikola Kovachki, Zachary~E Ross, et~al.
\newblock Inversebench: Benchmarking plug-and-play diffusion priors for inverse problems in physical sciences.
\newblock \emph{arXiv preprint arXiv:2503.11043}, 2025.

\bibitem[Vono et~al.(2019)Vono, Dobigeon, and Chainais]{vono2019split}
Maxime Vono, Nicolas Dobigeon, and Pierre Chainais.
\newblock Split-and-augmented gibbs sampler—application to large-scale inference problems.
\newblock \emph{IEEE Transactions on Signal Processing}, 67\penalty0 (6):\penalty0 1648--1661, 2019.

\bibitem[Pereyra et~al.(2023)Pereyra, Vargas-Mieles, and Zygalakis]{pereyra2023split}
Marcelo Pereyra, Luis~A Vargas-Mieles, and Konstantinos~C Zygalakis.
\newblock The split gibbs sampler revisited: improvements to its algorithmic structure and augmented target distribution.
\newblock \emph{SIAM Journal on Imaging Sciences}, 16\penalty0 (4):\penalty0 2040--2071, 2023.

\bibitem[Kelvinius et~al.(2025)Kelvinius, Zhao, and Lindsten]{kelvinius2025solving}
Filip~Ekstr{\"o}m Kelvinius, Zheng Zhao, and Fredrik Lindsten.
\newblock Solving linear-gaussian bayesian inverse problems with decoupled diffusion sequential monte carlo.
\newblock \emph{arXiv preprint arXiv:2502.06379}, 2025.

\bibitem[Skreta et~al.(2025)Skreta, Akhound-Sadegh, Ohanesian, Bondesan, Aspuru-Guzik, Doucet, Brekelmans, Tong, and Neklyudov]{skreta2025feynman}
Marta Skreta, Tara Akhound-Sadegh, Viktor Ohanesian, Roberto Bondesan, Al{\'a}n Aspuru-Guzik, Arnaud Doucet, Rob Brekelmans, Alexander Tong, and Kirill Neklyudov.
\newblock Feynman-kac correctors in diffusion: Annealing, guidance, and product of experts.
\newblock \emph{arXiv preprint arXiv:2503.02819}, 2025.

\bibitem[Lee et~al.(2025)Lee, Jeha, Frellsen, Lio, Albergo, and Vargas]{lee2025debiasing}
Cheuk~Kit Lee, Paul Jeha, Jes Frellsen, Pietro Lio, Michael~Samuel Albergo, and Francisco Vargas.
\newblock Debiasing guidance for discrete diffusion with sequential monte carlo.
\newblock \emph{arXiv preprint arXiv:2502.06079}, 2025.

\bibitem[Holderrieth et~al.(2025)Holderrieth, Albergo, and Jaakkola]{holderrieth2025leaps}
Peter Holderrieth, Michael~S Albergo, and Tommi Jaakkola.
\newblock Leaps: A discrete neural sampler via locally equivariant networks.
\newblock \emph{arXiv preprint arXiv:2502.10843}, 2025.

\bibitem[Achituve et~al.(2025)Achituve, Habi, Rosenfeld, Netzer, Diamant, and Fetaya]{achituve2025inverse}
Idan Achituve, Hai~Victor Habi, Amir Rosenfeld, Arnon Netzer, Idit Diamant, and Ethan Fetaya.
\newblock Inverse problem sampling in latent space using sequential monte carlo.
\newblock \emph{arXiv preprint arXiv:2502.05908}, 2025.

\bibitem[Liu(2001)]{liu2001monte}
Jun~S Liu.
\newblock \emph{Monte Carlo strategies in scientific computing}, volume~75.
\newblock Springer, 2001.

\bibitem[Chopin(2002)]{chopin2002sequential}
Nicolas Chopin.
\newblock A sequential particle filter method for static models.
\newblock \emph{Biometrika}, 89\penalty0 (3):\penalty0 539--552, 2002.

\bibitem[Del~Moral et~al.(2006)Del~Moral, Doucet, and Jasra]{del2006sequential}
Pierre Del~Moral, Arnaud Doucet, and Ajay Jasra.
\newblock Sequential monte carlo samplers.
\newblock \emph{Journal of the Royal Statistical Society Series B: Statistical Methodology}, 68\penalty0 (3):\penalty0 411--436, 2006.

\bibitem[Doucet et~al.(2009)Doucet, Johansen, et~al.]{doucet2009tutorial}
Arnaud Doucet, Adam~M Johansen, et~al.
\newblock A tutorial on particle filtering and smoothing: Fifteen years later.
\newblock \emph{Handbook of nonlinear filtering}, 12\penalty0 (656-704):\penalty0 3, 2009.

\bibitem[Del~Moral(2013)]{del2013mean}
Pierre Del~Moral.
\newblock Mean field simulation for monte carlo integration.
\newblock \emph{Monographs on Statistics and Applied Probability}, 126\penalty0 (26):\penalty0 6, 2013.

\bibitem[Moral(2004)]{moral2004feynman}
Pierre~Del Moral.
\newblock \emph{Feynman-Kac formulae: genealogical and interacting particle systems with applications}.
\newblock Springer, 2004.

\bibitem[Karras et~al.(2019)Karras, Laine, and Aila]{karras2019style}
Tero Karras, Samuli Laine, and Timo Aila.
\newblock A style-based generator architecture for generative adversarial networks.
\newblock In \emph{Proceedings of the IEEE/CVF conference on computer vision and pattern recognition}, pages 4401--4410, 2019.

\bibitem[Deng et~al.(2009)Deng, Dong, Socher, Li, Li, and Fei-Fei]{deng2009imagenet}
Jia Deng, Wei Dong, Richard Socher, Li-Jia Li, Kai Li, and Li~Fei-Fei.
\newblock Imagenet: A large-scale hierarchical image database.
\newblock In \emph{2009 IEEE conference on computer vision and pattern recognition}, pages 248--255. Ieee, 2009.

\bibitem[Stuart(2010)]{stuart2010inverse}
Andrew~M Stuart.
\newblock Inverse problems: a bayesian perspective.
\newblock \emph{Acta numerica}, 19:\penalty0 451--559, 2010.

\bibitem[Cotler and Rezchikov(2023)]{cotler2023renormalizing}
Jordan Cotler and Semon Rezchikov.
\newblock Renormalizing diffusion models.
\newblock \emph{arXiv preprint arXiv:2308.12355}, 2023.

\bibitem[Habibi et~al.(2024)Habibi, Aarts, Wang, and Zhou]{habibi2024diffusion}
Diaa~E Habibi, Gert Aarts, Lingxiao Wang, and Kai Zhou.
\newblock Diffusion models learn distributions generated by complex langevin dynamics.
\newblock \emph{arXiv preprint arXiv:2412.01919}, 2024.

\bibitem[Zhu et~al.(2024{\natexlab{a}})Zhu, Chen, Theodorou, Chen, and Tao]{zhu2024quantum}
Yuchen Zhu, Tianrong Chen, Evangelos~A Theodorou, Xie Chen, and Molei Tao.
\newblock Quantum state generation with structure-preserving diffusion model.
\newblock \emph{arXiv preprint arXiv:2404.06336}, 2024{\natexlab{a}}.

\bibitem[Xu et~al.(2022)Xu, Yu, Song, Shi, Ermon, and Tang]{xu2022geodiff}
Minkai Xu, Lantao Yu, Yang Song, Chence Shi, Stefano Ermon, and Jian Tang.
\newblock Geodiff: A geometric diffusion model for molecular conformation generation.
\newblock \emph{arXiv preprint arXiv:2203.02923}, 2022.

\bibitem[Alakhdar et~al.(2024)Alakhdar, Poczos, and Washburn]{alakhdar2024diffusion}
Amira Alakhdar, Barnabas Poczos, and Newell Washburn.
\newblock Diffusion models in de novo drug design.
\newblock \emph{Journal of Chemical Information and Modeling}, 2024.

\bibitem[Riesel et~al.(2024)Riesel, Mackey, Nilforoshan, Xu, Badding, Altman, Leskovec, and Freedman]{riesel2024crystal}
Eric~A Riesel, Tsach Mackey, Hamed Nilforoshan, Minkai Xu, Catherine~K Badding, Alison~B Altman, Jure Leskovec, and Danna~E Freedman.
\newblock Crystal structure determination from powder diffraction patterns with generative machine learning.
\newblock \emph{Journal of the American Chemical Society}, 146\penalty0 (44):\penalty0 30340--30348, 2024.

\bibitem[Alamdari et~al.(2023)Alamdari, Thakkar, van~den Berg, Lu, Fusi, Amini, and Yang]{alamdari2023protein}
Sarah Alamdari, Nitya Thakkar, Rianne van~den Berg, Alex~X Lu, Nicolo Fusi, Ava~P Amini, and Kevin~K Yang.
\newblock Protein generation with evolutionary diffusion: sequence is all you need.
\newblock \emph{BioRxiv}, pages 2023--09, 2023.

\bibitem[Watson et~al.(2023)Watson, Juergens, Bennett, Trippe, Yim, Eisenach, Ahern, Borst, Ragotte, Milles, et~al.]{watson2023novo}
Joseph~L Watson, David Juergens, Nathaniel~R Bennett, Brian~L Trippe, Jason Yim, Helen~E Eisenach, Woody Ahern, Andrew~J Borst, Robert~J Ragotte, Lukas~F Milles, et~al.
\newblock De novo design of protein structure and function with rfdiffusion.
\newblock \emph{Nature}, 620\penalty0 (7976):\penalty0 1089--1100, 2023.

\bibitem[Rombach et~al.(2022)Rombach, Blattmann, Lorenz, Esser, and Ommer]{rombach2022high}
Robin Rombach, Andreas Blattmann, Dominik Lorenz, Patrick Esser, and Bj{\"o}rn Ommer.
\newblock High-resolution image synthesis with latent diffusion models.
\newblock In \emph{Proceedings of the IEEE/CVF conference on computer vision and pattern recognition}, pages 10684--10695, 2022.

\bibitem[Chan et~al.(2024)]{chan2024tutorial}
Stanley Chan et~al.
\newblock Tutorial on diffusion models for imaging and vision.
\newblock \emph{Foundations and Trends{\textregistered} in Computer Graphics and Vision}, 16\penalty0 (4):\penalty0 322--471, 2024.

\bibitem[Li et~al.(2022{\natexlab{a}})Li, Thickstun, Gulrajani, Liang, and Hashimoto]{li2022diffusion}
Xiang Li, John Thickstun, Ishaan Gulrajani, Percy~S Liang, and Tatsunori~B Hashimoto.
\newblock Diffusion-lm improves controllable text generation.
\newblock \emph{Advances in Neural Information Processing Systems}, 35:\penalty0 4328--4343, 2022{\natexlab{a}}.

\bibitem[Karras et~al.(2022)Karras, Aittala, Aila, and Laine]{karras2022elucidating}
Tero Karras, Miika Aittala, Timo Aila, and Samuli Laine.
\newblock Elucidating the design space of diffusion-based generative models.
\newblock \emph{Advances in Neural Information Processing Systems}, 35:\penalty0 26565--26577, 2022.

\bibitem[Anderson(1982)]{anderson1982reverse}
Brian~DO Anderson.
\newblock Reverse-time diffusion equation models.
\newblock \emph{Stochastic Processes and their Applications}, 12\penalty0 (3):\penalty0 313--326, 1982.

\bibitem[Hyv{\"a}rinen and Dayan(2005)]{hyvarinen2005estimation}
Aapo Hyv{\"a}rinen and Peter Dayan.
\newblock Estimation of non-normalized statistical models by score matching.
\newblock \emph{Journal of Machine Learning Research}, 6\penalty0 (4), 2005.

\bibitem[Vincent(2011)]{vincent2011connection}
Pascal Vincent.
\newblock A connection between score matching and denoising autoencoders.
\newblock \emph{Neural computation}, 23\penalty0 (7):\penalty0 1661--1674, 2011.

\bibitem[Wang et~al.(2024)Wang, He, and Tao]{wang2024evaluating}
Yuqing Wang, Ye~He, and Molei Tao.
\newblock Evaluating the design space of diffusion-based generative models.
\newblock \emph{arXiv preprint arXiv:2406.12839}, 2024.

\bibitem[Degond and Mas-Gallic(1989)]{degond1989weighted}
Pierre Degond and Sylvie Mas-Gallic.
\newblock The weighted particle method for convection-diffusion equations. i. the case of an isotropic viscosity.
\newblock \emph{Mathematics of computation}, 53\penalty0 (188):\penalty0 485--507, 1989.

\bibitem[Degond and Mustieles(1990)]{degond1990deterministic}
Pierre Degond and Francisco-Jos{\'e} Mustieles.
\newblock A deterministic approximation of diffusion equations using particles.
\newblock \emph{SIAM Journal on Scientific and Statistical Computing}, 11\penalty0 (2):\penalty0 293--310, 1990.

\bibitem[Rjasanow and Wagner(1996)]{rjasanow1996stochastic}
Sergej Rjasanow and Wolfgang Wagner.
\newblock A stochastic weighted particle method for the boltzmann equation.
\newblock \emph{Journal of Computational Physics}, 124\penalty0 (2):\penalty0 243--253, 1996.

\bibitem[Bossy and Talay(1997)]{bossy1997stochastic}
Mireille Bossy and Denis Talay.
\newblock A stochastic particle method for the mckean-vlasov and the burgers equation.
\newblock \emph{Mathematics of computation}, 66\penalty0 (217):\penalty0 157--192, 1997.

\bibitem[Talay and Vaillant(2003)]{talay2003stochastic}
Denis Talay and Olivier Vaillant.
\newblock A stochastic particle method with random weights for the computation of statistical solutions of mckean-vlasov equations.
\newblock \emph{The Annals of Applied Probability}, 13\penalty0 (1):\penalty0 140--180, 2003.

\bibitem[Raviart(2006)]{raviart2006analysis}
Pierre-Arnaud Raviart.
\newblock An analysis of particle methods.
\newblock In \emph{Numerical Methods in Fluid Dynamics: Lectures given at the 3rd 1983 Session of the Centro Internationale Matematico Estivo (CIME) held at Como, Italy, July 7--15, 1983}, pages 243--324. Springer, 2006.

\bibitem[Chertock(2017)]{chertock2017practical}
Alina Chertock.
\newblock A practical guide to deterministic particle methods.
\newblock In \emph{Handbook of numerical analysis}, volume~18, pages 177--202. Elsevier, 2017.

\bibitem[Roberts and Stramer(2002)]{roberts2002langevin}
Gareth~O Roberts and Osnat Stramer.
\newblock Langevin diffusions and metropolis-hastings algorithms.
\newblock \emph{Methodology and computing in applied probability}, 4:\penalty0 337--357, 2002.

\bibitem[Neal(2001)]{neal2001annealed}
Radford~M Neal.
\newblock Annealed importance sampling.
\newblock \emph{Statistics and computing}, 11:\penalty0 125--139, 2001.

\bibitem[Lu et~al.(2019{\natexlab{a}})Lu, Lu, and Nolen]{lu2019accelerating}
Yulong Lu, Jianfeng Lu, and James Nolen.
\newblock Accelerating langevin sampling with birth-death.
\newblock \emph{arXiv preprint arXiv:1905.09863}, 2019{\natexlab{a}}.

\bibitem[Tan and Lu(2023)]{tan2023accelerate}
Lezhi Tan and Jianfeng Lu.
\newblock Accelerate langevin sampling with birth-death process and exploration component.
\newblock \emph{arXiv preprint arXiv:2305.05529}, 2023.

\bibitem[Chen and Ying(2024{\natexlab{a}})]{chen2024ensemble}
Haoxuan Chen and Lexing Ying.
\newblock Ensemble-based annealed importance sampling.
\newblock \emph{arXiv preprint arXiv:2401.15645}, 2024{\natexlab{a}}.

\bibitem[Lindsey et~al.(2022)Lindsey, Weare, and Zhang]{lindsey2022ensemble}
Michael Lindsey, Jonathan Weare, and Anna Zhang.
\newblock Ensemble markov chain monte carlo with teleporting walkers.
\newblock \emph{SIAM/ASA Journal on Uncertainty Quantification}, 10\penalty0 (3):\penalty0 860--885, 2022.

\bibitem[Chen et~al.(2024{\natexlab{a}})Chen, Chewi, Lee, Li, Lu, and Salim]{chen2024probability}
Sitan Chen, Sinho Chewi, Holden Lee, Yuanzhi Li, Jianfeng Lu, and Adil Salim.
\newblock The probability flow ode is provably fast.
\newblock \emph{Advances in Neural Information Processing Systems}, 36, 2024{\natexlab{a}}.

\bibitem[Bradley and Nakkiran(2024)]{bradley2024classifier}
Arwen Bradley and Preetum Nakkiran.
\newblock Classifier-free guidance is a predictor-corrector.
\newblock \emph{arXiv preprint arXiv:2408.09000}, 2024.

\bibitem[Dhariwal and Nichol(2021)]{dhariwal2021diffusion}
Prafulla Dhariwal and Alexander Nichol.
\newblock Diffusion models beat gans on image synthesis.
\newblock \emph{Advances in neural information processing systems}, 34:\penalty0 8780--8794, 2021.

\bibitem[Gabay and Mercier(1976)]{gabay1976dual}
Daniel Gabay and Bertrand Mercier.
\newblock A dual algorithm for the solution of nonlinear variational problems via finite element approximation.
\newblock \emph{Computers \& mathematics with applications}, 2\penalty0 (1):\penalty0 17--40, 1976.

\bibitem[Wang et~al.(2008)Wang, Yang, Yin, and Zhang]{wang2008new}
Yilun Wang, Junfeng Yang, Wotao Yin, and Yin Zhang.
\newblock A new alternating minimization algorithm for total variation image reconstruction.
\newblock \emph{SIAM Journal on Imaging Sciences}, 1\penalty0 (3):\penalty0 248--272, 2008.

\bibitem[Boyd et~al.(2011)Boyd, Parikh, Chu, Peleato, Eckstein, et~al.]{boyd2011distributed}
Stephen Boyd, Neal Parikh, Eric Chu, Borja Peleato, Jonathan Eckstein, et~al.
\newblock Distributed optimization and statistical learning via the alternating direction method of multipliers.
\newblock \emph{Foundations and Trends{\textregistered} in Machine learning}, 3\penalty0 (1):\penalty0 1--122, 2011.

\bibitem[Sun et~al.(2016)Sun, Li, Xu, et~al.]{sun2016deep}
Jian Sun, Huibin Li, Zongben Xu, et~al.
\newblock Deep admm-net for compressive sensing mri.
\newblock \emph{Advances in neural information processing systems}, 29, 2016.

\bibitem[Chan et~al.(2016)Chan, Wang, and Elgendy]{chan2016plug}
Stanley~H Chan, Xiran Wang, and Omar~A Elgendy.
\newblock Plug-and-play admm for image restoration: Fixed-point convergence and applications.
\newblock \emph{IEEE Transactions on Computational Imaging}, 3\penalty0 (1):\penalty0 84--98, 2016.

\bibitem[Ryu et~al.(2019)Ryu, Liu, Wang, Chen, Wang, and Yin]{ryu2019plug}
Ernest Ryu, Jialin Liu, Sicheng Wang, Xiaohan Chen, Zhangyang Wang, and Wotao Yin.
\newblock Plug-and-play methods provably converge with properly trained denoisers.
\newblock In \emph{International Conference on Machine Learning}, pages 5546--5557. PMLR, 2019.

\bibitem[Beck and Teboulle(2009)]{beck2009fast}
Amir Beck and Marc Teboulle.
\newblock A fast iterative shrinkage-thresholding algorithm for linear inverse problems.
\newblock \emph{SIAM journal on imaging sciences}, 2\penalty0 (1):\penalty0 183--202, 2009.

\bibitem[Zhang and Ghanem(2018)]{zhang2018ista}
Jian Zhang and Bernard Ghanem.
\newblock Ista-net: Interpretable optimization-inspired deep network for image compressive sensing.
\newblock In \emph{Proceedings of the IEEE conference on computer vision and pattern recognition}, pages 1828--1837, 2018.

\bibitem[Xiang et~al.(2021)Xiang, Dong, and Yang]{xiang2021fista}
Jinxi Xiang, Yonggui Dong, and Yunjie Yang.
\newblock Fista-net: Learning a fast iterative shrinkage thresholding network for inverse problems in imaging.
\newblock \emph{IEEE Transactions on Medical Imaging}, 40\penalty0 (5):\penalty0 1329--1339, 2021.

\bibitem[Wu et~al.(2024{\natexlab{b}})Wu, Chen, Li, Wang, and Wei]{wu2024theoretical}
Yuchen Wu, Minshuo Chen, Zihao Li, Mengdi Wang, and Yuting Wei.
\newblock Theoretical insights for diffusion guidance: A case study for gaussian mixture models.
\newblock \emph{arXiv preprint arXiv:2403.01639}, 2024{\natexlab{b}}.

\bibitem[Chidambaram et~al.(2024)Chidambaram, Gatmiry, Chen, Lee, and Lu]{chidambaram2024does}
Muthu Chidambaram, Khashayar Gatmiry, Sitan Chen, Holden Lee, and Jianfeng Lu.
\newblock What does guidance do? a fine-grained analysis in a simple setting.
\newblock \emph{arXiv preprint arXiv:2409.13074}, 2024.

\bibitem[Ho and Salimans(2022)]{ho2022classifier}
Jonathan Ho and Tim Salimans.
\newblock Classifier-free diffusion guidance.
\newblock \emph{arXiv preprint arXiv:2207.12598}, 2022.

\bibitem[Bansal et~al.(2023)Bansal, Chu, Schwarzschild, Sengupta, Goldblum, Geiping, and Goldstein]{bansal2023universal}
Arpit Bansal, Hong-Min Chu, Avi Schwarzschild, Soumyadip Sengupta, Micah Goldblum, Jonas Geiping, and Tom Goldstein.
\newblock Universal guidance for diffusion models.
\newblock In \emph{Proceedings of the IEEE/CVF Conference on Computer Vision and Pattern Recognition}, pages 843--852, 2023.

\bibitem[Song et~al.(2023{\natexlab{b}})Song, Zhang, Yin, Mardani, Liu, Kautz, Chen, and Vahdat]{song2023loss}
Jiaming Song, Qinsheng Zhang, Hongxu Yin, Morteza Mardani, Ming-Yu Liu, Jan Kautz, Yongxin Chen, and Arash Vahdat.
\newblock Loss-guided diffusion models for plug-and-play controllable generation.
\newblock In \emph{International Conference on Machine Learning}, pages 32483--32498. PMLR, 2023{\natexlab{b}}.

\bibitem[He et~al.(2023)He, Murata, Lai, Takida, Uesaka, Kim, Liao, Mitsufuji, Kolter, Salakhutdinov, et~al.]{he2023manifold}
Yutong He, Naoki Murata, Chieh-Hsin Lai, Yuhta Takida, Toshimitsu Uesaka, Dongjun Kim, Wei-Hsiang Liao, Yuki Mitsufuji, J~Zico Kolter, Ruslan Salakhutdinov, et~al.
\newblock Manifold preserving guided diffusion.
\newblock \emph{arXiv preprint arXiv:2311.16424}, 2023.

\bibitem[Guo et~al.(2024)Guo, Yuan, Yang, Chen, and Wang]{guo2024gradient}
Yingqing Guo, Hui Yuan, Yukang Yang, Minshuo Chen, and Mengdi Wang.
\newblock Gradient guidance for diffusion models: An optimization perspective.
\newblock \emph{arXiv preprint arXiv:2404.14743}, 2024.

\bibitem[Lu and Wang(2024)]{lu2024guidance}
Jianfeng Lu and Yuliang Wang.
\newblock Guidance for twisted particle filter: a continuous-time perspective.
\newblock \emph{arXiv preprint arXiv:2409.02399}, 2024.

\bibitem[Zheng et~al.(2024)Zheng, Chu, Wang, Kovachki, Baptista, and Yue]{zheng2024ensemble}
Hongkai Zheng, Wenda Chu, Austin Wang, Nikola Kovachki, Ricardo Baptista, and Yisong Yue.
\newblock Ensemble kalman diffusion guidance: A derivative-free method for inverse problems.
\newblock \emph{arXiv preprint arXiv:2409.20175}, 2024.

\bibitem[Ye et~al.(2024)Ye, Lin, Han, Xu, Liu, Liang, Ma, Zou, and Ermon]{ye2024tfg}
Haotian Ye, Haowei Lin, Jiaqi Han, Minkai Xu, Sheng Liu, Yitao Liang, Jianzhu Ma, James~Y Zou, and Stefano Ermon.
\newblock Tfg: Unified training-free guidance for diffusion models.
\newblock \emph{Advances in Neural Information Processing Systems}, 37:\penalty0 22370--22417, 2024.

\bibitem[Chen et~al.(2022)Chen, Chewi, Li, Li, Salim, and Zhang]{chen2022sampling}
Sitan Chen, Sinho Chewi, Jerry Li, Yuanzhi Li, Adil Salim, and Anru~R Zhang.
\newblock Sampling is as easy as learning the score: theory for diffusion models with minimal data assumptions.
\newblock \emph{arXiv preprint arXiv:2209.11215}, 2022.

\bibitem[Chen et~al.(2023{\natexlab{a}})Chen, Lee, and Lu]{chen2023improved}
Hongrui Chen, Holden Lee, and Jianfeng Lu.
\newblock Improved analysis of score-based generative modeling: User-friendly bounds under minimal smoothness assumptions.
\newblock In \emph{International Conference on Machine Learning}, pages 4735--4763. PMLR, 2023{\natexlab{a}}.

\bibitem[Benton et~al.(2023)Benton, De~Bortoli, Doucet, and Deligiannidis]{benton2023linear}
Joe Benton, Valentin De~Bortoli, Arnaud Doucet, and George Deligiannidis.
\newblock Linear convergence bounds for diffusion models via stochastic localization.
\newblock \emph{arXiv preprint arXiv:2308.03686}, 2023.

\bibitem[Purohit et~al.(2024)Purohit, Repasky, Lu, Qiu, Xie, and Cheng]{purohit2024posterior}
Vishal Purohit, Matthew Repasky, Jianfeng Lu, Qiang Qiu, Yao Xie, and Xiuyuan Cheng.
\newblock Posterior sampling via langevin dynamics based on generative priors.
\newblock \emph{arXiv preprint arXiv:2410.02078}, 2024.

\bibitem[Lu et~al.(2023)Lu, Slep{\v{c}}ev, and Wang]{lu2023birth}
Yulong Lu, Dejan Slep{\v{c}}ev, and Lihan Wang.
\newblock Birth--death dynamics for sampling: global convergence, approximations and their asymptotics.
\newblock \emph{Nonlinearity}, 36\penalty0 (11):\penalty0 5731, 2023.

\bibitem[Chen et~al.(2023{\natexlab{b}})Chen, Huang, Huang, Reich, and Stuart]{chen2023sampling}
Yifan Chen, Daniel~Zhengyu Huang, Jiaoyang Huang, Sebastian Reich, and Andrew~M Stuart.
\newblock Sampling via gradient flows in the space of probability measures.
\newblock \emph{arXiv preprint arXiv:2310.03597}, 2023{\natexlab{b}}.

\bibitem[Yan et~al.(2024)Yan, Wang, and Rigollet]{yan2024learning}
Yuling Yan, Kaizheng Wang, and Philippe Rigollet.
\newblock Learning gaussian mixtures using the wasserstein--fisher--rao gradient flow.
\newblock \emph{The Annals of Statistics}, 52\penalty0 (4):\penalty0 1774--1795, 2024.

\bibitem[Sznitman(1991)]{sznitman1991topics}
Alain-Sol Sznitman.
\newblock Topics in propagation of chaos.
\newblock \emph{Ecole d’{\'e}t{\'e} de probabilit{\'e}s de Saint-Flour XIX—1989}, 1464:\penalty0 165--251, 1991.

\bibitem[Lacker(2018)]{lacker2018mean}
Daniel Lacker.
\newblock Mean field games and interacting particle systems.
\newblock \emph{preprint}, 2018.

\bibitem[Li et~al.(2024{\natexlab{a}})Li, Kwon, Alkhouri, Ravishankar, and Qu]{li2024decoupled}
Xiang Li, Soo~Min Kwon, Ismail~R Alkhouri, Saiprasad Ravishankar, and Qing Qu.
\newblock Decoupled data consistency with diffusion purification for image restoration.
\newblock \emph{arXiv preprint arXiv:2403.06054}, 2024{\natexlab{a}}.

\bibitem[Zhang et~al.(2018{\natexlab{b}})Zhang, Isola, Efros, Shechtman, and Wang]{zhang2018unreasonable}
Richard Zhang, Phillip Isola, Alexei~A Efros, Eli Shechtman, and Oliver Wang.
\newblock The unreasonable effectiveness of deep features as a perceptual metric.
\newblock In \emph{Proceedings of the IEEE conference on computer vision and pattern recognition}, pages 586--595, 2018{\natexlab{b}}.

\bibitem[Jaganathan et~al.(2016)Jaganathan, Eldar, and Hassibi]{jaganathan2016phase}
Kishore Jaganathan, Yonina~C Eldar, and Babak Hassibi.
\newblock Phase retrieval: An overview of recent developments.
\newblock \emph{Optical Compressive Imaging}, pages 279--312, 2016.

\bibitem[Fienup(1982)]{fienup1982phase}
James~R Fienup.
\newblock Phase retrieval algorithms: a comparison.
\newblock \emph{Applied optics}, 21\penalty0 (15):\penalty0 2758--2769, 1982.

\bibitem[Candes et~al.(2015{\natexlab{a}})Candes, Li, and Soltanolkotabi]{candes2015wirtinger}
Emmanuel~J Candes, Xiaodong Li, and Mahdi Soltanolkotabi.
\newblock Phase retrieval via wirtinger flow: Theory and algorithms.
\newblock \emph{IEEE Transactions on Information Theory}, 61\penalty0 (4):\penalty0 1985--2007, 2015{\natexlab{a}}.

\bibitem[Candes et~al.(2015{\natexlab{b}})Candes, Li, and Soltanolkotabi]{candes2015phase}
Emmanuel~J Candes, Xiaodong Li, and Mahdi Soltanolkotabi.
\newblock Phase retrieval from coded diffraction patterns.
\newblock \emph{Applied and Computational Harmonic Analysis}, 39\penalty0 (2):\penalty0 277--299, 2015{\natexlab{b}}.

\bibitem[Kantas et~al.(2014)Kantas, Beskos, and Jasra]{kantas2014sequential}
Nikolas Kantas, Alexandros Beskos, and Ajay Jasra.
\newblock Sequential monte carlo methods for high-dimensional inverse problems: A case study for the navier--stokes equations.
\newblock \emph{SIAM/ASA Journal on Uncertainty Quantification}, 2\penalty0 (1):\penalty0 464--489, 2014.

\bibitem[Daras et~al.(2024{\natexlab{b}})Daras, Nie, Kreis, Dimakis, Mardani, Kovachki, and Vahdat]{daras2024warped}
Giannis Daras, Weili Nie, Karsten Kreis, Alex Dimakis, Morteza Mardani, Nikola Kovachki, and Arash Vahdat.
\newblock Warped diffusion: Solving video inverse problems with image diffusion models.
\newblock \emph{Advances in Neural Information Processing Systems}, 37:\penalty0 101116--101143, 2024{\natexlab{b}}.

\bibitem[Zhang et~al.(2025{\natexlab{a}})Zhang, Wu, Feng, Song, Yue, and Bouman]{zhang2025step}
Bingliang Zhang, Zihui Wu, Berthy~T Feng, Yang Song, Yisong Yue, and Katherine~L Bouman.
\newblock Step: A general and scalable framework for solving video inverse problems with spatiotemporal diffusion priors.
\newblock \emph{arXiv preprint arXiv:2504.07549}, 2025{\natexlab{a}}.

\bibitem[Jing et~al.(2024)Jing, Berger, and Jaakkola]{jing2024alphafold}
Bowen Jing, Bonnie Berger, and Tommi Jaakkola.
\newblock Alphafold meets flow matching for generating protein ensembles.
\newblock \emph{arXiv preprint arXiv:2402.04845}, 2024.

\bibitem[Maddipatla et~al.(2025)Maddipatla, Sellam, Bojan, Vedula, Schanda, Marx, and Bronstein]{maddipatla2025inverse}
Advaith Maddipatla, Nadav~Bojan Sellam, Meital Bojan, Sanketh Vedula, Paul Schanda, Ailie Marx, and Alex~M Bronstein.
\newblock Inverse problems with experiment-guided alphafold.
\newblock \emph{arXiv preprint arXiv:2502.09372}, 2025.

\bibitem[Sridharan et~al.(2022)Sridharan, Mehta, Pathak, and Priyakumar]{sridharan2022deep}
Bhuvanesh Sridharan, Sarvesh Mehta, Yashaswi Pathak, and U~Deva Priyakumar.
\newblock Deep reinforcement learning for molecular inverse problem of nuclear magnetic resonance spectra to molecular structure.
\newblock \emph{The Journal of Physical Chemistry Letters}, 13\penalty0 (22):\penalty0 4924--4933, 2022.

\bibitem[Hu et~al.(2024)Hu, Chen, Rotskoff, Kanan, and Markland]{hu2024accurate}
Frank Hu, Michael~S Chen, Grant~M Rotskoff, Matthew~W Kanan, and Thomas~E Markland.
\newblock Accurate and efficient structure elucidation from routine one-dimensional nmr spectra using multitask machine learning.
\newblock \emph{ACS Central Science}, 10\penalty0 (11):\penalty0 2162--2170, 2024.

\bibitem[Albergo et~al.(2023{\natexlab{b}})Albergo, Boffi, Lindsey, and Vanden-Eijnden]{albergo2023multimarginal}
Michael~S Albergo, Nicholas~M Boffi, Michael Lindsey, and Eric Vanden-Eijnden.
\newblock Multimarginal generative modeling with stochastic interpolants.
\newblock \emph{arXiv preprint arXiv:2310.03695}, 2023{\natexlab{b}}.

\bibitem[Lindsey(2025)]{lindsey2025mne}
Michael Lindsey.
\newblock Mne: overparametrized neural evolution with applications to diffusion processes and sampling.
\newblock \emph{arXiv preprint arXiv:2502.03645}, 2025.

\bibitem[Zhu et~al.(2023)Zhu, Li, Wang, He, and Yao]{zhu2023conditional}
Yuanzhi Zhu, Zhaohai Li, Tianwei Wang, Mengchao He, and Cong Yao.
\newblock Conditional text image generation with diffusion models.
\newblock In \emph{Proceedings of the IEEE/CVF Conference on Computer Vision and Pattern Recognition}, pages 14235--14245, 2023.

\bibitem[Uehara et~al.(2025)Uehara, Zhao, Wang, Li, Regev, Levine, and Biancalani]{uehara2025reward}
Masatoshi Uehara, Yulai Zhao, Chenyu Wang, Xiner Li, Aviv Regev, Sergey Levine, and Tommaso Biancalani.
\newblock Reward-guided controlled generation for inference-time alignment in diffusion models: Tutorial and review.
\newblock \emph{arXiv preprint arXiv:2501.09685}, 2025.

\bibitem[Song et~al.(2023{\natexlab{c}})Song, Kwon, Zhang, Hu, Qu, and Shen]{song2023solvinglatent}
Bowen Song, Soo~Min Kwon, Zecheng Zhang, Xinyu Hu, Qing Qu, and Liyue Shen.
\newblock Solving inverse problems with latent diffusion models via hard data consistency.
\newblock \emph{arXiv preprint arXiv:2307.08123}, 2023{\natexlab{c}}.

\bibitem[Murata et~al.(2024)Murata, Lai, Takida, Uesaka, Nguyen, Ermon, and Mitsufuji]{murata2024g2d2}
Naoki Murata, Chieh-Hsin Lai, Yuhta Takida, Toshimitsu Uesaka, Bac Nguyen, Stefano Ermon, and Yuki Mitsufuji.
\newblock G2d2: Gradient-guided discrete diffusion for image inverse problem solving.
\newblock \emph{arXiv preprint arXiv:2410.14710}, 2024.

\bibitem[Luan et~al.(2025)Luan, Ng, and Ling]{luan2025ddps}
Hao Luan, See-Kiong Ng, and Chun~Kai Ling.
\newblock Ddps: Discrete diffusion posterior sampling for paths in layered graphs.
\newblock \emph{arXiv preprint arXiv:2504.20754}, 2025.

\bibitem[Chu et~al.(2025)Chu, Song, and Yue]{chu2025split}
Wenda Chu, Yang Song, and Yisong Yue.
\newblock Split gibbs discrete diffusion posterior sampling.
\newblock \emph{arXiv preprint arXiv:2503.01161}, 2025.

\bibitem[Austin et~al.(2021)Austin, Johnson, Ho, Tarlow, and Van Den~Berg]{austin2021structured}
Jacob Austin, Daniel~D Johnson, Jonathan Ho, Daniel Tarlow, and Rianne Van Den~Berg.
\newblock Structured denoising diffusion models in discrete state-spaces.
\newblock \emph{Advances in Neural Information Processing Systems}, 34:\penalty0 17981--17993, 2021.

\bibitem[Hoogeboom et~al.(2021{\natexlab{a}})Hoogeboom, Gritsenko, Bastings, Poole, Berg, and Salimans]{hoogeboom2021autoregressive}
Emiel Hoogeboom, Alexey~A Gritsenko, Jasmijn Bastings, Ben Poole, Rianne van~den Berg, and Tim Salimans.
\newblock Autoregressive diffusion models.
\newblock \emph{arXiv preprint arXiv:2110.02037}, 2021{\natexlab{a}}.

\bibitem[Hoogeboom et~al.(2021{\natexlab{b}})Hoogeboom, Nielsen, Jaini, Forr{\'e}, and Welling]{hoogeboom2021argmax}
Emiel Hoogeboom, Didrik Nielsen, Priyank Jaini, Patrick Forr{\'e}, and Max Welling.
\newblock Argmax flows and multinomial diffusion: Learning categorical distributions.
\newblock \emph{Advances in Neural Information Processing Systems}, 34:\penalty0 12454--12465, 2021{\natexlab{b}}.

\bibitem[Meng et~al.(2022)Meng, Choi, Song, and Ermon]{meng2022concrete}
Chenlin Meng, Kristy Choi, Jiaming Song, and Stefano Ermon.
\newblock Concrete score matching: Generalized score matching for discrete data.
\newblock \emph{Advances in Neural Information Processing Systems}, 35:\penalty0 34532--34545, 2022.

\bibitem[Sun et~al.(2022)Sun, Yu, Dai, Schuurmans, and Dai]{sun2022score}
Haoran Sun, Lijun Yu, Bo~Dai, Dale Schuurmans, and Hanjun Dai.
\newblock Score-based continuous-time discrete diffusion models.
\newblock \emph{arXiv preprint arXiv:2211.16750}, 2022.

\bibitem[Richemond et~al.(2022)Richemond, Dieleman, and Doucet]{richemond2022categorical}
Pierre~H Richemond, Sander Dieleman, and Arnaud Doucet.
\newblock Categorical sdes with simplex diffusion.
\newblock \emph{arXiv preprint arXiv:2210.14784}, 2022.

\bibitem[Lou et~al.(2023)Lou, Meng, and Ermon]{lou2023discrete}
Aaron Lou, Chenlin Meng, and Stefano Ermon.
\newblock Discrete diffusion language modeling by estimating the ratios of the data distribution.
\newblock \emph{arXiv preprint arXiv:2310.16834}, 2023.

\bibitem[Floto et~al.(2023)Floto, Jonsson, Nica, Sanner, and Zhu]{floto2023diffusion}
Griffin Floto, Thorsteinn Jonsson, Mihai Nica, Scott Sanner, and Eric~Zhengyu Zhu.
\newblock Diffusion on the probability simplex.
\newblock \emph{arXiv preprint arXiv:2309.02530}, 2023.

\bibitem[Santos et~al.(2023)Santos, Fox, Lubbers, and Lin]{santos2023blackout}
Javier~E Santos, Zachary~R Fox, Nicholas Lubbers, and Yen~Ting Lin.
\newblock Blackout diffusion: generative diffusion models in discrete-state spaces.
\newblock In \emph{International Conference on Machine Learning}, pages 9034--9059. PMLR, 2023.

\bibitem[Chen and Ying(2024{\natexlab{b}})]{chen2024convergence}
Hongrui Chen and Lexing Ying.
\newblock Convergence analysis of discrete diffusion model: Exact implementation through uniformization.
\newblock \emph{arXiv preprint arXiv:2402.08095}, 2024{\natexlab{b}}.

\bibitem[Ren et~al.(2024{\natexlab{a}})Ren, Chen, Rotskoff, and Ying]{ren2024discrete}
Yinuo Ren, Haoxuan Chen, Grant~M Rotskoff, and Lexing Ying.
\newblock How discrete and continuous diffusion meet: Comprehensive analysis of discrete diffusion models via a stochastic integral framework.
\newblock \emph{arXiv preprint arXiv:2410.03601}, 2024{\natexlab{a}}.

\bibitem[Zhang et~al.(2024{\natexlab{a}})Zhang, Yu, Zhu, Chang, Gao, Wu, and Leong]{zhang2024flow}
Yasi Zhang, Peiyu Yu, Yaxuan Zhu, Yingshan Chang, Feng Gao, Ying~Nian Wu, and Oscar Leong.
\newblock Flow priors for linear inverse problems via iterative corrupted trajectory matching.
\newblock \emph{arXiv preprint arXiv:2405.18816}, 2024{\natexlab{a}}.

\bibitem[Benton et~al.(2024)Benton, Shi, De~Bortoli, Deligiannidis, and Doucet]{benton2024denoising}
Joe Benton, Yuyang Shi, Valentin De~Bortoli, George Deligiannidis, and Arnaud Doucet.
\newblock From denoising diffusions to denoising markov models.
\newblock \emph{Journal of the Royal Statistical Society Series B: Statistical Methodology}, 86\penalty0 (2):\penalty0 286--301, 2024.

\bibitem[Holderrieth et~al.(2024)Holderrieth, Havasi, Yim, Shaul, Gat, Jaakkola, Karrer, Chen, and Lipman]{holderrieth2024generator}
Peter Holderrieth, Marton Havasi, Jason Yim, Neta Shaul, Itai Gat, Tommi Jaakkola, Brian Karrer, Ricky~TQ Chen, and Yaron Lipman.
\newblock Generator matching: Generative modeling with arbitrary markov processes.
\newblock \emph{arXiv preprint arXiv:2410.20587}, 2024.

\bibitem[Ren et~al.(2025{\natexlab{a}})Ren, Rotskoff, and Ying]{ren2025unified}
Yinuo Ren, Grant~M Rotskoff, and Lexing Ying.
\newblock A unified approach to analysis and design of denoising markov models.
\newblock \emph{arXiv preprint arXiv:2504.01938}, 2025{\natexlab{a}}.

\bibitem[Shih et~al.(2024)Shih, Belkhale, Ermon, Sadigh, and Anari]{shih2024parallel}
Andy Shih, Suneel Belkhale, Stefano Ermon, Dorsa Sadigh, and Nima Anari.
\newblock Parallel sampling of diffusion models.
\newblock \emph{Advances in Neural Information Processing Systems}, 36, 2024.

\bibitem[Tang et~al.(2024)Tang, Tang, Luo, Wang, and Chang]{tang2024accelerating}
Zhiwei Tang, Jiasheng Tang, Hao Luo, Fan Wang, and Tsung-Hui Chang.
\newblock Accelerating parallel sampling of diffusion models.
\newblock \emph{arXiv preprint arXiv:2402.09970}, 2024.

\bibitem[Cao et~al.(2024)Cao, Shi, Zhang, Zhang, Timofte, and Van~Gool]{cao2024deep}
Jiezhang Cao, Yue Shi, Kai Zhang, Yulun Zhang, Radu Timofte, and Luc Van~Gool.
\newblock Deep equilibrium diffusion restoration with parallel sampling.
\newblock In \emph{Proceedings of the IEEE/CVF Conference on Computer Vision and Pattern Recognition}, pages 2824--2834, 2024.

\bibitem[Selvam et~al.(2024)Selvam, Merchant, and Ermon]{selvam2024self}
Nikil~Roashan Selvam, Amil Merchant, and Stefano Ermon.
\newblock Self-refining diffusion samplers: Enabling parallelization via parareal iterations.
\newblock \emph{arXiv preprint arXiv:2412.08292}, 2024.

\bibitem[Chen et~al.(2024{\natexlab{b}})Chen, Ren, Ying, and Rotskoff]{chen2024accelerating}
Haoxuan Chen, Yinuo Ren, Lexing Ying, and Grant Rotskoff.
\newblock Accelerating diffusion models with parallel sampling: Inference at sub-linear time complexity.
\newblock \emph{Advances in Neural Information Processing Systems}, 37:\penalty0 133661--133709, 2024{\natexlab{b}}.

\bibitem[Gupta et~al.(2024)Gupta, Cai, and Chen]{gupta2024faster}
Shivam Gupta, Linda Cai, and Sitan Chen.
\newblock Faster diffusion-based sampling with randomized midpoints: Sequential and parallel.
\newblock \emph{arXiv preprint arXiv:2406.00924}, 2024.

\bibitem[Lu et~al.(2022{\natexlab{a}})Lu, Zhou, Bao, Chen, Li, and Zhu]{lu2022dpm++}
Cheng Lu, Yuhao Zhou, Fan Bao, Jianfei Chen, Chongxuan Li, and Jun Zhu.
\newblock Dpm-solver++: Fast solver for guided sampling of diffusion probabilistic models.
\newblock \emph{arXiv preprint arXiv:2211.01095}, 2022{\natexlab{a}}.

\bibitem[Liu et~al.(2022{\natexlab{b}})Liu, Ren, Lin, and Zhao]{liu2022pseudo}
Luping Liu, Yi~Ren, Zhijie Lin, and Zhou Zhao.
\newblock Pseudo numerical methods for diffusion models on manifolds.
\newblock \emph{arXiv preprint arXiv:2202.09778}, 2022{\natexlab{b}}.

\bibitem[Lu et~al.(2022{\natexlab{b}})Lu, Zhou, Bao, Chen, Li, and Zhu]{lu2022dpm}
Cheng Lu, Yuhao Zhou, Fan Bao, Jianfei Chen, Chongxuan Li, and Jun Zhu.
\newblock Dpm-solver: A fast ode solver for diffusion probabilistic model sampling in around 10 steps.
\newblock \emph{Advances in Neural Information Processing Systems}, 35:\penalty0 5775--5787, 2022{\natexlab{b}}.

\bibitem[Zheng et~al.(2023)Zheng, Lu, Chen, and Zhu]{zheng2023dpm}
Kaiwen Zheng, Cheng Lu, Jianfei Chen, and Jun Zhu.
\newblock Dpm-solver-v3: Improved diffusion ode solver with empirical model statistics.
\newblock \emph{Advances in Neural Information Processing Systems}, 36:\penalty0 55502--55542, 2023.

\bibitem[Li et~al.(2024{\natexlab{b}})Li, Huang, Efimov, Wei, Chi, and Chen]{li2024accelerating}
Gen Li, Yu~Huang, Timofey Efimov, Yuting Wei, Yuejie Chi, and Yuxin Chen.
\newblock Accelerating convergence of score-based diffusion models, provably.
\newblock \emph{arXiv preprint arXiv:2403.03852}, 2024{\natexlab{b}}.

\bibitem[Wu et~al.(2024{\natexlab{c}})Wu, Chen, and Wei]{wu2024stochastic}
Yuchen Wu, Yuxin Chen, and Yuting Wei.
\newblock Stochastic runge-kutta methods: Provable acceleration of diffusion models.
\newblock \emph{arXiv preprint arXiv:2410.04760}, 2024{\natexlab{c}}.

\bibitem[Ren et~al.(2025{\natexlab{b}})Ren, Chen, Zhu, Guo, Chen, Rotskoff, Tao, and Ying]{ren2025fast}
Yinuo Ren, Haoxuan Chen, Yuchen Zhu, Wei Guo, Yongxin Chen, Grant~M Rotskoff, Molei Tao, and Lexing Ying.
\newblock Fast solvers for discrete diffusion models: Theory and applications of high-order algorithms.
\newblock \emph{arXiv preprint arXiv:2502.00234}, 2025{\natexlab{b}}.

\bibitem[Yoon et~al.(2018)Yoon, Sim, and Han]{yoon2018analytic}
Hongkee Yoon, Jae-Hoon Sim, and Myung~Joon Han.
\newblock Analytic continuation via domain knowledge free machine learning.
\newblock \emph{Physical Review B}, 98\penalty0 (24):\penalty0 245101, 2018.

\bibitem[Khoo and Ying(2019)]{khoo2019switchnet}
Yuehaw Khoo and Lexing Ying.
\newblock Switchnet: a neural network model for forward and inverse scattering problems.
\newblock \emph{SIAM Journal on Scientific Computing}, 41\penalty0 (5):\penalty0 A3182--A3201, 2019.

\bibitem[Fan and Ying(2019{\natexlab{a}})]{fan2019solving1}
Yuwei Fan and Lexing Ying.
\newblock Solving inverse wave scattering with deep learning.
\newblock \emph{arXiv preprint arXiv:1911.13202}, 2019{\natexlab{a}}.

\bibitem[Fan and Ying(2019{\natexlab{b}})]{fan2019solving2}
Yuwei Fan and Lexing Ying.
\newblock Solving optical tomography with deep learning.
\newblock \emph{arXiv preprint arXiv:1910.04756}, 2019{\natexlab{b}}.

\bibitem[Fan and Ying(2020)]{fan2020solving}
Yuwei Fan and Lexing Ying.
\newblock Solving electrical impedance tomography with deep learning.
\newblock \emph{Journal of Computational Physics}, 404:\penalty0 109119, 2020.

\bibitem[Fournier et~al.(2020)Fournier, Wang, Yazyev, and Wu]{fournier2020artificial}
Romain Fournier, Lei Wang, Oleg~V Yazyev, and QuanSheng Wu.
\newblock Artificial neural network approach to the analytic continuation problem.
\newblock \emph{Physical Review Letters}, 124\penalty0 (5):\penalty0 056401, 2020.

\bibitem[Sun and Demanet(2020)]{sun2020extrapolated}
Hongyu Sun and Laurent Demanet.
\newblock Extrapolated full-waveform inversion with deep learning.
\newblock \emph{Geophysics}, 85\penalty0 (3):\penalty0 R275--R288, 2020.

\bibitem[Sun and Demanet(2021)]{sun2021deep}
Hongyu Sun and Laurent Demanet.
\newblock Deep learning for low-frequency extrapolation of multicomponent data in elastic fwi.
\newblock \emph{IEEE Transactions on Geoscience and Remote Sensing}, 60:\penalty0 1--11, 2021.

\bibitem[Li et~al.(2021)Li, Demanet, and Zepeda-N{\'u}{\~n}ez]{li2021accurate}
Matthew Li, Laurent Demanet, and Leonardo Zepeda-N{\'u}{\~n}ez.
\newblock Accurate and robust deep learning framework for solving wave-based inverse problems in the super-resolution regime.
\newblock \emph{arXiv preprint arXiv:2106.01143}, 2021.

\bibitem[Li et~al.(2022{\natexlab{b}})Li, Demanet, and Zepeda-N{\'u}{\~n}ez]{li2022wide}
Matthew Li, Laurent Demanet, and Leonardo Zepeda-N{\'u}{\~n}ez.
\newblock Wide-band butterfly network: stable and efficient inversion via multi-frequency neural networks.
\newblock \emph{Multiscale Modeling \& Simulation}, 20\penalty0 (4):\penalty0 1191--1227, 2022{\natexlab{b}}.

\bibitem[Zhou et~al.(2023)Zhou, Han, Rachh, and Borges]{zhou2023neural}
Mo~Zhou, Jiequn Han, Manas Rachh, and Carlos Borges.
\newblock A neural network warm-start approach for the inverse acoustic obstacle scattering problem.
\newblock \emph{Journal of Computational Physics}, 490:\penalty0 112341, 2023.

\bibitem[Fan and Ying(2023)]{fan2023solving}
Yuwei Fan and Lexing Ying.
\newblock Solving traveltime tomography with deep learning.
\newblock \emph{Communications in Mathematics and Statistics}, 11\penalty0 (1):\penalty0 3--19, 2023.

\bibitem[Molinaro et~al.(2023)Molinaro, Yang, Engquist, and Mishra]{molinaro2023neural}
Roberto Molinaro, Yunan Yang, Bj{\"o}rn Engquist, and Siddhartha Mishra.
\newblock Neural inverse operators for solving pde inverse problems.
\newblock \emph{arXiv preprint arXiv:2301.11167}, 2023.

\bibitem[Melia et~al.(2025)Melia, Tsang, Charisopoulos, Khoo, Hoskins, and Willett]{melia2025multi}
Owen Melia, Olivia Tsang, Vasileios Charisopoulos, Yuehaw Khoo, Jeremy Hoskins, and Rebecca Willett.
\newblock Multi-frequency progressive refinement for learned inverse scattering.
\newblock \emph{Journal of Computational Physics}, page 113809, 2025.

\bibitem[Arridge et~al.(2019)Arridge, Maass, {\"O}ktem, and Sch{\"o}nlieb]{arridge2019solving}
Simon Arridge, Peter Maass, Ozan {\"O}ktem, and Carola-Bibiane Sch{\"o}nlieb.
\newblock Solving inverse problems using data-driven models.
\newblock \emph{Acta Numerica}, 28:\penalty0 1--174, 2019.

\bibitem[Ying(2022{\natexlab{a}})]{ying2022solving}
Lexing Ying.
\newblock Solving inverse problems with deep learning.
\newblock In \emph{Proceedings of the International Congress of Mathematicians}, volume~7, pages 5154--5175, 2022{\natexlab{a}}.

\bibitem[Gilton et~al.(2019)Gilton, Ongie, and Willett]{gilton2019neumann}
Davis Gilton, Greg Ongie, and Rebecca Willett.
\newblock Neumann networks for linear inverse problems in imaging.
\newblock \emph{IEEE Transactions on Computational Imaging}, 6:\penalty0 328--343, 2019.

\bibitem[Park et~al.(2024)Park, Lee, and Hwang]{park2024solving}
Jun~H Park, Juyeob Lee, and Jungseek Hwang.
\newblock Solving inverse problems using normalizing flow prior: Application to optical spectra.
\newblock \emph{Physical Review B}, 109\penalty0 (16):\penalty0 165130, 2024.

\bibitem[Tao et~al.(2025)Tao, Liu, Su, Yang, and Tan]{tao2025map}
Pingping Tao, Haixia Liu, Jing Su, Xiaochen Yang, and Hongchen Tan.
\newblock Map-based problem-agnostic diffusion model for inverse problems.
\newblock \emph{arXiv preprint arXiv:2501.15128}, 2025.

\bibitem[Dasgupta et~al.(2025)Dasgupta, da~Cunha, Fardisi, Aminy, Binder, Shaddy, and Oberai]{dasgupta2025unifying}
Agnimitra Dasgupta, Alexsander~Marciano da~Cunha, Ali Fardisi, Mehrnegar Aminy, Brianna Binder, Bryan Shaddy, and Assad~A Oberai.
\newblock Unifying and extending diffusion models through pdes for solving inverse problems.
\newblock \emph{arXiv preprint arXiv:2504.07437}, 2025.

\bibitem[Chung and Ye(2022)]{chung2022score}
Hyungjin Chung and Jong~Chul Ye.
\newblock Score-based diffusion models for accelerated mri.
\newblock \emph{Medical image analysis}, 80:\penalty0 102479, 2022.

\bibitem[Tu et~al.(2025)Tu, Shi, and Lam]{tuscore2025}
Jiachen Tu, Yaokun Shi, and Fan Lam.
\newblock Score-based self-supervised mri denoising.
\newblock In \emph{The Thirteenth International Conference on Learning Representations}, 2025.

\bibitem[Kreis et~al.(2022)Kreis, Dockhorn, Li, and Zhong]{kreis2022latent}
Karsten Kreis, Tim Dockhorn, Zihao Li, and Ellen Zhong.
\newblock Latent space diffusion models of cryo-em structures.
\newblock \emph{arXiv preprint arXiv:2211.14169}, 2022.

\bibitem[Levy et~al.(2024)Levy, Chan, Fridovich-Keil, Poitevin, Zhong, and Wetzstein]{levy2024solving}
Axel Levy, Eric~R Chan, Sara Fridovich-Keil, Fr{\'e}d{\'e}ric Poitevin, Ellen~D Zhong, and Gordon Wetzstein.
\newblock Solving inverse problems in protein space using diffusion-based priors.
\newblock \emph{arXiv preprint arXiv:2406.04239}, 2024.

\bibitem[Jiang et~al.(2025)Jiang, Peng, Ma, and Yan]{jiang2025ode}
Enze Jiang, Jishen Peng, Zheng Ma, and Xiong-Bin Yan.
\newblock Ode-dps: Ode-based diffusion posterior sampling for linear inverse problems in partial differential equation.
\newblock \emph{Journal of Scientific Computing}, 102\penalty0 (3):\penalty0 69, 2025.

\bibitem[Zhang et~al.(2024{\natexlab{b}})Zhang, Guerra, Li, and Zepeda-N{\'u}{\~n}ez]{zhang2024back}
Borong Zhang, Mart{\'\i}n Guerra, Qin Li, and Leonardo Zepeda-N{\'u}{\~n}ez.
\newblock Back-projection diffusion: Solving the wideband inverse scattering problem with diffusion models.
\newblock \emph{arXiv preprint arXiv:2408.02866}, 2024{\natexlab{b}}.

\bibitem[Cao and Zhang(2024)]{cao2024subspace}
Xiang Cao and Xiaoqun Zhang.
\newblock Subspace diffusion posterior sampling for travel-time tomography.
\newblock \emph{Inverse Problems}, 2024.

\bibitem[Ding et~al.(2024)Ding, Duan, Jiao, Yang, Yuan, and Zhang]{ding2024nonlinear}
Zhao Ding, Chenguang Duan, Yuling Jiao, Jerry~Zhijian Yang, Cheng Yuan, and Pingwen Zhang.
\newblock Nonlinear assimilation with score-based sequential langevin sampling.
\newblock \emph{arXiv preprint arXiv:2411.13443}, 2024.

\bibitem[Hsu et~al.(2022)Hsu, Verkuil, Liu, Lin, Hie, Sercu, Lerer, and Rives]{hsu2022learning}
Chloe Hsu, Robert Verkuil, Jason Liu, Zeming Lin, Brian Hie, Tom Sercu, Adam Lerer, and Alexander Rives.
\newblock Learning inverse folding from millions of predicted structures.
\newblock In \emph{International conference on machine learning}, pages 8946--8970. PMLR, 2022.

\bibitem[Zhu et~al.(2024{\natexlab{b}})Zhu, Wu, Li, Yan, Yin, Wu, Li, Ye, Wang, and Wu]{zhu2024bridge}
Yiheng Zhu, Jialu Wu, Qiuyi Li, Jiahuan Yan, Mingze Yin, Wei Wu, Mingyang Li, Jieping Ye, Zheng Wang, and Jian Wu.
\newblock Bridge-if: Learning inverse protein folding with markov bridges.
\newblock \emph{arXiv preprint arXiv:2411.02120}, 2024{\natexlab{b}}.

\bibitem[Chen et~al.(2024{\natexlab{c}})Chen, Goldstein, Hua, Albergo, Boffi, and Vanden-Eijnden]{chen2024probabilistic}
Yifan Chen, Mark Goldstein, Mengjian Hua, Michael~S Albergo, Nicholas~M Boffi, and Eric Vanden-Eijnden.
\newblock Probabilistic forecasting with stochastic interpolants and f$\backslash$" ollmer processes.
\newblock \emph{arXiv preprint arXiv:2403.13724}, 2024{\natexlab{c}}.

\bibitem[Xu et~al.(2025)Xu, Lu, Xuan, Barzegari, et~al.]{xu2025diffusion}
Wuzhe Xu, Yulong Lu, Anqing Xuan, Ali Barzegari, et~al.
\newblock Diffusion-based models for unpaired super-resolution in fluid dynamics.
\newblock \emph{arXiv preprint arXiv:2504.05443}, 2025.

\bibitem[Molinaro et~al.(2024)Molinaro, Lanthaler, Raoni{\'c}, Rohner, Armegioiu, Simonis, Grund, Ramic, Wan, Sha, et~al.]{molinaro2024generative}
Roberto Molinaro, Samuel Lanthaler, Bogdan Raoni{\'c}, Tobias Rohner, Victor Armegioiu, Stephan Simonis, Dana Grund, Yannick Ramic, Zhong~Yi Wan, Fei Sha, et~al.
\newblock Generative ai for fast and accurate statistical computation of fluids.
\newblock \emph{arXiv preprint arXiv:2409.18359}, 2024.

\bibitem[Albergo and Vanden-Eijnden(2024)]{albergo2024nets}
Michael~S Albergo and Eric Vanden-Eijnden.
\newblock Nets: A non-equilibrium transport sampler.
\newblock \emph{arXiv preprint arXiv:2410.02711}, 2024.

\bibitem[Chen et~al.(2024{\natexlab{d}})Chen, Richter, Berner, Blessing, Neumann, and Anandkumar]{chen2024sequential}
Junhua Chen, Lorenz Richter, Julius Berner, Denis Blessing, Gerhard Neumann, and Anima Anandkumar.
\newblock Sequential controlled langevin diffusions.
\newblock \emph{arXiv preprint arXiv:2412.07081}, 2024{\natexlab{d}}.

\bibitem[Vargas et~al.(2023)Vargas, Padhy, Blessing, and N{\"u}sken]{vargas2023transport}
Francisco Vargas, Shreyas Padhy, Denis Blessing, and Nikolas N{\"u}sken.
\newblock Transport meets variational inference: Controlled monte carlo diffusions.
\newblock \emph{arXiv preprint arXiv:2307.01050}, 2023.

\bibitem[Wang et~al.(2025)Wang, Zheng, Wu, Baptista, Huang, and Yue]{wang2025ensemble}
Austin Wang, Hongkai Zheng, Zihui Wu, Ricardo Baptista, Daniel~Zhengyu Huang, and Yisong Yue.
\newblock Ensemble kalman sampling and diffusion prior in tandem: A split gibbs framework.
\newblock In \emph{Frontiers in Probabilistic Inference: Learning meets Sampling}, 2025.

\bibitem[Zhang et~al.(2025{\natexlab{b}})Zhang, Potaptchik, Doucet, Dau, and Syed]{zhang2025generalised}
Leo Zhang, Peter Potaptchik, Arnaud Doucet, Hai-Dang Dau, and Saifuddin Syed.
\newblock Generalised parallel tempering: Flexible replica exchange via flows and diffusions.
\newblock \emph{arXiv preprint arXiv:2502.10328}, 2025{\natexlab{b}}.

\bibitem[Jordan et~al.(1998)Jordan, Kinderlehrer, and Otto]{jordan1998variational}
Richard Jordan, David Kinderlehrer, and Felix Otto.
\newblock The variational formulation of the fokker--planck equation.
\newblock \emph{SIAM journal on mathematical analysis}, 29\penalty0 (1):\penalty0 1--17, 1998.

\bibitem[Gao et~al.(2019)Gao, Jiao, Wang, Wang, Yang, and Zhang]{gao2019deep}
Yuan Gao, Yuling Jiao, Yang Wang, Yao Wang, Can Yang, and Shunkang Zhang.
\newblock Deep generative learning via variational gradient flow.
\newblock In \emph{International Conference on Machine Learning}, pages 2093--2101. PMLR, 2019.

\bibitem[Ansari et~al.(2020)Ansari, Ang, and Soh]{ansari2020refining}
Abdul~Fatir Ansari, Ming~Liang Ang, and Harold Soh.
\newblock Refining deep generative models via discriminator gradient flow.
\newblock \emph{arXiv preprint arXiv:2012.00780}, 2020.

\bibitem[Fan et~al.(2021)Fan, Zhang, Taghvaei, and Chen]{fan2021variational}
Jiaojiao Fan, Qinsheng Zhang, Amirhossein Taghvaei, and Yongxin Chen.
\newblock Variational wasserstein gradient flow.
\newblock \emph{arXiv preprint arXiv:2112.02424}, 2021.

\bibitem[Lambert et~al.(2022)Lambert, Chewi, Bach, Bonnabel, and Rigollet]{lambert2022variational}
Marc Lambert, Sinho Chewi, Francis Bach, Silv{\`e}re Bonnabel, and Philippe Rigollet.
\newblock Variational inference via wasserstein gradient flows.
\newblock \emph{Advances in Neural Information Processing Systems}, 35:\penalty0 14434--14447, 2022.

\bibitem[Diao et~al.(2023)Diao, Balasubramanian, Chewi, and Salim]{diao2023forward}
Michael~Ziyang Diao, Krishna Balasubramanian, Sinho Chewi, and Adil Salim.
\newblock Forward-backward gaussian variational inference via jko in the bures-wasserstein space.
\newblock In \emph{International Conference on Machine Learning}, pages 7960--7991. PMLR, 2023.

\bibitem[Wild et~al.(2023)Wild, Ghalebikesabi, Sejdinovic, and Knoblauch]{wild2023rigorous}
Veit~David Wild, Sahra Ghalebikesabi, Dino Sejdinovic, and Jeremias Knoblauch.
\newblock A rigorous link between deep ensembles and (variational) bayesian methods.
\newblock \emph{Advances in Neural Information Processing Systems}, 36:\penalty0 39782--39811, 2023.

\bibitem[Shaul et~al.(2023)Shaul, Chen, Nickel, Le, and Lipman]{shaul2023kinetic}
Neta Shaul, Ricky~TQ Chen, Maximilian Nickel, Matthew Le, and Yaron Lipman.
\newblock On kinetic optimal probability paths for generative models.
\newblock In \emph{International Conference on Machine Learning}, pages 30883--30907. PMLR, 2023.

\bibitem[Zhang and Katsoulakis(2023)]{zhang2023mean}
Benjamin~J Zhang and Markos~A Katsoulakis.
\newblock A mean-field games laboratory for generative modeling.
\newblock \emph{arXiv preprint arXiv:2304.13534}, 2023.

\bibitem[Cheng et~al.(2024{\natexlab{a}})Cheng, Zhang, Yu, and Zhang]{cheng2024particle}
Ziheng Cheng, Shiyue Zhang, Longlin Yu, and Cheng Zhang.
\newblock Particle-based variational inference with generalized wasserstein gradient flow.
\newblock \emph{Advances in Neural Information Processing Systems}, 36, 2024{\natexlab{a}}.

\bibitem[Yao et~al.(2024)Yao, Huang, and Yang]{yao2024minimizing}
Rentian Yao, Linjun Huang, and Yun Yang.
\newblock Minimizing convex functionals over space of probability measures via kl divergence gradient flow.
\newblock In \emph{International Conference on Artificial Intelligence and Statistics}, pages 2530--2538. PMLR, 2024.

\bibitem[Choi et~al.(2024)Choi, Choi, and Kang]{choi2024scalable}
Jaemoo Choi, Jaewoong Choi, and Myungjoo Kang.
\newblock Scalable wasserstein gradient flow for generative modeling through unbalanced optimal transport.
\newblock \emph{arXiv preprint arXiv:2402.05443}, 2024.

\bibitem[Zhu et~al.(2024{\natexlab{c}})Zhu, Wang, Zhang, Zhao, and Qian]{zhu2024neural}
Huminhao Zhu, Fangyikang Wang, Chao Zhang, Hanbin Zhao, and Hui Qian.
\newblock Neural sinkhorn gradient flow.
\newblock \emph{arXiv preprint arXiv:2401.14069}, 2024{\natexlab{c}}.

\bibitem[Vidal et~al.(2023)Vidal, Wu~Fung, Tenorio, Osher, and Nurbekyan]{vidal2023taming}
Alexander Vidal, Samy Wu~Fung, Luis Tenorio, Stanley Osher, and Levon Nurbekyan.
\newblock Taming hyperparameter tuning in continuous normalizing flows using the jko scheme.
\newblock \emph{Scientific reports}, 13\penalty0 (1):\penalty0 4501, 2023.

\bibitem[Cheng et~al.(2024{\natexlab{b}})Cheng, Lu, Tan, and Xie]{cheng2024convergence}
Xiuyuan Cheng, Jianfeng Lu, Yixin Tan, and Yao Xie.
\newblock Convergence of flow-based generative models via proximal gradient descent in wasserstein space.
\newblock \emph{IEEE Transactions on Information Theory}, 2024{\natexlab{b}}.

\bibitem[Xu et~al.(2024)Xu, Cheng, and Xie]{xu2024normalizing}
Chen Xu, Xiuyuan Cheng, and Yao Xie.
\newblock Normalizing flow neural networks by jko scheme.
\newblock \emph{Advances in Neural Information Processing Systems}, 36, 2024.

\bibitem[Xie and Cheng(2025)]{xie2025flow}
Yao Xie and Xiuyuan Cheng.
\newblock Flow-based generative models as iterative algorithms in probability space.
\newblock \emph{arXiv preprint arXiv:2502.13394}, 2025.

\bibitem[Boffi et~al.(2024)Boffi, Albergo, and Vanden-Eijnden]{boffi2024flow}
Nicholas~M Boffi, Michael~S Albergo, and Eric Vanden-Eijnden.
\newblock Flow map matching.
\newblock \emph{arXiv preprint arXiv:2406.07507}, 2024.

\bibitem[Kassraie et~al.(2024)Kassraie, Pooladian, Klein, Thornton, Niles-Weed, and Cuturi]{kassraie2024progressive}
Parnian Kassraie, Aram-Alexandre Pooladian, Michal Klein, James Thornton, Jonathan Niles-Weed, and Marco Cuturi.
\newblock Progressive entropic optimal transport solvers.
\newblock \emph{arXiv preprint arXiv:2406.05061}, 2024.

\bibitem[Caffarel and Claverie(1988{\natexlab{a}})]{caffarel1988development1}
Michel Caffarel and Pierre Claverie.
\newblock Development of a pure diffusion quantum monte carlo method using a full generalized feynman--kac formula. i. formalism.
\newblock \emph{The Journal of chemical physics}, 88\penalty0 (2):\penalty0 1088--1099, 1988{\natexlab{a}}.

\bibitem[Caffarel and Claverie(1988{\natexlab{b}})]{caffarel1988development2}
Michel Caffarel and Pierre Claverie.
\newblock Development of a pure diffusion quantum monte carlo method using a full generalized feynman--kac formula. ii. applications to simple systems.
\newblock \emph{The Journal of chemical physics}, 88\penalty0 (2):\penalty0 1100--1109, 1988{\natexlab{b}}.

\bibitem[Gubernatis et~al.(2016)Gubernatis, Kawashima, and Werner]{gubernatis2016quantum}
James Gubernatis, Naoki Kawashima, and Philipp Werner.
\newblock \emph{Quantum Monte Carlo Methods}.
\newblock Cambridge University Press, 2016.

\bibitem[Becca and Sorella(2017)]{becca2017quantum}
Federico Becca and Sandro Sorella.
\newblock \emph{Quantum Monte Carlo approaches for correlated systems}.
\newblock Cambridge University Press, 2017.

\bibitem[Lu and Wang(2020)]{lu2020full}
Jianfeng Lu and Zhe Wang.
\newblock The full configuration interaction quantum monte carlo method through the lens of inexact power iteration.
\newblock \emph{SIAM Journal on Scientific Computing}, 42\penalty0 (1):\penalty0 B1--B29, 2020.

\bibitem[Kondratyev et~al.(2016)Kondratyev, Monsaingeon, and Vorotnikov]{kondratyev2016new}
Stanislav Kondratyev, L{\'e}onard Monsaingeon, and Dmitry Vorotnikov.
\newblock A new optimal transport distance on the space of finite radon measures.
\newblock \emph{Advances in Differential Equations}, 21\penalty0 (11-12):\penalty0 1117--1164, 2016.

\bibitem[Liero et~al.(2018)Liero, Mielke, and Savar{\'e}]{liero2018optimal}
Matthias Liero, Alexander Mielke, and Giuseppe Savar{\'e}.
\newblock Optimal entropy-transport problems and a new hellinger--kantorovich distance between positive measures.
\newblock \emph{Inventiones mathematicae}, 211\penalty0 (3):\penalty0 969--1117, 2018.

\bibitem[Chizat et~al.(2018)Chizat, Peyr{\'e}, Schmitzer, and Vialard]{chizat2018interpolating}
Lenaic Chizat, Gabriel Peyr{\'e}, Bernhard Schmitzer, and Fran{\c{c}}ois-Xavier Vialard.
\newblock An interpolating distance between optimal transport and fisher--rao metrics.
\newblock \emph{Foundations of Computational Mathematics}, 18:\penalty0 1--44, 2018.

\bibitem[Maurais and Marzouk(2024)]{maurais2024sampling}
Aimee Maurais and Youssef Marzouk.
\newblock Sampling in unit time with kernel fisher-rao flow.
\newblock \emph{arXiv preprint arXiv:2401.03892}, 2024.

\bibitem[Gabri{\'e} et~al.(2022)Gabri{\'e}, Rotskoff, and Vanden-Eijnden]{gabrie2022adaptive}
Marylou Gabri{\'e}, Grant~M Rotskoff, and Eric Vanden-Eijnden.
\newblock Adaptive monte carlo augmented with normalizing flows.
\newblock \emph{Proceedings of the National Academy of Sciences}, 119\penalty0 (10):\penalty0 e2109420119, 2022.

\bibitem[Pathiraja and Wacker(2024)]{pathiraja2024connections}
Sahani Pathiraja and Philipp Wacker.
\newblock Connections between sequential bayesian inference and evolutionary dynamics.
\newblock \emph{arXiv preprint arXiv:2411.16366}, 2024.

\bibitem[Qu et~al.(2024)Qu, Araya-Polo, and Demanet]{qu2024uncertainty}
Luping Qu, Mauricio Araya-Polo, and Laurent Demanet.
\newblock Uncertainty quantification in seismic inversion through integrated importance sampling and ensemble methods.
\newblock \emph{arXiv preprint arXiv:2409.06840}, 2024.

\bibitem[Chen et~al.(2024{\natexlab{e}})Chen, Huang, Huang, Reich, and Stuart]{chen2024efficient}
Yifan Chen, Daniel~Zhengyu Huang, Jiaoyang Huang, Sebastian Reich, and Andrew~M Stuart.
\newblock Efficient, multimodal, and derivative-free bayesian inference with fisher--rao gradient flows.
\newblock \emph{Inverse Problems}, 40\penalty0 (12):\penalty0 125001, 2024{\natexlab{e}}.

\bibitem[Han et~al.(2020)Han, Lu, and Zhou]{han2020solving}
Jiequn Han, Jianfeng Lu, and Mo~Zhou.
\newblock Solving high-dimensional eigenvalue problems using deep neural networks: A diffusion monte carlo like approach.
\newblock \emph{Journal of Computational Physics}, 423:\penalty0 109792, 2020.

\bibitem[Zhang et~al.(2024{\natexlab{c}})Zhang, Chen, Vanden-Eijnden, and Peherstorfer]{zhang2024sequential}
Huan Zhang, Yifan Chen, Eric Vanden-Eijnden, and Benjamin Peherstorfer.
\newblock Sequential-in-time training of nonlinear parametrizations for solving time-dependent partial differential equations.
\newblock \emph{arXiv preprint arXiv:2404.01145}, 2024{\natexlab{c}}.

\bibitem[Neklyudov et~al.(2024)Neklyudov, Nys, Thiede, Carrasquilla, Liu, Welling, and Makhzani]{neklyudov2024wasserstein}
Kirill Neklyudov, Jannes Nys, Luca Thiede, Juan Carrasquilla, Qiang Liu, Max Welling, and Alireza Makhzani.
\newblock Wasserstein quantum monte carlo: a novel approach for solving the quantum many-body schr{\"o}dinger equation.
\newblock \emph{Advances in Neural Information Processing Systems}, 36, 2024.

\bibitem[Chen et~al.(2024{\natexlab{f}})Chen, McCarran, Vizcaino, Soljacic, and Luo]{chen2024teng}
Zhuo Chen, Jacob McCarran, Esteban Vizcaino, Marin Soljacic, and Di~Luo.
\newblock Teng: Time-evolving natural gradient for solving pdes with deep neural nets toward machine precision.
\newblock In \emph{Forty-first International Conference on Machine Learning}, 2024{\natexlab{f}}.

\bibitem[Ren et~al.(2024{\natexlab{b}})Ren, Xiao, Gangwani, Rangi, Rahmanian, Ying, and Sanyal]{ren2024multi}
Yinuo Ren, Tesi Xiao, Tanmay Gangwani, Anshuka Rangi, Holakou Rahmanian, Lexing Ying, and Subhajit Sanyal.
\newblock Multi-objective optimization via wasserstein-fisher-rao gradient flow.
\newblock In \emph{International Conference on Artificial Intelligence and Statistics}, pages 3862--3870. PMLR, 2024{\natexlab{b}}.

\bibitem[M{\"u}ller et~al.(2024)M{\"u}ller, {\c{C}}ayc{\i}, and Mont{\'u}far]{muller2024fisher}
Johannes M{\"u}ller, Semih {\c{C}}ayc{\i}, and Guido Mont{\'u}far.
\newblock Fisher-rao gradient flows of linear programs and state-action natural policy gradients.
\newblock \emph{arXiv preprint arXiv:2403.19448}, 2024.

\bibitem[Domingo-Enrich et~al.(2020)Domingo-Enrich, Jelassi, Mensch, Rotskoff, and Bruna]{domingo2020mean}
Carles Domingo-Enrich, Samy Jelassi, Arthur Mensch, Grant Rotskoff, and Joan Bruna.
\newblock A mean-field analysis of two-player zero-sum games.
\newblock \emph{Advances in neural information processing systems}, 33:\penalty0 20215--20226, 2020.

\bibitem[Ying(2022{\natexlab{b}})]{ying2022lyapunov}
Lexing Ying.
\newblock On lyapunov functions and particle methods for regularized minimax problems.
\newblock \emph{Research in the Mathematical Sciences}, 9\penalty0 (2):\penalty0 18, 2022{\natexlab{b}}.

\bibitem[Lascu et~al.(2024)Lascu, Majka, and Szpruch]{lascu2024fisher}
Razvan-Andrei Lascu, Mateusz~B Majka, and {\L}ukasz Szpruch.
\newblock A fisher-rao gradient flow for entropic mean-field min-max games.
\newblock \emph{arXiv preprint arXiv:2405.15834}, 2024.

\bibitem[Goodman et~al.(1990)Goodman, Hou, and Lowengrub]{goodman1990convergence}
Jonathan Goodman, Thomas~Y Hou, and John Lowengrub.
\newblock Convergence of the point vortex method for the 2-d euler equations.
\newblock \emph{Communications on Pure and Applied Mathematics}, 43\penalty0 (3):\penalty0 415--430, 1990.

\bibitem[Carrillo and Vaes(2021)]{carrillo2021wasserstein}
Jos{\'e}~A Carrillo and Urbain Vaes.
\newblock Wasserstein stability estimates for covariance-preconditioned fokker--planck equations.
\newblock \emph{Nonlinearity}, 34\penalty0 (4):\penalty0 2275, 2021.

\bibitem[Borghi and Pareschi(2025)]{borghi2025wasserstein}
Giacomo Borghi and Lorenzo Pareschi.
\newblock Wasserstein convergence rates for stochastic particle approximation of boltzmann models.
\newblock \emph{arXiv preprint arXiv:2504.10091}, 2025.

\bibitem[Mei et~al.(2019)Mei, Misiakiewicz, and Montanari]{mei2019mean}
Song Mei, Theodor Misiakiewicz, and Andrea Montanari.
\newblock Mean-field theory of two-layers neural networks: dimension-free bounds and kernel limit.
\newblock In \emph{Conference on learning theory}, pages 2388--2464. PMLR, 2019.

\bibitem[Hu et~al.(2021)Hu, Ren, {\v{S}}i{\v{s}}ka, and Szpruch]{hu2021mean}
Kaitong Hu, Zhenjie Ren, David {\v{S}}i{\v{s}}ka, and {\L}ukasz Szpruch.
\newblock Mean-field langevin dynamics and energy landscape of neural networks.
\newblock In \emph{Annales de l'Institut Henri Poincare (B) Probabilites et statistiques}, volume~57, pages 2043--2065. Institut Henri Poincar{\'e}, 2021.

\bibitem[Lu et~al.(2019{\natexlab{b}})Lu, Lu, and Nolen]{lu2019scaling}
Jianfeng Lu, Yulong Lu, and James Nolen.
\newblock Scaling limit of the stein variational gradient descent: The mean field regime.
\newblock \emph{SIAM Journal on Mathematical Analysis}, 51\penalty0 (2):\penalty0 648--671, 2019{\natexlab{b}}.

\bibitem[Kelly et~al.(2014)Kelly, Law, and Stuart]{kelly2014well}
David~TB Kelly, Kody~JH Law, and Andrew~M Stuart.
\newblock Well-posedness and accuracy of the ensemble kalman filter in discrete and continuous time.
\newblock \emph{Nonlinearity}, 27\penalty0 (10):\penalty0 2579, 2014.

\bibitem[Schillings and Stuart(2017)]{schillings2017analysis}
Claudia Schillings and Andrew~M Stuart.
\newblock Analysis of the ensemble kalman filter for inverse problems.
\newblock \emph{SIAM Journal on Numerical Analysis}, 55\penalty0 (3):\penalty0 1264--1290, 2017.

\bibitem[Schillings and Stuart(2018)]{schillings2018convergence}
Claudia Schillings and Andrew~M Stuart.
\newblock Convergence analysis of ensemble kalman inversion: the linear, noisy case.
\newblock \emph{Applicable Analysis}, 97\penalty0 (1):\penalty0 107--123, 2018.

\bibitem[Ding and Li(2021{\natexlab{a}})]{ding2021ensemble1}
Zhiyan Ding and Qin Li.
\newblock Ensemble kalman inversion: mean-field limit and convergence analysis.
\newblock \emph{Statistics and Computing}, 31:\penalty0 1--21, 2021{\natexlab{a}}.

\bibitem[Ding and Li(2021{\natexlab{b}})]{ding2021ensemble2}
Zhiyan Ding and Qin Li.
\newblock Ensemble kalman sampler: Mean-field limit and convergence analysis.
\newblock \emph{SIAM Journal on Mathematical Analysis}, 53\penalty0 (2):\penalty0 1546--1578, 2021{\natexlab{b}}.

\bibitem[Muntean et~al.(2016)Muntean, Rademacher, and Zagaris]{muntean2016macroscopic}
Adrian Muntean, Jens Rademacher, and Antonios Zagaris.
\newblock \emph{Macroscopic and large scale phenomena: coarse graining, mean field limits and ergodicity}.
\newblock Springer, 2016.

\bibitem[Eberle and Marinelli(2006)]{eberle2006convergence}
Andreas Eberle and Carlo Marinelli.
\newblock Convergence of sequential markov chain monte carlo methods: I. nonlinear flow of probability measures.
\newblock \emph{arXiv preprint math/0612074}, 2006.

\bibitem[Schweizer(2012)]{schweizer2012non}
Nikolaus Schweizer.
\newblock Non-asymptotic error bounds for sequential mcmc and stability of feynman-kac propagators.
\newblock \emph{arXiv preprint arXiv:1204.2382}, 2012.

\bibitem[Eberle and Marinelli(2013)]{eberle2013quantitative}
Andreas Eberle and Carlo Marinelli.
\newblock Quantitative approximations of evolving probability measures and sequential markov chain monte carlo methods.
\newblock \emph{Probability Theory and Related Fields}, 155:\penalty0 665--701, 2013.

\bibitem[Beskos et~al.(2014{\natexlab{a}})Beskos, Crisan, Jasra, and Whiteley]{beskos2014error}
Alexandros Beskos, Dan~O Crisan, Ajay Jasra, and Nick Whiteley.
\newblock Error bounds and normalising constants for sequential monte carlo samplers in high dimensions.
\newblock \emph{Advances in Applied Probability}, 46\penalty0 (1):\penalty0 279--306, 2014{\natexlab{a}}.

\bibitem[Beskos et~al.(2014{\natexlab{b}})Beskos, Crisan, and Jasra]{beskos2014stability}
Alexandros Beskos, Dan Crisan, and Ajay Jasra.
\newblock On the stability of sequential monte carlo methods in high dimensions.
\newblock \emph{The Annals of Applied Probability}, 24\penalty0 (4):\penalty0 1396--1445, 2014{\natexlab{b}}.

\bibitem[Beskos et~al.(2016)Beskos, Jasra, Kantas, and Thiery]{beskos2016convergence}
Alexandros Beskos, Ajay Jasra, Nikolas Kantas, and Alexandre Thiery.
\newblock On the convergence of adaptive sequential monte carlo methods.
\newblock \emph{The Annals of Applied Probability}, 26\penalty0 (2):\penalty0 1111--1146, 2016.

\bibitem[Giraud and Del~Moral(2017)]{giraud2017nonasymptotic}
Fran{\c{c}}ois Giraud and Pierre Del~Moral.
\newblock Nonasymptotic analysis of adaptive and annealed feynman--kac particle models.
\newblock \emph{Bernoulli}, 23\penalty0 (1):\penalty0 670--709, 2017.

\end{thebibliography}



\end{document}